\newcommand{\oa}{\overline{a}}
\newcommand{\indic}{\mathbbm{1}}
\newcommand{\pLG}{\overline{g}^0}
\newcommand{\pG}{\overline{g} }
\newcommand{\init}{\mathrm{init}}
\newcommand{\iid}{\overset{\mathrm{i.i.d.}}{\sim} }
\newcommand{\nom}{\nonumber}
\DeclareMathAlphabet{\mathsf}{OT1}{cmss}{m}{n}
\SetMathAlphabet{\mathsf}{bold}{OT1}{cmss}{bx}{n}
\providecommand{\norm}[1]{\|#1\|}
\newcommand{\as}{\mathbf{s}}
\DeclarePairedDelimiter\abs{\lvert}{\rvert}
\DeclarePairedDelimiterX{\infdivx}[2]{(}{)}{%
  #1\;\delimsize\|\;#2%
}
\newcommand\blfootnote[1]{%
  \begingroup
  \renewcommand\thefootnote{}\footnote{#1}%
  \addtocounter{footnote}{-1}%
  \endgroup
}
\begin{document}

\title{\huge \bf{Permutation Invariant Policy Optimization for Mean-Field Multi-Agent Reinforcement Learning: A Principled Approach}}

\author{
Yan Li$^\dagger$,  Lingxiao Wang$^\diamond$, Jiachen Yang$^\dagger$, Ethan Wang$^\dagger$, \\
Zhaoran Wang$^\diamond$, Tuo Zhao$^\dagger$, Hongyuan Zha$^*$\blfootnote{
Yan Li, Jiachen Yang, Ethan Wang, and Tuo Zhao are affiliated with Georgia Institute of Technology; Lingxiao Wang, Zhaoran Wang are affiliated with Northwestern University; Hongyuan Zha is affiliated with The Chinese University of Hong Kong, Shenzhen.  Tuo Zhao is the corresponding author; Email: tourzhao@gatech.edu.}
}
\date{\vspace{-5ex}}

\maketitle

\begin{abstract}
Multi-agent reinforcement learning (MARL) becomes more challenging in the presence of more agents, as the capacity of the joint state and action spaces grows exponentially in the number of agents. To address such a challenge of scale, we identify a class of cooperative MARL problems with permutation invariance, and formulate it as  mean-field Markov decision processes (MDP). To exploit the permutation invariance therein, we propose the mean-field proximal policy optimization (MF-PPO) algorithm, at the core of which is a permutation-invariant actor-critic neural architecture. 
We prove that MF-PPO attains the globally optimal policy at a sublinear rate of convergence. Moreover, its sample complexity is independent of the number of agents. We validate the theoretical advantages of MF-PPO with numerical experiments in the multi-agent particle environment (MPE). In particular, we show that the inductive bias introduced by the permutation-invariant neural architecture enables MF-PPO to outperform existing competitors with a smaller number of model parameters, which is the key to its generalization performance. 
\end{abstract}

%!TEX root = report.tex
\vspace{-0.15in}
\section{Introduction}
\vspace{-0.05in}

Multi-Agent Reinforcement Learning \citep{littman1994markov,zhang2019multi} generalizes Reinforcement Learning \citep{sutton2018reinforcement} to address the sequential decision-making problem of multiple agents maximizing their own long term rewards while interacting with one another in a common environment.
With breakthroughs in deep learning, MARL algorithms equipped with deep neural networks have seen significant empirical successes in various domains,
including simulated autonomous driving \citep{shalev2016safe}, multi-agent robotic control \citep{mataric1997reinforcement,kober2013reinforcement}, and E-sports \citep{vinyals2019grandmaster}.

Despite tremendous successes, MARL is notoriously hard to scale to the many-agent setting, as the size of the state-action space grows exponentially with respect to the number of agents.
This phenomenon is recently described as the curse of many agents \citep{menda2018deep}.
To tackle this challenge,
we focus on \textit{cooperative MARL},
where agents work together to maximize their team reward \citep{panait2005cooperative}.  
We identify and exploit a key property of cooperative MARL with homogeneous agents, namely the \textit{invariance with respect to the permutation of agents}.
Such permutation invariance can be found in many real-world scenarios with homogeneous agents, such as distributed control of multiple autonomous vehicles and team sports \citep{cao2013overview,kalyanakrishnan2006half}, 
and also in the case of heterogeneous groups where invariance holds within each group \citep{liu2019pic}.
More importantly, we further find that permutation invariance has significant practical implications, as the optimal value functions remains invariant when permuting the joint state-action pairs. 
Such an observation strongly advocates a permutation invariant design for learning, which helps reduce the effective search space of the policy/value functions from exponential dependence on the number of agents to polynomial dependence.

Several empirical methods have been proposed to incorporate permutation invariance into solving MARL problems.
  \citet{liu2019pic} implement a permutation invariant critic based on Graph Convolutional Network (GCN) \citep{kipf2016semi}.
 \citet{sunehag2017value} propose value decomposition, which together with parameter sharing, leads to a joint critic network that is permutation invariant over agents.
 While these methods are based on heuristics, we are  the first to provide theoretical principles for introducing permutation invariance as an inductive bias 
 for learning value functions and policy in homogeneous systems. 
In addition, we adopt the DeepSet \citep{zaheer2017deep} architecture, which is well suited for handling homogeneity of  agents, with much simpler operations to induce permutation invariance, and being much more parameter efficient by doing so.

To scale up learning in MARL problem in the presence of a large number, even infinitely many agents, 
mean-field approximation has been explored to directly model the population behavior of the agents. 
\citet{yang2017learning} consider a mean-field game with  deterministic linear state transitions, and show it can be reformulated as a mean-field MDP, with mean-field state residing in a finite-dimensional probability simplex.
 \citet{yang2018mean} take the mean-field approximation over actions, such that the interaction for any given agent and the population is approximated by the interaction between the agent's action and the averaged actions of its neighboring agents. 
 However, the motivation for averaging over local actions remains unclear, and it generally requires a sparse graph over agents. In practice, properly identifying such structure also demands extensive prior knowledge.
  \citet{carmona2019model} motivate a mean-field MDP from the perspective of mean-field control.
 The mean-field state therein lies in a probability simplex and thus continuous in nature. To enable the ensuing Q-learning algorithm, discretization of the joint state-action space is necessary.
\citet{wang2020breaking} motivate a mean-field MDP from permutation invariance, but assume a central controller coordinating the actions of all the agents, and hence is restricted to handling the curse of many agents from the exponential blowup of joint state space. 
 In contrast, our formulation of mean-field approximation allows agents to make their own local actions without resorting to  a centralized controller.

We propose a mean-field Markov decision process motivated from the permutation invariance structure of  cooperative MARL, which can be viewed as a natural limit of finite-agent MDP by taking the number of agents to infinity. 
Such a mean-field MDP generalizes traditional MDP, with each state representing a distribution over the state space of a single agent.
The mean-field MDP provides us a tractable formulation to model MDP with many agents, even  infinite number of agents.
We further propose the Mean-Field Proximal Policy Optimization (MF-PPO) algorithm, at the core of which is a pair of permutation invariant actor and critic neural networks. 
These networks are implemented based on DeepSet \citep{zaheer2017deep}, which uses convolutional type operations to induce permutation invariance over the set of inputs.
We show that with sufficiently many agents, MF-PPO converges to the optimal policy of the mean-field MDP with a sublinear sample complexity independent of the number of agents.
To support our theory,  we conduct numerical experiments on the benchmark multi-agent particle environment (MPE) and show that our proposed method requires a smaller number of model parameters and attains a better performance compared to multiple baselines.

% The rest of the paper is organized as follows. Section \ref{sec:background} defines the mean-field Markov decision process,  motivated from MARL with permutation invariance.
% Section \ref{sec:algo} describes the Mean-Field Neural Proximal Policy Optimization (MF-PPO) algorithm.
% Section \ref{sec:theory} presents the global convergence of MF-PPO algorithm.
% Section \ref{sec:experiments} presents extensive experiments on the benchmark multi-agent particle environment (MPE) demonstrating superior performance of our method over existing competitors.

\textbf{Notations}.
For a set $X$, we denote $\cP(X)$ as the set of distribution on $X$. $\delta_x$ denotes the Dirac measure supported at $x$. 
For $\mathbf{s} = (s_1, \ldots, s_N)$, we use $\mathbf{s}   \iid p$ to denote that each $s_i$ is independently sampled from distribution $p$.
For a function $f:X\to \RR$ and a distribution $\pi \in \cP(X)$, we write $\inner{f}{\pi} = \EE_{a \sim \pi} f(a)$. We write $[m] $ in short for $\{1,\ldots,m\}$.

% %!TEX root = report.tex

\vspace{-0.1in}
\section{Problem Setup}\label{sec:background}
\vspace{-0.05in}
%
%\begin{itemize}
%\item homogeneous system, transition matrix is perm invariance (identify agnostic). focus on shared policy
%\item for each agent, fix policy. permuting other agents does not change Q/V function
%\item from perm invariance, we can formulate V as V(s, ps), depending on policy pi 
%\item s, ps transition depends on pi. characterizing individual and social state. 
%thus motivating the mdp: (s, ps), pi in set of deterministic policy. social reward. 
%\item discussion to mfc mean-field paper
%\end{itemize}

We focus on studying multi-agent systems with cooperative, homogeneous agents, where the agents within the system are of similar nature and hence cannot be distinguished from each other. 
Specifically, we consider a discrete time control problem with $N$ agents, formulated as a Markov decision process $(\cS^N, \cA^N, \PP, \mathrm{r})$.
We define the joint state space $\cS^N$  to be  the Cartesian product of the finite state space $\cS$ for each agent, and similarly define joint action space $\cA^N$.
The homogeneous nature of the system is reflected in the transition kernel $\PP$ and the shared reward $\mathrm{r}$, which satisfies:
\begin{align}
    \mathrm{r}(\mathbf{s}_t, \mathbf{a}_t)  = \mathrm{r} \rbr{\kappa(\mathbf{s}_t), \kappa({\mathbf{a}_t})}, ~~~
    \PP(\mathbf{s}_{t+1}| \mathbf{s}_t,  {\mathbf{a}_t})  = \PP\rbr{\kappa(\mathbf{s}_{t+1})| \kappa(\mathbf{s}_t),  \kappa({\mathbf{a}_t})} \label{eq:system},
\end{align}
for all $ (\sbb_t,  \ab_t) \in  \cS^N \times \cA^N$ and permutation mapping $\kappa(\cdot)$.
In other words, it is the configuration, rather than the identities of the agents, that affects the team reward, and the transition to the next configuration solely depends on the current configuration. See \Cref{fig:system} for a detailed illustration.
 Such permutation invariance can be found in many real-world scenarios, such as distributed control of multiple autonomous vehicles, team sports and social economic systems \citep{zheng2020ai, cao2013overview,kalyanakrishnan2006half}.

%With permutation invariance structure, each agents loses their identity and becomes interchangeable.

\begin{figure}[htb!]
    \centering
    \includegraphics[width=0.5\linewidth]{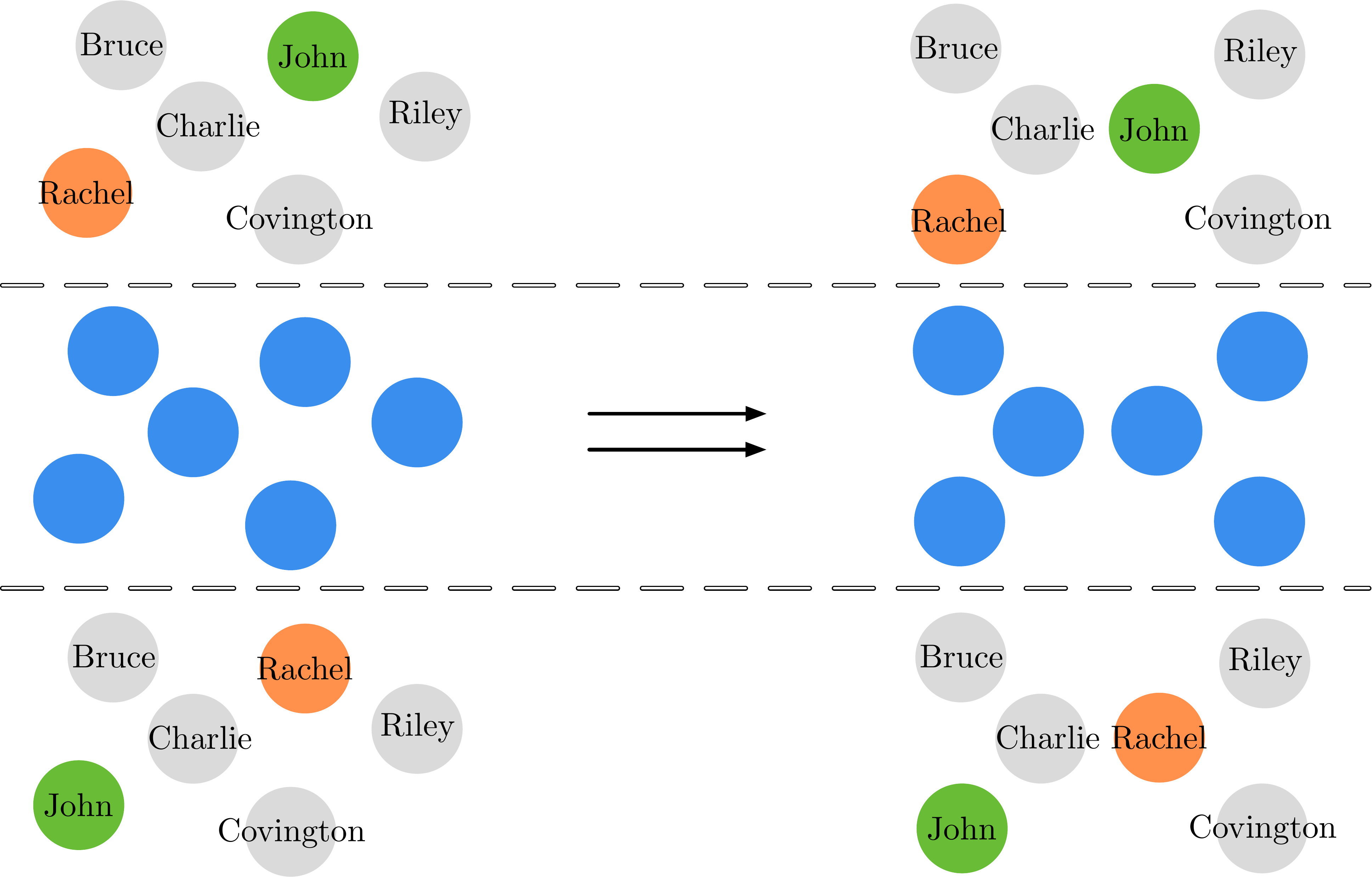}
    \caption{Illustration of permutation invariance. Exchanging identities of the agents (first and third row) does not change the transition of the underlying system (second row).}
    \label{fig:system}
\end{figure}

Our goal is to find the optimal policy $\nu$ within a policy class which maximizes the  expected discounted reward 
$$V^{\nu}(\mathbf{s})\!=\!(1-\gamma) \EE \cbr{\sum_{t=0}^\infty \gamma^t \rmm(\mathbf{s}_t, \ab_t)|\mathbf{s}_0\!=\!\mathbf{s}, \ab_t\!\sim\!\nu(\sbb_t), \forall t\!\geq\!0}.$$
Our first result shows that learning with permutation invariance advocates permutation invariant network design.
%Our first result shows that with permutation invariance, there exists optimal policy that is also permutation invariant; and for any policy that is permutation invariant, so does its value function and state-action value function (Q-function).
\begin{proposition}\label{prop:invariant_policy}
For cooperative MARL satisfying \eqref{eq:system}, there exists an optimal policy $\nu^*$ that is permutation invariant, i.e.,  $\nu^*(\sbb, \ab) = \nu^*\rbr{\kappa(\sbb), \kappa(\ab)}$ for any permutation mapping $\kappa(\cdot)$.
In addition, for any permutation invariant policy $\nu$, the value function $V(\cdot)$ and the state-action value function $Q(\cdot)$ is also permutation invariant
\begin{align*}
V^\nu(\sbb) = V^\nu\rbr{\kappa(\sbb)}, ~~ Q^\nu(\sbb, \ab) = Q^\nu \rbr{\kappa(\sbb), \kappa(\ab)},
\end{align*}
where $Q^\nu(\sbb, \ab) = \EE_{\sbb'} \cbr{r(\sbb, \ab) + \gamma V^\nu(\sbb') }$.
\end{proposition}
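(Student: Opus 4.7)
The plan is to first establish part 2 (invariance of $V^\nu$ and $Q^\nu$ for any permutation-invariant policy) by a direct trajectory-coupling argument, and then bootstrap part 1 (existence of a permutation-invariant optimal policy) from it via value iteration.

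For part 2, I would fix an arbitrary permutation $\kappa$ and a permutation-invariant policy $\nu$, and couple the trajectory starting from $\sbb_0=\sbb$ with the one starting from $\tilde\sbb_0=\kappa(\sbb)$ by inductively setting $\tilde\sbb_t=\kappa(\sbb_t)$ and $\tilde\ab_t=\kappa(\ab_t)$. The three invariance properties in \eqref{eq:system} together with the permutation invariance of $\nu$ guarantee that this coupling is distributionally consistent: $\tilde\ab_t\sim\nu(\tilde\sbb_t)$ follows because $\nu(\kappa(\sbb_t))(\kappa(\ab_t))=\nu(\sbb_t)(\ab_t)$, and $\tilde\sbb_{t+1}\sim\PP(\cdot\mid\tilde\sbb_t,\tilde\ab_t)$ follows because $\PP(\kappa(\sbb_{t+1})\mid\kappa(\sbb_t),\kappa(\ab_t))=\PP(\sbb_{t+1}\mid\sbb_t,\ab_t)$. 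Invariance of $\rmm$ then gives $\rmm(\tilde\sbb_t,\tilde\ab_t)=\rmm(\sbb_t,\ab_t)$ almost surely, so summing $\gamma^t\rmm$ and taking expectations yields $V^\nu(\kappa(\sbb))=V^\nu(\sbb)$. An identical one-step argument, with the first action fixed to $\ab$ rather than sampled, gives the invariance of $Q^\nu$.

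For part 1, I would invoke the standard existence of an optimal value function $V^*$ as the unique fixed point of the Bellman optimality operator $(TV)(\sbb)=\max_{\ab}\bigl\{\rmm(\sbb,\ab)+\gamma\,\EE_{\sbb'\sim\PP(\cdot\mid\sbb,\ab)}V(\sbb')\bigr\}$. The key observation is that $T$ preserves permutation invariance: if $V(\kappa(\sbb))=V(\sbb)$, then invariance of $\rmm$ and $\PP$ implies that the set of values $\{\rmm(\kappa(\sbb),\ab')+\gamma\EE V(\sbb'')\}_{\ab'}$ is just a relabeling of $\{\rmm(\sbb,\ab)+\gamma\EE V(\sbb')\}_\ab$ via $\ab'=\kappa(\ab)$, so the maxima coincide and $(TV)(\kappa(\sbb))=(TV)(\sbb)$. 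Starting value iteration from $V_0\equiv 0$ and passing to the unique fixed point gives permutation invariance of $V^*$, and the same computation transferred to the Bellman optimality operator for $Q$ yields invariance of $Q^*$, i.e.\ $Q^*(\kappa(\sbb),\kappa(\ab))=Q^*(\sbb,\ab)$.

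It remains to exhibit a permutation-invariant selector from $Q^*$. I would define $\nu^*(\sbb)$ to be the uniform distribution over $\arg\max_{\ab}Q^*(\sbb,\ab)$. Because $Q^*$ is permutation invariant, $\ab\in\arg\max_{\ab}Q^*(\sbb,\ab)$ if and only if $\kappa(\ab)\in\arg\max_{\ab}Q^*(\kappa(\sbb),\ab)$, so $\kappa$ maps the argmax set bijectively onto itself after applying $\kappa$ to the state, hence the uniform distributions match and $\nu^*(\kappa(\sbb),\kappa(\ab))=\nu^*(\sbb,\ab)$. Standard MDP theory ensures that any selector from $\arg\max Q^*$ is optimal, completing part 1. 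The only subtle point I anticipate is this last symmetric-selector step: an arbitrary measurable selector need not be permutation invariant, so the uniform-over-argmax construction (or equivalently averaging any selector over the symmetric group) is what makes the argument go through cleanly.
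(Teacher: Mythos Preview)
Your proposal is correct and follows the same overall skeleton as the paper---show $Q^*$ is permutation invariant via a Bellman fixed-point argument and extract an invariant greedy policy; show $V^\nu$ and $Q^\nu$ are invariant by a dynamic-programming argument---but the execution differs in two ways worth noting. First, for Part~2 the paper argues by induction on the $k$-step value functions $V^\nu_k$ and passes to the limit (and separately uses uniqueness of the Bellman evaluation equation for $Q^\nu$), whereas your trajectory-coupling construction is a cleaner one-shot argument that handles $V^\nu$ and $Q^\nu$ uniformly. Second, for Part~1 the paper shows that $Q^*(\kappa(\sbb),\kappa(\ab))$ also solves the Bellman optimality equation and invokes uniqueness directly (rather than iterating from $V_0\equiv 0$), then simply writes $\nu^*(\sbb)=\argmax_{\ab} Q^*(\sbb,\ab)$ without addressing possible non-uniqueness of the argmax; your uniform-over-argmax selector (equivalently, averaging any greedy selector over the symmetric group) is the right way to close that gap and is a genuine gain in rigor over the paper's treatment.
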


Proposition \ref{prop:invariant_policy} has an important implication for architecture design, as it states that it suffices to search within the permutation invariant policy class and value function class. To the best of our knowledge, this is the first theoretical justification of permutation invariant network design for learning with homogeneous agents.

We focus on the factorized policy class with parameter sharing scheme, where each agent makes its own decision given local state. 
Specifically, the joint policy $\nu$ can be factorized as $\nu(\ab|\sbb) = \prod_{i=1}^N \mu(a_i|s_i)$, where $\mu(\cdot)$ denotes the shared local mapping.
Such a policy class is widely adopted in the celebrated centralized training -- decentralized execution paradigm \citep{lowe2017multi},  due to its light overhead in the deployment phase and favorable performances.
However, directly learning such factorized policy remains challenging, as  each agent needs to estimate its state-action value function, denoted as $Q^\nu(\sbb, \ab)$. The search space during learning is $(|\cS| \times |\cA|)^N$, which scales exponentially with respect to the number of agents. The large search space poses as a significant roadblock for efficient learning, and was recently coined as the curse of many agents \citep{menda2018deep}.

To address the curse of many agents, we exploit the homogeneity of the system and take mean-field approximation. We begin by taking the perspective of agent $i$, which is arbitrarily chosen from the $N$ agents. 
We denote the state of such agent by $s$ and the states of the rest of the agents by $\sbb_{r}$.
One can verify that when permuting the state of all the other agents, the value function remains unchanged; additionally, we can further characterize the value function as a function of  the local state, and the empirical state distribution over the rest of agents.

\begin{proposition}\label{prop:val_func}
For any permutation mapping $\kappa(\cdot)$,
the value function satisfies 
$
  V^{\nu}(s, \sbb_r)  = V^{\nu} (s, \kappa(\sbb_r)).
$
Additionally, there exists $g_\nu$ such that:
\begin{align*}
 V^{\nu}(s, \sbb_r) = g_\nu(s, \hat{\pmm}_{\sbb_r}),
\end{align*}
where $ \hat{\pmm}_{\sbb_r} = \frac{1}{N} \sum_{s \in \sbb_r} \delta_{s}$ is the empirical state distribution over the rest of the agents. 
\end{proposition}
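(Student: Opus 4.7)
The plan is to derive both assertions from Proposition~\ref{prop:invariant_policy} applied to the factorized policy $\nu(\ab|\sbb) = \prod_{i=1}^N \mu(a_i|s_i)$ with parameter sharing. First I would verify that this policy is itself permutation invariant: for any permutation $\kappa$ of the agent indices,
\begin{align*}
\nu\rbr{\kappa(\ab) \mid \kappa(\sbb)} = \prod_{i=1}^N \mu\rbr{a_{\kappa(i)} \mid s_{\kappa(i)}} = \prod_{i=1}^N \mu(a_i | s_i) = \nu(\ab | \sbb),
\end{align*}
so that Proposition~\ref{prop:invariant_policy} applies and yields $V^\nu(\sbb) = V^\nu\rbr{\kappa(\sbb)}$ for every $\sbb \in \cS^N$ and every $\kappa$.

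For the first identity in the statement, I would parametrize the joint state as $\sbb = (s, \sbb_r) \in \cS \times \cS^{N-1}$ with the distinguished agent occupying coordinate one. Given an arbitrary permutation $\kappa$ of the index set $\{2, \ldots, N\}$, extend it to a permutation $\tilde\kappa$ of $\{1, \ldots, N\}$ by declaring $\tilde\kappa(1) = 1$. Then $\tilde\kappa(s, \sbb_r) = (s, \kappa(\sbb_r))$, so applying the full permutation invariance of $V^\nu$ immediately gives $V^\nu(s, \sbb_r) = V^\nu\rbr{s, \kappa(\sbb_r)}$, which is the desired invariance over the rest-of-agents indices.

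For the factorization assertion, I would use the combinatorial fact that over the finite state space $\cS$, two tuples $\sbb_r, \sbb_r' \in \cS^{N-1}$ satisfy $\hat\pmm_{\sbb_r} = \hat\pmm_{\sbb_r'}$ if and only if one is a permutation of the other; in other words, the empirical measure is a complete invariant for the action of the symmetric group on $\cS^{N-1}$. Combining this with the first assertion, $V^\nu(s, \cdot)$ is constant on each fiber of the map $\sbb_r \mapsto \hat\pmm_{\sbb_r}$, and therefore descends to a well-defined function $g_\nu : \cS \times \cP(\cS) \to \RR$ via the assignment $g_\nu(s, \hat\pmm_{\sbb_r}) \defeq V^\nu(s, \sbb_r)$.

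The only substantive step is the verification that the factorized, parameter-shared policy lies within the scope of Proposition~\ref{prop:invariant_policy}; the remainder is bookkeeping that translates between the tuple representation $\sbb_r$ and the empirical-measure representation $\hat\pmm_{\sbb_r}$. Because $\cS$ is finite no measure-theoretic subtleties arise, so I do not anticipate any genuine obstacle.
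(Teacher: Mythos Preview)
Your argument is correct. For the first assertion, both you and the paper reduce to the machinery of Proposition~\ref{prop:invariant_policy}; you do so by explicitly checking that the parameter-shared factorized policy is permutation invariant and then invoking the proposition, whereas the paper simply says the proof uses ``the exact same ingredients'' as that of Proposition~\ref{prop:invariant_policy}. These are the same idea.

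For the second assertion you take a genuinely different and more elementary route. The paper appeals to a noise-outsourcing representation theorem (Theorem~11 of \citet{bloem2019probabilistic}): it asserts the existence of $h_\nu(s,\eta,\hat\pmm_{\sbb_r})$ with $\eta\sim\mathrm{Unif}[0,1]$ reproducing the value almost surely, and then integrates out $\eta$ to obtain $g_\nu$. Your argument instead exploits directly that the empirical measure is a complete invariant for the symmetric-group action on $\cS^{N-1}$, so any permutation-invariant function automatically factors through it. Your approach is cleaner and entirely self-contained for the finite state space at hand; the paper's approach, while heavier, would continue to apply if $\cS$ were an arbitrary measurable space where the combinatorial ``same multiset $\Leftrightarrow$ permutation'' identification is less immediate.
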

For a system with a large number of agents (e.g., financial markets, social networks), the empirical state distribution can be seen as the concrete realization of  the underlying distribution of the population.
Motivated from this observation and Proposition \ref{prop:val_func}, we formulate the following mean-field MDP that can be seen as the limit of finite-agent MDP in the presence of infinitely many homogeneous agents. 

\begin{definition}[mean-field MDP]
The mean-field MDP consists of elements of the following: state $(s, \mathrm{d}_\cS) \in \overline{\cS} \subseteq \cS \times \cP(\cS)$; action $\overline{a} \in \overline{\cA} \subseteq \cA^{\cS}$; 
reward $\rmm(s, \mathrm{d}_\cS, \overline{a})$; transition kernel $\PP(s' ,\mathrm{d}_\cS' | s, \mathrm{d}_\cS, \overline{a}$).
\end{definition}
The defined  mean-field MDP has an intimate connection with our previously discussed finite-agent MDP: here $(s, \mathrm{d}_\cS)$ represents the joint state viewed from the individual agent with state $s$; 
each agent   makes a local decision using the shared local mapping $\oa:\cS \to \cA$  and receive a local reward, whose cumulative effect  is captured by the team reward $\rmm(s, \mathrm{d}_\cS, \overline{a})$; finally, the joint state transits to $(s' ,\mathrm{d}_\cS')$ according to kernel $\PP(s' ,\mathrm{d}_\cS' | s, \mathrm{d}_\cS, \overline{a})$. 
We can clearly see that the mean-field MDP has a step-to-step correspondence with the finite-agent MDP with homogeneous agents, thus making it a natural generalization in the presence of large number of agents.  

Our goal is to learn efficiently a policy $\pi$ whose expected discounted reward is maximized. To facilitate ensuing discussions, we define the value function as
\begin{align*}
    V^{\pi}(s, \mathrm{d}_\cS ) = (1-\gamma) \EE \cbr{\sum_{t=0}^\infty \gamma^t \rmm(s_t, \mathrm{d}_{\cS,t} , \overline{a}_t)}, 
    \end{align*}
   where $(s_0, \mathrm{d}_{\cS,0}) = (s, \mathrm{d}_\cS)  , \overline{a}_t \sim \pi(s_t, \mathrm{d}_{\cS,t}), \forall t\geq 0$, and Q-function as 
   \begin{align*}
      Q^{\pi}(s, \mathrm{d}_\cS , \overline{a}) = (1-\gamma) \EE \cbr{\sum_{t=0}^\infty \gamma^t \rmm(s_t, \mathrm{d}_{\cS,t}, \overline{a}_t )},
\end{align*}
where $(s_0, \mathrm{d}_{\cS,0}) = (s, \mathrm{d}_\cS)  , \overline{a}_0 = \oa,  \overline{a}_t \sim \pi(s_t, \mathrm{d}_{\cS,t})$.
The optimal policy is then denoted by $$\pi^* \in \argmax  V^{\pi}(s, \mathrm{d}_\cS ).$$ 

Despite the intuitive analogy to finite-agent MDP, solving the mean-field MDP posses some unique challenges. 
In addition to the unknown transition kernel and reward, the mean-field MDP takes a distribution as its state, which we do not have complete information of during training.
In the following section, we propose our mean-field Proximal Policy Optimization (MF-PPO) algorithm that,  with a careful architecture design, can solve such mean-field MDP in a model-free fashion efficiently.

%!TEX root = report.tex
%\begin{itemize}
%\item motivation, model-free, distribution as state, unknown 
%\item information, assume access to random samples (applications: societal surveys, random sampling of large drones) 
%\item algorithm principles, use sample to construct empirical embedding 
%\item architecture insights, permutation invariance (deepset)
%\item algorithm details: evaluation and optimization 
%\end{itemize}

\vspace{-0.1in}
\section{Mean-field Proximal Policy Optimization}\label{sec:algo}
\vspace{-0.05in}

Our algorithm falls into the category of the actor-critic learning paradigm, consisting of alternating iterations of policy evaluation and improvement.
The unique features of MF-PPO lie in the facts: 
%(1) it does not require exact knowledge on the population state, instead, only finite number of samples from the population state are needed. This is particularly friendly to applications in real world setting, where for system with large agent population, random subsampling is often applied due to budget constraints (e.g. societal surveys for tax reform). [might be removed];  (directly use agents as sample, more direct) 
(1) it exploits permutation invariance of the system,  reducing search space of the actor/critic networks drastically and enables much more efficient learning; 
(2) it can handle a varying number of agents. For simplicity of exposition, we consider a fixed number of agents here.

Throughout the rest of the section, we assume that for any joint state  $(s, \mathrm{d}_{\cS}) \in \cS \times \cP(\cS)$, the agent has access to $N$ $\mathrm{i.i.d.}$ samples $\{s_i\}_{i=1}^N$ from $\mathrm{d}_{\cS}$. We denote concatenation of such samples as $\sbb \in \cS^N$ and write $\mathbf{s} \iid \mathrm{d}_{\cS}$.

MF-PPO maintains a pair of actor (denoted by $F^A$) and critic networks (denoted by $F^Q$), and uses the actor network to induce an energy-based policy $\pi(\overline{a}|s, \mathrm{d}_{\cS})$.
Specifically, given state $(s, \mathrm{d}_{\cS})$, the actor network induces a distribution on the action set $\overline{\cA}$ according to 
$$\pi(\overline{a}|s, \mathrm{d}_{\cS}) \propto \exp \cbr{\tau^{-1} F^A(s, \mathrm{d}_{\cS}, \overline{a})},$$
where $\tau$ denotes the temperature parameter.
We use $\pi \propto \exp \cbr{F^A}$ to denote the dependency of the policy on  the energy function.

\noindent \textbf{$\bullet$} \textbf{Permutation-invariant Actor and Critic}.
We adopt a permutation invariant design of the actor and critic network.
Specifically, given individual state $s \in \cS$ and sampled states $\sbb   \iid \mathrm{d}_{\cS}$, 
the actor (resp. critic) network $F^A$ (resp. $F^Q$) satisfies $$F^A(s, \sbb, \overline{a}) = F^A(s, \kappa(\sbb), \overline{a})$$ for any permutation mapping $\kappa$.
With permutation invariance, the search space of the actor/critic network polynomially depends on the number of agents $N$.
\begin{proposition}\label{prop:search_space}
The search space of a permutation invariance actor (critic) network is at the order of $$\rbr{\sum_{k=1}^{\min\{|\cS|, N\}} \binom{N-1}{k-1} \binom{|\cS|}{k}}|\cS| |\overline{\cA}|;$$ Additionally, if $|\cS| < N$, then the search space depends on $N$ at the order of $N^{|\cS|}$.
\end{proposition}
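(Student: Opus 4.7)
The plan is to identify the effective search space of a permutation-invariant $F^A$ with a product of finite sets and then execute a combinatorial count. The key observation is that the invariance condition $F^A(s,\sbb,\oa)=F^A(s,\kappa(\sbb),\oa)$ for every permutation $\kappa$ forces $F^A$ to be constant on each orbit of the symmetric group $S_N$ acting on the $\sbb$-slot, while it remains free on $s\in\cS$ and $\oa\in\overline{\cA}$. A standard orbit count identifies these $S_N$-orbits with multisets of size $N$ drawn from $\cS$ (equivalently, empirical distributions $\tfrac{1}{N}\sum_{i=1}^N \delta_{s_i}$). Hence the effective domain has cardinality $|\cS|\cdot M_N \cdot |\overline{\cA}|$, where $M_N$ denotes the number of size-$N$ multisets over $\cS$, and the task reduces to evaluating $M_N$.

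To evaluate $M_N$ I would stratify multisets by the size $k$ of their support. For each $k\in\{1,\ldots,\min\{|\cS|,N\}\}$, first choose which $k$ elements of $\cS$ appear, in $\binom{|\cS|}{k}$ ways; then assign positive multiplicities summing to $N$ to those $k$ elements, which by a stars-and-bars (equivalently, compositions of $N$ into $k$ positive parts) argument has $\binom{N-1}{k-1}$ solutions. Summing $\binom{|\cS|}{k}\binom{N-1}{k-1}$ over admissible $k$ yields the bracketed factor, and multiplying by $|\cS|\cdot|\overline{\cA}|$ recovers the full formula in the proposition.

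For the polynomial-in-$N$ consequence under $|\cS|<N$, I would note that for each fixed $k$ the quantity $\binom{N-1}{k-1}$ is a polynomial in $N$ of degree $k-1$; the sum is therefore dominated by its $k=|\cS|$ term $\binom{N-1}{|\cS|-1}=\Theta(N^{|\cS|-1})$, giving the claimed polynomial scaling in $N$, in sharp contrast to the $|\cS|^N$ growth of the raw joint-state space without permutation invariance. The argument presents no serious technical obstacle: it is pure combinatorics once the invariance is translated into constancy on $S_N$-orbits, with only a routine check of the boundary cases $k=1$ and $k=\min\{|\cS|,N\}$ needed in the stars-and-bars step.
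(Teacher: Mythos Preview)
Your approach is essentially the same as the paper's: both reduce to counting $S_N$-orbits (equivalently, permutation-equivalence classes, i.e., multisets of size $N$ over $\cS$) and then multiply by $|\cS|\cdot|\overline{\cA}|$; you supply the explicit stars-and-bars stratification by support size that the paper leaves as a one-line ``one can verify.'' Your observation that the dominant term is $\binom{N-1}{|\cS|-1}=\Theta(N^{|\cS|-1})$ is in fact sharper than the proposition's stated $N^{|\cS|}$, which should be read as a loose upper bound; the paper's own proof does not address this second claim explicitly.
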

 Compared to architectures without permutation invariance, whose search space depends on $N$ at the order of $(|\cS| |\cA|)^N $, we can clearly see the search space of MF-PPO can be exponentially smaller.
 
Motivated by the characterization of the permutation invariant set function in \citet{zaheer2017deep},  the actor/critic network in MF-PPO takes the form of Deep Sets architecture, i.e.,  $$F^A(s, \sbb, \oa)  = h\rbr{\sum_{s'\in \sbb}\phi(s,s',  \overline{a})/N}.$$ 
Both  networks first aggregate local information by averaging over the output of a shared sub-network among agents, before feeding the aggregated information into a subsequent network.
See Figure \ref{fig:deepsets} for a detailed illustration. 
Effectively, by the average pooling layer and the preceding parameter sharing scheme, the network can keep its output unchanged when permuting the ordering of agents.
Compared to a Graph Convolutional Neural Network \citep{kipf2016semi}, the averaging operations are well suited for homogeneous agents and more parameter-efficient.

Naturally, the actor network when given the joint state-action pair $(s, \mathrm{d}_{\cS}, \oa) $ is given by $$F^A(s, \mathrm{d}_{\cS}, \oa)  = h\rbr{ \EE_{s' \sim \mathrm{d}_{\cS} }\phi(s,s',  \overline{a})}.$$
We assume $F^A$ is parameterized by a neural network with parameters $\alpha \in \RR^D$, which is to be learned during training.
 We let the function class of all possible actor networks be denoted by $\cF^A$.
 This same architecture design applies to the critic network, with learnable parameters denoted by $\theta \in \RR^D$ and the function class denoted by $\cF^Q$.
MF-PPO then consists of successive iterations of policy evaluation and policy improvement steps described in detail below. 
%At the $k$-the iteration, MF-PPO first perform policy evaluation step, which updates the critic network while holding actor network fixed;  then performs policy optimization step, which updates the policy network given evaluated critic network.

\begin{figure}[htb!]
    \centering
    \includegraphics[width=0.55\linewidth]{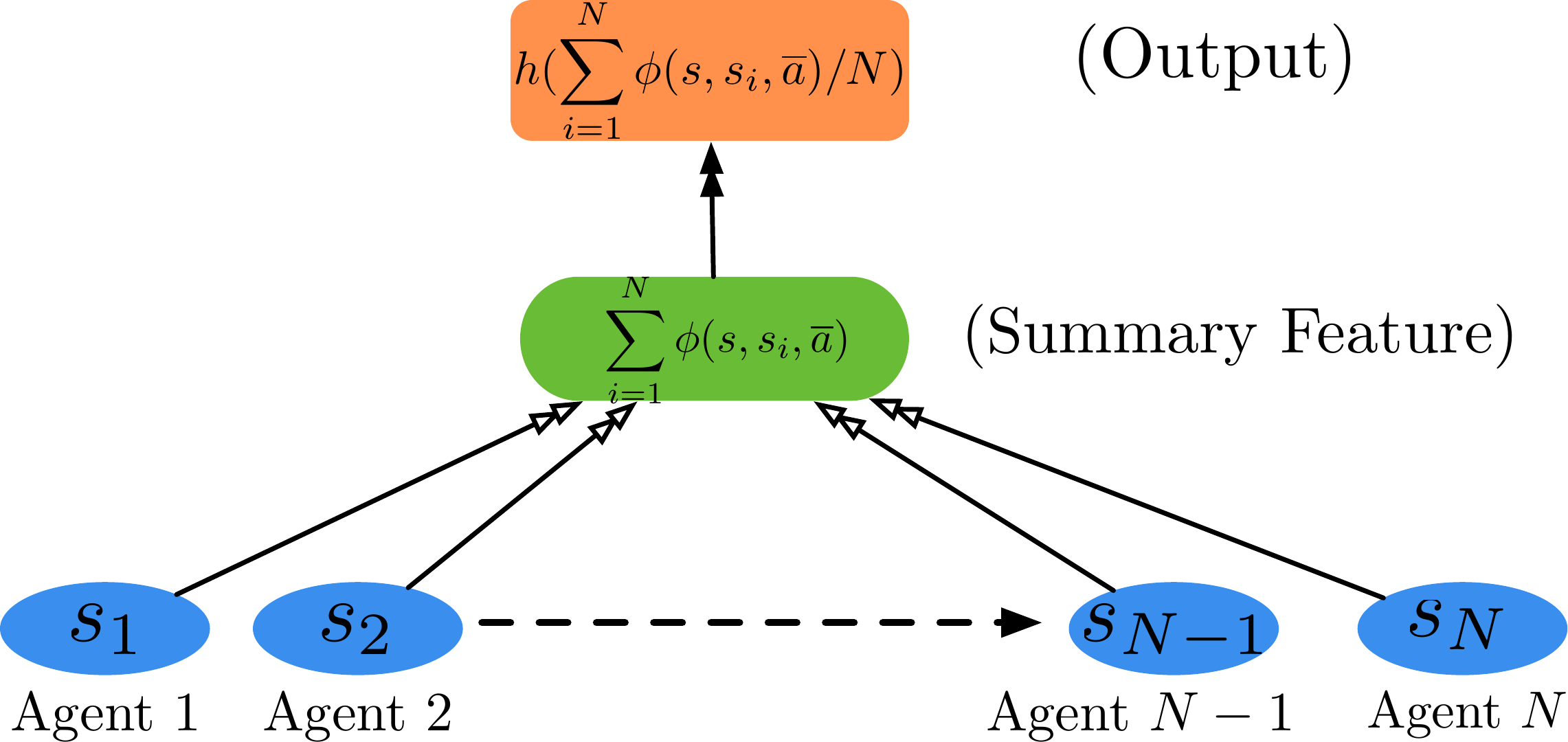}
    \caption{Illustration of a DeepSet parameterized critic network.}
    \label{fig:deepsets}
\end{figure}

 \textbf{$\bullet$}  \textbf{Policy Evaluation}.
At the $k$-th iteration of MF-PPO, we first update the critic network $F^Q$ by minimizing the mean squared Bellman error while holding the actor network $F^{A,k}$ fixed.
We denote the policy induced by the actor network as $\pi_k$, the stationary state distribution of policy $\pi_{k}$ as $\nu_k$, and the stationary state-action distribution as $\sigma_k =  \nu_k \pi_k$. Thus, we follow the update
\begin{align}\label{update:evaluation}
\theta_{k}  =\argmin_{\theta \in \mathbb{B}(\theta_0, R_\theta) } \EE_{\sigma_k} &\bigg\{F^Q_\theta(s, \mathrm{d}_{\cS},\oa) 
- \sbr{\cT^{\pi_k} F^Q_{\theta}}(s, \mathrm{d}_{\cS},\oa) \bigg\}^2,
\end{align}
where $ \mathbb{B}(\theta_0, R_\theta)$ denotes the Euclidean ball with radius $R_\theta$ centered at the initialized parameter $\theta_0$, and the Bellman evaluation operator $\cT^\pi$ is given by 
\begin{align*}
\cT^\pi \cF^Q_\theta(s, \mathrm{d}_{\cS},\oa) = 
\EE \cbr{ (1-\gamma) r(s, \mathrm{d}_{\cS},\oa) +  \gamma F^Q_\theta(s', \mathrm{d}'_{\cS},\oa') },
\end{align*}
where $\mathrm{d}'_{\cS} \sim \PP(\cdot|\mathrm{d}_{\cS}, \oa), \oa' \sim \pi(\cdot | s', \mathrm{d}'_{\cS})$.
We solve \eqref{update:evaluation} by T-step temporal-difference (TD) update and  output $\theta_k = \theta(T)$.
At the $t$-th iteration of the TD update, 
\begin{align*}
\theta(t+1/2)  &  = \theta(t) -  \eta \bigg\{ F^Q_{\theta(t)}(s, \sbb, \oa) - (1-\gamma) r(s, \mathrm{d}_{\cS},\oa) 
-   \gamma F^Q_{\theta(t)}(s', \sbb', \oa')\bigg\} \nabla_\theta F^Q_{\theta(t)}(s, \sbb, \oa) , \\
\theta(t+1) & = \Pi_{ \mathbb{B}(\theta_0, R_\theta) } \rbr{ \theta(t+1/2)  }  ,
\end{align*}
where we sample
\begin{align*}
(s, \mathrm{d}_{\cS}, \oa) \sim \sigma_k,~(s', \mathrm{d}'_{\cS}) \sim \PP(\cdot| s, \mathrm{d}_{\cS}, \oa), ~~~
\oa' \sim \pi_k(\cdot|s', \mathrm{d}'_{\cS}),~\mathbf{s} \iid \mathrm{d}_{\cS},~\mathbf{s}' \iid \mathrm{d}'_{\cS}.
\end{align*}
We use $\Pi_{\mathrm{X}}(\cdot)$ to denote the orthogonal projection onto set $\mathrm{X}$,  and $\eta$ to denote the step size.
The details are summarized in Algorithm \ref{alg:evaluation} in Appendix \ref{append:alg}.

 \textbf{$\bullet$}  \textbf{Policy Improvement}. Following the policy evaluation step, MF-PPO updates its policy  by updating the policy network $F^A$, which is the energy function associated with the policy.
We update the policy network by
\begin{align*}
 \pi_{k+1}  = \argmax_{\substack{\pi \propto \exp \cbr{F^{A,k+1}}, F^{A,k+1} \in \cF^A}} \EE_{\nu_k} \bigg\{  \inner{F^{Q}_{\theta_k}(s, \mathrm{d}_{\cS}, \cdot)}{\pi(\cdot | s, \mathrm{d}_{\cS})} - \upsilon_k \mathrm{KL} \rbr{ \pi(\cdot|s, \mathrm{d}_{\cS}) \big\| \pi_{k}(\cdot|s, \mathrm{d}_{\cS})} \bigg\}.
\end{align*}
The update rule intuitively reads as increasing the probability for choosing action $\oa$ if it yields a higher value for critic network $F^Q(s, \mathrm{d}_{\cS}, \oa)$, which can be viewed as a softened version of policy iteration \citep{bertsekas2011approximate}. 
Additionally, by controlling the proximity parameter $\upsilon_k$, we can control the softness of the update, with $\upsilon \to 0$ yielding the vanilla policy iteration.
Moreover, without constraint  of $F^{A,k+1} \in \cF^A$, such an update would have a nice closed form expression, and  $\pi_{k+1}$ itself is another energy-based policy.

\begin{proposition}\label{update:poly_regression}
Let $\pi_{k} \propto \exp \rbr{\tau_k^{-1} F^{A}_{\alpha_k}}$ denote the energy-based policy, then the update
\begin{align*}
\overline{\pi}_{k+1}  = \argmax_\pi  \EE_{\nu_k} \bigg\{  \inner{F^{Q}_{\theta_k}(s, \mathrm{d}_{\cS}, \cdot)}{\pi(\cdot | s, \mathrm{d}_{\cS})}  - \upsilon_k \mathrm{KL} \rbr{ \pi(\cdot|s, \mathrm{d}_{\cS}) \big\| \pi_{k}(\cdot|s, \mathrm{d}_{\cS})} \bigg\}
\end{align*}
yields $\overline{\pi}_{k+1} \propto \exp \{ \upsilon_k^{-1}F^{Q}_{\theta_k}+ \tau_k^{-1} F^{A}_{\alpha_k} \}$.
\end{proposition}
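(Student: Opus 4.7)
The plan is to observe that the objective decouples across the outer expectation over $\nu_k$, so for each fixed joint state $(s,\mathrm{d}_{\cS})$ we can solve the inner distributional optimization independently. Writing $\pi_{\mathrm{loc}}(\cdot) := \pi(\cdot\mid s,\mathrm{d}_{\cS})$ and similarly for $\pi_{k,\mathrm{loc}}$, and fixing $(s,\mathrm{d}_{\cS})$, the problem reduces to the constrained program
\begin{align*}
\max_{\pi_{\mathrm{loc}}\in\cP(\overline{\cA})}\,\bigl\langle F^{Q}_{\theta_k}(s,\mathrm{d}_{\cS},\cdot),\,\pi_{\mathrm{loc}}\bigr\rangle - \upsilon_k\,\mathrm{KL}\bigl(\pi_{\mathrm{loc}}\,\|\,\pi_{k,\mathrm{loc}}\bigr).
\end{align*}

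First I would form the Lagrangian with a single multiplier $\lambda$ enforcing $\sum_{\oa}\pi_{\mathrm{loc}}(\oa)=1$ (treating $\overline{\cA}$ as discrete for clarity; the continuous case is analogous with a density against the reference measure). Taking the functional derivative with respect to $\pi_{\mathrm{loc}}(\oa)$ of
\[
F^{Q}_{\theta_k}(s,\mathrm{d}_{\cS},\oa) - \upsilon_k\bigl(\log\pi_{\mathrm{loc}}(\oa)-\log\pi_{k,\mathrm{loc}}(\oa)+1\bigr) - \lambda,
\]
setting it to zero, and solving yields
\[
\pi_{\mathrm{loc}}(\oa)\;\propto\;\pi_{k,\mathrm{loc}}(\oa)\,\exp\!\bigl\{\upsilon_k^{-1}F^{Q}_{\theta_k}(s,\mathrm{d}_{\cS},\oa)\bigr\}.
\]
Strict concavity of the objective (the linear term in $\pi_{\mathrm{loc}}$ plus the strictly concave negative KL) ensures this stationary point is the unique global maximizer on the simplex, so no further second-order check is needed. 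I would also note that this argument is pointwise in $(s,\mathrm{d}_{\cS})$, so the same proportionality describes $\overline{\pi}_{k+1}(\oa\mid s,\mathrm{d}_{\cS})$ for every joint state.

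To finish, I would substitute the assumed form $\pi_{k}(\oa\mid s,\mathrm{d}_{\cS})\propto\exp\!\bigl\{\tau_k^{-1}F^{A}_{\alpha_k}(s,\mathrm{d}_{\cS},\oa)\bigr\}$ into the display above. The proportionality constant depends only on $(s,\mathrm{d}_{\cS})$ (it is the normalizing factor of $\pi_k$ at that state, times the inner partition function), so it is absorbed into the "$\propto$" and we obtain
\[
\overline{\pi}_{k+1}(\oa\mid s,\mathrm{d}_{\cS})\;\propto\;\exp\!\bigl\{\upsilon_k^{-1}F^{Q}_{\theta_k}(s,\mathrm{d}_{\cS},\oa)+\tau_k^{-1}F^{A}_{\alpha_k}(s,\mathrm{d}_{\cS},\oa)\bigr\},
\]
which is exactly the claim.

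There is no real obstacle here; the result is a textbook KL-regularized linear maximization and the only subtlety is bookkeeping of the state-dependent normalizers when passing from $\pi_k$'s closed form to the product form. If one wished to be fully rigorous about $\overline{\cA}$ being a general (possibly continuous) action set, the Lagrangian argument should be replaced by a variational characterization — compute the Gibbs-type candidate $\tilde\pi(\oa\mid s,\mathrm{d}_{\cS})\propto\pi_k(\oa\mid s,\mathrm{d}_{\cS})\exp\{\upsilon_k^{-1}F^{Q}_{\theta_k}\}$, rewrite the objective at any competitor $\pi$ as $-\upsilon_k\,\mathrm{KL}(\pi\|\tilde\pi)$ plus a $\pi$-independent log-partition term, and invoke nonnegativity of KL to conclude that $\tilde\pi$ is the maximizer. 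This variant avoids any discreteness assumption on $\overline{\cA}$ but produces the same closed form.
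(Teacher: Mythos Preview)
Your proposal is correct and follows essentially the same approach as the paper: both decouple the expectation over $\nu_k$ into a pointwise-in-state optimization over the simplex, set the first-order condition to zero to obtain $\overline{\pi}_{k+1}(\oa\mid s,\mathrm{d}_\cS)\propto\pi_k(\oa\mid s,\mathrm{d}_\cS)\exp\{\upsilon_k^{-1}F^{Q}_{\theta_k}\}$, and then substitute the energy form of $\pi_k$. Your treatment is slightly more careful (explicit Lagrange multiplier for the simplex constraint, concavity to certify global optimality, and the optional variational argument for general $\overline{\cA}$), but the underlying idea is identical.
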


To take into account that the representable function of actor network resides in $\cF_A$, we update the policy by projecting the energy function of  $\overline{\pi}_{k+1}$ back to $\cF_A$.
Specifically, by denoting $$\pi_{k+1} \propto \exp \{ \tau_{k+1}^{-1} F^{A}_{\alpha_{k+1}} \},$$ we recover the next actor network $F^{A}_{\alpha_{k+1}}$ (i.e., energy) by 
performing the following regression task
\begin{align}\label{update:improvement}
\alpha^*_{k+1}  = \argmin_{\alpha \in  \mathbb{B}(R_\alpha, \alpha_0)  } \EE_{\tilde{\sigma}_k} \bigg\{ F^A_{ \alpha}(s, \mathrm{d}_{\cS}, \oa)  - \tau_{k+1} \sbr{ \upsilon_k^{-1} F^{Q}_{\theta_k}(s, \mathrm{d}_{\cS}, \oa) + \tau_k^{-1} F^A_{ \alpha_k}(s, \mathrm{d}_{\cS}, \oa)} \bigg\}^2,
\end{align}
where $\tilde{\sigma}_k = \nu_k \pi_0$.
We approximately solve  \eqref{update:improvement} via T-step stochastic gradient descent (SGD), and output  $$\alpha_{k+1} = \alpha(T) \approx \alpha_{k+1}^*.$$ 
%We solve \eqref{update:improvement} via successive iterations of SGD, which consists of $T$ iterations and output $\alpha_{k+1} = \alpha(T) \approx \alpha_{k+1}^*$.
At the $t$-th iteration of SGD,
\begin{align*}
 \alpha(t+1/2) & = \alpha(t) - \eta \nabla_\alpha F^A_{ \alpha(t)}(s, \sbb, \oa) \bigg\{F^A_{\alpha(t)}(s, \mathbf{s}, \oa) - \tau_{k+1} \rbr{ \upsilon_k^{-1} F^{Q}_{\theta_k}(s, \sbb, \oa) + \tau_k^{-1} F^A_{\alpha_k}(s, \sbb, \oa)}\bigg\} ,  \\
  \alpha(t+1) & = \Pi_{ \mathbb{B}(R_\alpha, \alpha_0)}  \rbr{ \alpha(t+1/2) } ,
\end{align*}
where we sample $(s, \mathrm{d}_{\cS}, \oa) \sim \tilde{\sigma}_k$, and $\sbb \iid \mathrm{d}_{\cS}$, and $\eta$ is the step size.
The details are summarized in Algorithm \ref{alg:improvement} of Appendix \ref{append:alg}.
Finally, we present the complete MF-PPO in Algorithm \ref{alg:m_ppo}.

\begin{algorithm}[htb!]
    \caption{Mean-Field Proximal Policy Optimization }
    \label{alg:m_ppo}
    \begin{algorithmic}
    \STATE{\textbf{Require}: Mean-field MDP $(\overline{\cS}, \overline{\cA}, \PP,  r)$, discount factor $\gamma$; number of outer iterations $K$, number of inner updates $T$;
    policy update parameter $\upsilon$, step size $\eta$,  projection radius $R_\alpha,R_\theta$.}
    \STATE{\textbf{Initialize:} $\tau_0 \leftarrow 1$, $F^{A,0} \leftarrow 0$, $\pi_{0} \propto \exp \{\tau_0^{-1} F^{A,0}\}$} (uniform policy).
    \FOR{$k=0, \ldots, K-1$}
    \STATE{Set temperature parameter $\tau_{k+1} \leftarrow \upsilon \sqrt{K}/(k+1)$, and proximity parameter $\upsilon_k \leftarrow \upsilon \sqrt{K}$  }
%    \STATE{ Sample $\{(\mathbf{s}_t, a_t, a_t^0, \mathbf{s}_t', a_t')\}$ with $\mathbf{s}_t \iid p_t, \mathbf{s}_t' \iid p_t'$, $(p_t, a_t) \sim \sigma_k$, $a_t^0 \sim \pi_0(\cdot|p_t)$, $p_t' \sim \PP(\cdot|p_t, a_t)$, and $a_t' \sim \pi_k(\cdot|p_t')$} 
    \STATE{Solve \eqref{update:evaluation} to update the critic network $F^Q_{\theta_k}$, using TD update (Algorithm \ref{alg:evaluation})}
%    in \eqref{eval:grad} and  \eqref{eval:proj} 
    \STATE{Solve \eqref{update:improvement} to update the actor network for $F^A_{\alpha_{k+1}}$, using SGD update (Algorithm \ref{alg:improvement})}
%     in \eqref{improvement:grad} and \eqref{improvement:proj
    \STATE{Update policy: $\pi_{k+1} \propto \exp\{\tau_{k+1}^{-1} F^A_{\alpha_{k+1}}\}$}
    \ENDFOR
    \end{algorithmic}
\end{algorithm}

% %!TEX root = report.tex
\vspace{-0.1in}
\section{Global Convergence of MF-PPO}\label{sec:theory}
\vspace{-0.05in}
%\begin{itemize}
%\item we talk about general nn in mf-ppo, now focus on two-layer paramterization, our analysis can be extended to multi-layer 
%\item width, initialization, learnable parameters being first layer,  radius ball, 
%\item key assumption: representation power, good enough to capture all q-function
%\item technical assumption: bounded reward, regularity of stationary distribution 
%\end{itemize}

We present the global convergence of MF-PPO algorithm for two-layer permutation-invariant parameterization of actor and critic network. 
We remark that our analysis can be extended to multi-layer permutation-invariant networks, and we present the two-layer case here for simplicity of exposition. 
Specifically, the actor and critic networks take the form
\begin{align*}
F^A_{\alpha}(s, \sbb, \oa) &= \frac{1}{\sqrt{m}N} \sum_{j=1}^{m_A} \sum_{s' \in \sbb} u_j \sigma \rbr{\alpha_j^\top (s, s', \oa)}, \\
F^Q_{\theta}(s, \sbb, \oa) & = \frac{1}{\sqrt{m}N} \sum_{j=1}^{m_Q} \sum_{s' \in \sbb} v_j \sigma \rbr{\theta_j^\top (s, s', \oa)},
\end{align*} 
where $m_A$ (resp. $m_Q$) is the width of the actor (resp. critic) network, and $\sigma(x) = \max \{x, 0\}$ denotes the ReLU activation. 
We randomly initialize $u_j$ (resp. $v_j$) and first layer weights  $\alpha_0 = [\alpha_{0,1}^\top, \ldots, \alpha_{0, m_A}^\top]^\top \in \RR^{d \cdot m_A}$ (resp. $\theta_0 \in \RR^{d \cdot m_Q}$) by
\begin{align*}
u_j \sim \mathrm{Unif} \cbr{-1, +1}, \alpha_{0, j} \sim \cN(0, \mathrm{I}_d/d), ~~\forall j \in [m].
\end{align*}
%with $u_j \sim \mathrm{Unif} \cbr{-1, +1}$, which is then fixed during training.
%First layer weights $\alpha_0 = [\alpha_{0,1}, \ldots, \alpha_{0, m_A}] \in \RR^{d \times m_A}$ (resp. $\theta_0 \in \RR^{d \times m_Q}$) is initialized with  $\beta_{0, j} \sim \cN(0, \mathrm{I}_d/d)$ for all $j \in [m]$.
For the ease of analysis, we take $m = m_A=m_Q$ and share the initialization of $\alpha_0$ and $\theta_0$ (resp. $u_0$ and $v_0$).
Additionally, we keep $u_j$'s fixed during training, 
and $\alpha$ (resp. $\theta$) within the ball $\BB(\alpha_0, R_A)$ (resp. $\BB(\theta_0, R_Q)$) throughout training.
We define the following function class characterizing the class of previously defined actor/critic network for large network width.
\begin{definition}
Given $R_\beta>0$, define the function class over domain $\cS \times \cS \times \cA$ by
\begin{align*}
\cF_{\beta, m} \!=\!\bigg\{f_\beta(\cdot)  \big \vert  ~f_\beta(s,s', \oa)\!=\!\frac{1}{\sqrt{m}} \sum_{j=1}^m v_{j}  \mathbbm{1}\{\beta_{0,j}^\top (s,s', \oa)\!>\!0\}  \beta_j^\top (s,s', \oa), \norm{\beta- \beta_0}_2 \leq R_\beta\bigg\},
\end{align*}
where $v_j, \beta_{0,j}$ are random weights sampled according to
\begin{align*}
v_j \sim \mathrm{Unif} \cbr{-1, +1}, \beta_{0, j} \sim \cN(0, \mathrm{I}_d/d), ~~\forall j \in [m].
\end{align*}
$\cF_{\beta, m}$ also induces the function class over $\cS \times \cP(\cS) \times \cA$ given by
\begin{align*}
\cF^\cP_{\beta, m}\!=\!\cbr{F(\cdot)~\big \vert~F(s, \mathrm{d}_\cS, \oa)\!=\!\EE_{s' \sim \mathrm{d}_\cS} f_\beta(s, s', \oa), f_\beta\!\in\!\cF_{\beta, m} }.
\end{align*}
\end{definition}
It is well known that functions within $\cF_{\beta,m} $ approximate functions within the reproducing kernel Hilbert space associated with kernel 
$\mathrm{K}(x,y) = \EE_{z \sim \mathcal{N}(0, I_d/d) }\cbr{ \mathbbm{1}(z^\top x>0, z^\top y >0) }$ 
for large network width $m$ 
\citep{jacot2018neural,chizat2018note, cai2019neural, arora2019fine}, whose RKHS norm is bounded by $R_\beta$.
For large $R_\beta$ and $m$, $\cF_{\beta,m} $ represents a rich class of functions.
Additionally, functions within $\cF^\cP_{\beta, m}$ can be viewed as the mean-embedding of the joint state-action pair onto the RKHS space \citep{muandet2016kernel, song2009hilbert, smola2007hilbert}. Below, we make one technical assumption which states  that $\cF^\cP_{\theta, m_Q}$ is rich enough to represent the Q-function of all the policies within our policy class. 

\begin{assumption}\label{assump:richness}
For any policy $\pi$ induced by $F_A \in \cF^A$, we have $Q^\pi \in \cF^\cP_{\theta, m_Q}$.
\end{assumption}

We define two mild conditions stating (1) boundedness of reward, and (2) regularity of the joint state and the policy.
\begin{assumption}\label{assump:mild}
Reward function $r(\cdot) \leq \overline{r}$ for some $\overline{r} >0$.
Additionally, for any $(s, \mathrm{d}_\cS) \in \overline{\cS}$ and any $\pi$,   there exists $c>0$ such that 
$$\EE_{s' \sim \mathrm{d}_\cS, \oa \sim \pi(\cdot|s,\mathrm{d}_\cS)}  \cbr{\indic \rbr{\abs{z^\top (s, s', \oa)} \leq t } }\leq c \cdot \frac{ t}{\norm{z}_2}$$ for any $z \in \RR^d$ and $t > 0$.
\end{assumption}
We remark that the condition in Assumption \ref{assump:mild} holds whenever $\cS$ is bounded and $\mathrm{d}_\cS \times \pi$ has an upper bounded density.

We measure the progress of  MF-PPO in Algorithm \ref{alg:m_ppo} using the expected total reward
\begin{align}\label{objective}
\cL (\pi) = \EE_{\nu^*} \sbr{V^\pi(s, \mathrm{d}_\cS)} = \EE_{\nu^*} \cbr{ \inner{Q^\pi(\cdot|s, \mathrm{d}_\cS)}{\pi(\cdot|s, \mathrm{d}_\cS)}},
\end{align}
where $\nu^*$ is the stationary state distribution of the optimal policy $\pi^*$.
We also denote $\sigma^*$ as the stationary state-action distribution induced by $\pi^*$.
Note that we have: $$\cL(\pi^*) = \EE_{\nu^*} \sbr{ V^{\pi^*}(s, \mathrm{d}_\cS)} \geq \EE_{\nu^*} \sbr{ V^{\pi}(s, \mathrm{d}_\cS)} = \cL(\pi)$$ for any policy $\pi$.
Our main results are presented in the following theorem, showing that $\cL(\pi_k)$ converges to $\cL(\pi^*)$ at a sub-linear rate.

\begin{theorem}[Global Convergence of MF-PPO]\label{thrm:main}
Under Assumptions \ref{assump:richness} and  \ref{assump:mild}, 
the  policies $\{\pi_k\}_{k=1}^K$ generated by Algorithm \ref{alg:m_ppo} satisify
\begin{align*}
\min_{0 \leq k \leq K} \{ \cL(\pi^*) - \cL(\pi_k) \}  \leq \frac{\upsilon\rbr{ \log|\overline{\cA}| + \sum_{k=1}^{K-1} (\varepsilon_k + \varepsilon_k')}}{(1-\gamma) \sqrt{K}}   + \frac{M}{(1-\gamma) \upsilon \sqrt{K}},
\end{align*}
where $M = \EE_{\nu^*} \cbr{ \max_{\oa \in \overline{\cA}} (F^Q_{\theta_0}(s, s',\oa))^2} + 2R_A^2$,  
$\varepsilon_k$ and $\varepsilon_k'$ are defined in Lemma \ref{lemma:err_propagation}.
In particular, suppose at each iteration of MF-PPO, we observe $$N =   \Omega\rbr{K^3 R_A^4 + K R_Q^4 }$$ agents, and the actor/critic network satisfies $$m_A = \Omega \rbr{K^6 R_A^{10} + K^4 R_A^{10} \abs{\overline{\cA}}^2}, m_Q = \Omega \rbr{K^2 R_Q^{10}},$$ and
 $$T = \Omega\rbr{K^3 R_A^4 + K R_Q^4 },$$ then we have
\begin{align*}
\min_{0 \leq k \leq K} \{ \cL(\pi^*) - \cL(\pi_k) \} \leq \frac{\upsilon^2\rbr{ \log|\overline{\cA}| + \cO(1) } + M}{(1-\gamma)\upsilon \sqrt{K}}.
\end{align*}
\end{theorem}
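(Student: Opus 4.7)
The plan is to cast MF-PPO as an inexact mirror descent / NPG scheme in the policy space with KL as the Bregman divergence, and then control the approximation errors incurred by (i) fitting the critic by TD, (ii) fitting the actor by regression, and (iii) replacing the population state $\mathrm{d}_\cS$ by $N$ i.i.d.\ samples $\sbb$. Concretely, I interpret the policy improvement step as the proximal update $\pi_{k+1}\!\propto\!\pi_k\exp(\upsilon_k^{-1} F^Q_{\theta_k})$ (this is exactly the content of \Cref{update:poly_regression}), and write $\pi_{k+1}$ as a perturbed version of the idealized update $\overline{\pi}_{k+1}$, with perturbation caused by the projection onto $\cF^A$ and by the use of $F^Q_{\theta_k}$ instead of $Q^{\pi_k}$.

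The first substantive step is a performance-difference/three-point identity for KL: for any reference policy $\pi^*$ and any state $(s,\mathrm{d}_\cS)$,
\begin{align*}
\langle Q^{\pi_k}(s,\mathrm{d}_\cS,\cdot),\pi^*-\pi_{k+1}\rangle
 =\ \upsilon_k\bigl(\mathrm{KL}(\pi^*\|\pi_k)-\mathrm{KL}(\pi^*\|\pi_{k+1})-\mathrm{KL}(\pi_{k+1}\|\pi_k)\bigr)+\Delta_k(s,\mathrm{d}_\cS),
\end{align*}
where $\Delta_k$ absorbs (a) the critic error $F^Q_{\theta_k}-Q^{\pi_k}$, and (b) the actor projection error $F^{A,k+1}-\tau_{k+1}(\upsilon_k^{-1}F^Q_{\theta_k}+\tau_k^{-1}F^{A,k})$. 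Taking expectation under $\nu^*$ and combining with the standard performance-difference lemma $\cL(\pi^*)-\cL(\pi_{k+1})=\tfrac{1}{1-\gamma}\EE_{\sigma^*}\langle Q^{\pi_{k+1}},\pi^*-\pi_{k+1}\rangle$, then telescoping over $k=0,\ldots,K-1$ gives
\begin{align*}
\min_{0\le k\le K}\{\cL(\pi^*)-\cL(\pi_k)\}\ \lesssim\ \frac{\upsilon_k\,\mathrm{KL}(\pi^*\|\pi_0)+\sum_{k}(\varepsilon_k+\varepsilon_k')+\sum_k\upsilon_k\,\mathrm{KL}(\pi_{k+1}\|\pi_k)}{(1-\gamma)K}.
\end{align*}
With the schedule $\upsilon_k=\upsilon\sqrt{K}$ and $\tau_{k+1}=\upsilon\sqrt{K}/(k+1)$, the KL term to the uniform initial policy produces the $\log|\overline\cA|/\sqrt{K}$ contribution, while the sum $\sum_k\upsilon_k\mathrm{KL}(\pi_{k+1}\|\pi_k)$ is controlled by the bound $M$ on $\EE_{\nu^*}\max_{\oa}(F^Q_{\theta_0})^2+2R_A^2$ using $\log(\pi_{k+1}/\pi_k)=\tau_{k+1}^{-1}F^A_{\alpha_{k+1}}-\tau_k^{-1}F^A_{\alpha_k}$ and the ball constraint $\|\alpha-\alpha_0\|\le R_A$. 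This is exactly the shape of the first inequality in the theorem.

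The main obstacle is producing quantitative bounds on $\varepsilon_k,\varepsilon_k'$ and plugging in the prescribed $N,m,T$. Here I would invoke a (forthcoming) \Cref{lemma:err_propagation} in three pieces: (1) TD convergence in the NTK / overparameterized regime for the critic, which at width $m_Q$ and $T$ steps yields a bias-plus-variance term of order $R_Q^2/\sqrt{T}+R_Q^3/m_Q^{1/4}$; (2) SGD for the regression in the policy-improvement step, giving a similar bound with $R_A$ in place of $R_Q$ plus an amplification by $\upsilon_k^{-1}$ that is tamed by the choice of $\upsilon_k$; (3) the \emph{finite-agent} error from replacing $\mathrm{d}_\cS$ by the empirical measure from $\sbb\iid\mathrm{d}_\cS$, which by a standard uniform concentration of $F^A,F^Q$ over the DeepSet class (combined with a covering argument inside $\BB(\alpha_0,R_A),\BB(\theta_0,R_Q)$ and \Cref{assump:mild} to control the ReLU indicator perturbation) contributes $O(R_\beta^2/\sqrt{N})$. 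Summing these across $k$ and equating each source of error with $1/\sqrt{K}$ dictates the stated sample/width/iteration complexities $N=\Omega(K^3R_A^4+KR_Q^4)$, $m_A=\Omega(K^6R_A^{10}+K^4R_A^{10}|\overline\cA|^2)$, $m_Q=\Omega(K^2R_Q^{10})$, and $T=\Omega(K^3R_A^4+KR_Q^4)$, yielding $\sum_k(\varepsilon_k+\varepsilon_k')=O(1)$ and hence the second displayed bound.

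The hard part, as expected, is item (3): unlike the standard single-agent NPG analyses (e.g., the PPO analysis of Liu et al.\ or Cai et al.), here both networks take a \emph{distributional} input approximated by an empirical average over $N$ agents, and the error interacts non-trivially with the ReLU activation inside the inner sub-network $\phi$. Controlling this uniformly over the ball $\BB(\alpha_0,R_A)$ (respectively $\BB(\theta_0,R_Q)$) while keeping the dependence on $N$ polynomial—rather than exponential in the number of agents—is what makes the DeepSet parameterization essential and is where \Cref{assump:mild} (regularity of $\mathrm{d}_\cS\times\pi$) is used to smooth out the indicator perturbations.
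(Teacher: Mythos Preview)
Your high-level plan---cast MF-PPO as approximate mirror descent with KL as Bregman divergence, invoke the three-point lemma, telescope, and control the per-step evaluation/improvement/sampling errors---is exactly the paper's route, and your description of the $\epsilon_k,\epsilon_{k+1}'$ bounds and the resulting choices of $N,T,m_A,m_Q$ is on target. Two points, however, need correction.

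First, your pairing of the three-point identity with the performance-difference lemma is off by one policy index, and this is precisely where the $M/(\upsilon\sqrt{K})$ term comes from. The three-point inequality for the proximal update at step $k$ involves $Q^{\pi_k}$ (or its estimate $F^Q_{\theta_k}$), not $Q^{\pi_{k+1}}$, so you cannot directly plug it into $\cL(\pi^*)-\cL(\pi_{k+1})=\tfrac{1}{1-\gamma}\EE_{\nu^*}\langle Q^{\pi_{k+1}},\pi^*-\pi_{k+1}\rangle$. The paper instead writes $\langle Q^{\pi_k},\pi^*-\pi_k\rangle=\langle Q^{\pi_k},\pi^*-\pi_{k+1}\rangle+\langle Q^{\pi_k},\pi_{k+1}-\pi_k\rangle$, applies the performance-difference lemma at $\pi_k$ (so the $Q^{\pi_k}$ matches), bounds the first piece by the three-point lemma, and then combines the \emph{negative} term $-\upsilon_k\,\mathrm{KL}(\pi_{k+1}\|\pi_k)$ with the leftover $\langle Q^{\pi_k},\pi_{k+1}-\pi_k\rangle$ via Pinsker and $-ax^2+bx\le b^2/(4a)$; in the inexact version this yields exactly $\upsilon_k^{-1}M$. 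In your telescoped display the $\sum_k\upsilon_k\,\mathrm{KL}(\pi_{k+1}\|\pi_k)$ appears with the wrong sign (it is $\le 0$ on the right of the three-point inequality), so ``controlling it by $M$'' is not the correct mechanism.

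Second, your item (3) over-engineers the finite-agent analysis. Because each TD/SGD step draws fresh samples $\sbb\iid\mathrm{d}_\cS,\ \sbb'\iid\mathrm{d}_\cS'$ that are conditionally independent of the current iterate $\beta(t)$, no uniform concentration or covering over $\BB(\alpha_0,R_A)$ or $\BB(\theta_0,R_Q)$ is needed. The paper simply bounds the conditional mean-squared error $\EE_{\sbb,\sbb'}\|\hat g_\beta-g_\beta\|_2^2=\cO(R_\beta^2/N)$ for each fixed $\beta$, then plugs this into the one-step recursion $\EE\|\beta(t{+}1)-\beta_*\|_2^2\le\|\beta(t)-\beta_*\|_2^2-\ldots$; this is where Assumption~\ref{assump:mild} is used (to control the ReLU indicator perturbation in the linearization error), not in a covering step. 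Your covering route would also work but is heavier than what is required.
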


Theorem \ref{thrm:main} states that, given sufficiently many agents and a large enough actor/critic network, MF-PPO attains global optimality at a sublinear rate.
Our result shows that when solving the mean-field MDP, having more agents serves as a blessing instead of a curse. 
In addition, as will be demonstrated in our proof sketch, there exists an inherent phase transition depending on the number of agents, where the final optimality gap is dominated by statistical error for a small number of agents (first phase);  and by optimization error for a large number of agents (second phase). 

The complete proof of Theorem \ref{thrm:main} takes a careful analysis on the error from policy evaluation \eqref{update:evaluation}  and the improvement step \eqref{update:improvement}. The analysis on the outer iterations of MF-PPO can be overviewed as approximate mirror descent, which needs to take into account how the evaluation and improvement error interacts.
Intuitively, the tuple $(\varepsilon_k, \varepsilon'_k)$ describes the the effect of policy update when using approximate policy evaluation and policy improvement, and will be further clarified in the ensuing discussion.
We present here the skeleton of our proof, and defer the technical detail to the appendix.

%\begin{proof}[Proof Sketch]
\textbf{\textit{Proof Sketch.}}
We first establish the global convergence of the policy evaluation \eqref{update:evaluation} and improvement step \eqref{update:improvement}.
%Note that both objective \eqref{update:evaluation} and \eqref{update:improvement} are non-convex with respect to the network parameters.
\begin{lemma}[Policy Evaluation]\label{thrm:poly_eval}
Under the same assumptions in Theorem \ref{thrm:main}, 
let $Q^{\pi_k}$ denote the Q-function of policy $\pi_k$, let $$
\epsilon_k = \EE_{\init, \sigma_k} \cbr{F_{\overline{\theta}(T)}^Q (\cdot) - Q^{\pi_k} (\cdot)}^2$$  denote policy evaluation error, where $\overline{\theta}(T)$ is the output of Algorithm \ref{alg:evaluation}, we have
\begin{align*}
\epsilon_k  = \cO \rbr{\frac{R_Q^2}{T^{1/2}} + \frac{R_Q^{5/2}}{m_Q^{1/4}} + \frac{R_Q^2}{N^{1/2}}   + \frac{R_Q^3}{m_Q^{1/2}} }.
\end{align*}
\end{lemma}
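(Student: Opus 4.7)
The plan is to decompose the squared error $\epsilon_k$ into four contributions and control each of them separately: (i) an optimization error from running only $T$ TD iterations, (ii) a neural-network linearization error from treating the two-layer DeepSet critic as a linear predictor in its NTK features around $\theta_0$, (iii) a Monte-Carlo error from replacing the mean embedding $\EE_{s' \sim \mathrm{d}_\cS}[\cdot]$ by the $N$-sample average $\tfrac{1}{N}\sum_{s' \in \sbb}[\cdot]$ in every TD update, and (iv) a function-class approximation error, which is absorbed by Assumption~\ref{assump:richness} since $Q^{\pi_k} \in \cF^\cP_{\theta, m_Q}$. This recovers the four terms in the stated rate as contributions (i)--(iii) together with a linearization residual, and extends the neural-TD analysis of Cai--Yang--Wang to the mean-field / DeepSet setting.

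First I would introduce the local linearization
\[
\hat F^Q_\theta(s, \sbb, \oa) = \frac{1}{\sqrt{m_Q}\, N} \sum_{j=1}^{m_Q} \sum_{s' \in \sbb} v_j\, \indic\{\theta_{0,j}^\top (s, s', \oa) > 0\}\, \theta_j^\top (s, s', \oa),
\]
and its population counterpart obtained by replacing the $N$-sample average over $s' \in \sbb$ with $\EE_{s' \sim \mathrm{d}_\cS}$. Using Assumption~\ref{assump:mild}, a standard anti-concentration argument on the Gaussian pre-activations $\theta_{0,j}^\top(s,s',\oa)$ implies that the fraction of neurons whose activation pattern can flip as $\theta$ varies over $\mathbb{B}(\theta_0, R_Q)$ is $\cO(R_Q/\sqrt{d})$; combined with Lipschitzness of the ReLU and boundedness of the feature $(s,s',\oa)$, this yields $\sup_{\theta \in \mathbb{B}(\theta_0, R_Q)} \EE_{\sigma_k}\rbr{F^Q_\theta - \hat F^Q_\theta}^2 = \cO(R_Q^3/\sqrt{m_Q})$, which accounts directly for the $R_Q^3/m_Q^{1/2}$ term. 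A parallel bound on the gradient difference $\nabla_\theta F^Q_\theta - \nabla_\theta \hat F^Q_\theta$, propagated through the TD recursion with an extra $R_Q^{1/2}$ factor coming from the cross term against $\|\hat F^Q_\theta - Q^{\pi_k}\|_{\sigma_k}$, produces the $R_Q^{5/2}/m_Q^{1/4}$ term.

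Next I would analyze the TD dynamics in the linearized regime. Because $\hat F^Q_\theta$ is linear in $\theta$, and because the projected Bellman operator restricted to $\cF^\cP_{\theta, m_Q}$ is a $\gamma$-contraction in the $\sigma_k$-weighted norm, the projected semi-gradient update becomes a stochastic linear recursion with contractive expected drift. Adapting the finite-sample analysis of linear TD (Bhandari--Russo--Singal, Cai--Yang--Wang), the Polyak-averaged iterate $\overline{\theta}(T)$ satisfies $\EE\|\hat F^Q_{\overline\theta(T)} - \hat Q^{\pi_k}_*\|_{\sigma_k}^2 = \cO(R_Q^2/\sqrt T)$, where $\hat Q^{\pi_k}_*$ is the fixed point of the projected Bellman operator in the linearized class. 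The Monte-Carlo piece enters through the variance of each update: conditional on $\theta \in \mathbb{B}(\theta_0, R_Q)$, the bounded-difference inequality applied to the sum over $N$ i.i.d.\ samples gives that the additional noise from using $\sbb, \sbb' \iid \mathrm{d}_\cS$ has variance $\cO(R_Q^2/N)$, contributing the $R_Q^2/\sqrt N$ term in the final $L_2$ bound. Combining with the approximation residual $\|\hat Q^{\pi_k}_* - Q^{\pi_k}\|_{\sigma_k}^2 = \cO(R_Q^3/\sqrt{m_Q})$ from Assumption~\ref{assump:richness} plus the linearization bound and collecting terms yields the claimed rate.

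The main obstacle will be reconciling the sample-based TD features with the population-level contraction: the stochastic gradient $\nabla_\theta F^Q_{\theta(t)}(s, \sbb, \oa)$ itself depends on the random sample $\sbb$, so the expected TD drift is not exactly a population contraction under the sample law. To close this gap I plan to show that the empirical feature Gram matrix $\EE_{\sigma_k}[\nabla_\theta F^Q_\theta \nabla_\theta F^Q_\theta{}^\top]$ concentrates in operator norm around its $\sbb \to \mathrm{d}_\cS$ population analogue with deviation $\cO(R_Q^2/\sqrt N)$ by matrix Bernstein. This transports the linear-TD contraction to the sample updates at the cost of an additive $\cO(R_Q^2/\sqrt N)$ slack, which is compatible with the target rate and explains the $N = \Omega(R_Q^4)$ scaling required by Theorem~\ref{thrm:main}.
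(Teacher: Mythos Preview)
Your plan is essentially the paper's proof: linearize the DeepSet critic at $\theta_0$, run a linear-TD analysis on the linearized class, control the linearization error via a sign-flip / anti-concentration count under Assumption~\ref{assump:mild}, and use Assumption~\ref{assump:richness} to identify the fixed point of the projected linearized Bellman operator with $Q^{\pi_k}$. The four error terms arise from exactly the sources you name.

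The only substantive difference is in the Monte-Carlo step. You propose matrix Bernstein on the empirical feature Gram to transport the population-level contraction to the sample-based updates. The paper takes a more elementary route: it bounds the raw second moment $\EE_{\sbb,\sbb'}\|\hat g_\theta - g_\theta\|_2^2 = \cO(R_Q^2/N)$ directly (a variance bound on the $N$-sample average, with Cauchy--Schwarz handling the bias that comes from $\sbb$ appearing in both the residual $\hat\delta$ and the gradient $\nabla_\theta F^Q_\theta(s,\sbb,\oa)$), and then injects this as additive slack $\eta R_Q\cdot\cO(R_Q/\sqrt N)$ into the one-step recursion $\EE\|\theta(t+1)-\theta_*\|_2^2 \leq \|\theta(t)-\theta_*\|_2^2 - (\text{progress}) + (\text{errors})$. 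This is dimension-free and avoids any spectral argument; your Gram-matrix route would work but introduces a logarithmic factor and still needs a separate treatment of the bias in $\hat g_\theta$, which operator-norm concentration of the Gram alone does not give you. One minor correction to your accounting: under Assumption~\ref{assump:richness} the approximation residual $\|\hat Q^{\pi_k}_* - Q^{\pi_k}\|_{\sigma_k}$ is exactly zero, not $\cO(R_Q^3/\sqrt{m_Q})$, since $Q^{\pi_k}\in\cF^\cP_{\theta,m_Q}$ is itself a fixed point of the projected operator; the $R_Q^3/m_Q^{1/2}$ term comes solely from the network linearization error $\EE|F^Q_{\overline\theta(T)} - F^0_{\overline\theta(T)}|^2$ applied at the output.
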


\begin{lemma}[Policy Improvement]\label{thrm:poly_improv}
Under the same assumptions in Theorem \ref{thrm:main}, let $$\epsilon_{k+1}'\!=\!\EE_{\init, \tilde{\sigma}_k} \cbr{F^A_{ \overline{\alpha}(T)}(\cdot)\!-\!\tau_{k+1}\rbr{\upsilon_k^{-1} F^Q_{\theta_k}(\cdot)\!+\!\tau_k^{-1} F^A_{ \alpha_k}(\cdot)}}^2$$ denote policy optimization error,  where $\overline{\alpha}(T)$ is the output of  Algorithm \ref{alg:improvement}, we have
\begin{align*}
\epsilon_{k+1}' 
=  \cO \rbr{\frac{R_A^2}{T^{1/2}} + \frac{R_A^{5/2}}{m_A^{1/4}} + \frac{R_A^2}{N^{1/2}}   + \frac{R_A^3}{m_A^{1/2}} }.
\end{align*}
\end{lemma}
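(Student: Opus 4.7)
Lemma \ref{thrm:poly_improv} is a regression-style statement about the $T$-step projected SGD in Algorithm \ref{alg:improvement}, so the plan is to mirror the neural-TD style analysis used for Lemma \ref{thrm:poly_eval} but on the squared loss \eqref{update:improvement}. Concretely, I would decompose $\epsilon_{k+1}'$ into four pieces matching the four terms in the bound: (i) an SGD optimization error on a locally linearized objective, producing $R_A^2/T^{1/2}$; (ii) a gap between the true two-layer ReLU network $F^A_\alpha$ and its linearization $\hat{F}^A_\alpha$ frozen at $\alpha_0$, producing $R_A^3/m_A^{1/2}$; (iii) a gradient-level linearization deviation that propagates through SGD, producing $R_A^{5/2}/m_A^{1/4}$; and (iv) the Monte Carlo error from evaluating $\hat{F}^A_\alpha(s, \mathrm{d}_{\cS}, \oa)$ via the empirical mean over an i.i.d.\ sample $\sbb \iid \mathrm{d}_{\cS}$, producing $R_A^2/N^{1/2}$.

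\textbf{Linearization and reduction to $\cF^\cP_{\alpha, m_A}$.} First I would introduce $\hat{F}^A_\alpha$ obtained from $F^A_\alpha$ by freezing the ReLU gating $\indic\{\alpha_{0,j}^\top(s,s',\oa)>0\}$, so that $\hat{F}^A_\alpha \in \cF^\cP_{\alpha, m_A}$. The small-ball regularity in Assumption \ref{assump:mild}, applied to the random direction $\alpha_{0,j}$, controls how often a gate can flip when $\alpha$ moves inside $\BB(\alpha_0, R_A)$; a uniform concentration over $\alpha$ (via a covering net of $\BB(\alpha_0, R_A)$ and the exchangeable structure of the DeepSet average) yields $\sup_\alpha \EE_{\tilde{\sigma}_k}(F^A_\alpha-\hat{F}^A_\alpha)^2 = \cO(R_A^3/m_A^{1/2})$ and an analogous $L^2$ bound $\cO(R_A^{5/2}/m_A^{1/4})$ on the difference of the stochastic gradients. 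Assumption \ref{assump:richness} places $F^Q_{\theta_k}$ in $\cF^\cP_{\theta, m_Q}$ and, by the shared initialization, the target $g_k = \tau_{k+1}\bigl(\upsilon_k^{-1}F^Q_{\theta_k} + \tau_k^{-1} F^A_{\alpha_k}\bigr)$ is (up to the above linearization error) representable in $\cF^\cP_{\alpha, m_A}$ with radius bounded by $R_A$.

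\textbf{SGD on the linearized quadratic and sample error.} After the linearization, the surrogate loss $\EE_{\tilde{\sigma}_k}(\hat{F}^A_\alpha - g_k)^2$ is convex and quadratic in $\alpha$ over $\BB(\alpha_0, R_A)$, and its projected stochastic gradient has uniformly bounded second moment because of the averaging over $\sbb$ and boundedness of $F^Q_{\theta_k}, F^A_{\alpha_k}$ inside their trust regions. A standard projected-SGD analysis (along the lines used for Lemma \ref{thrm:poly_eval}) then delivers
\begin{align*}
\EE \EE_{\tilde{\sigma}_k}\bigl(\hat{F}^A_{\overline\alpha(T)} - g_k\bigr)^2 = \cO\!\left(\tfrac{R_A^2}{T^{1/2}}\right).
\end{align*}
The mean-field sampling error enters because each SGD step replaces $\hat{F}^A_\alpha(s, \mathrm{d}_{\cS}, \oa) = \EE_{s'\sim\mathrm{d}_\cS}\phi(s,s',\oa)$ (and its gradient) by an $N$-sample average of bounded summands; a Hoeffding/Bernstein bound gives variance $\cO(R_A^2/N)$ per step, which I would push through the SGD recursion to produce the $R_A^2/N^{1/2}$ contribution to $\epsilon_{k+1}'$.

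\textbf{Assembly and main obstacle.} Finally I would combine the four terms, absorb the high-probability events over the initialization and the SGD trajectory, and translate the bound back from $\hat{F}^A$ to $F^A$ via the linearization estimate. The hardest step is the uniform linearization bound: the DeepSet network is an average of $N$ random two-layer units sharing parameters, so gates flip in a correlated fashion across the $N$ summands, and I have to decouple this correlation to get the $R_A^3/m_A^{1/2}$ and $R_A^{5/2}/m_A^{1/4}$ rates without losing a factor of $N$. Assumption \ref{assump:mild} is tailored for exactly this small-ball/gate-flip argument on the mean-field input $(s,s',\oa)$, and combining it with a chaining argument over $\BB(\alpha_0, R_A)$ and the SGD-induced filtration is what makes the neural-network and mean-field pieces interlock cleanly.
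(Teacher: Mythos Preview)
Your approach is correct and tracks the paper's proof closely: both linearize the actor at initialization, run projected SGD on the resulting (convex) quadratic, and add the $N$-sample Monte Carlo error on top. Two points where the paper is simpler than your sketch. First, the analysis is carried out entirely in expectation over the initialization, and the linearization estimate $\EE_{\init,\rho}(F^A_\alpha - F^{0}_\alpha)^2 = \cO(R_A^3 m_A^{-1/2})$ (Lemma~\ref{lemma:func_diff} in the generic framework) depends on $\alpha$ only through the deterministic constraint $\|\alpha-\alpha_0\|_2\le R_A$; since every SGD iterate lies in this ball by projection, the bound applies to each $\alpha(t)$ directly and no covering, chaining, or high-probability argument is needed. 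Second, the ``correlated gate flips across the $N$ summands'' that you flag as the main obstacle is a non-issue: Jensen's inequality on the DeepSet average gives $|F^A_\alpha(s,\sbb,\oa)-F^0_\alpha(s,\sbb,\oa)|\le N^{-1}\sum_{s'\in\sbb}|f_\alpha(s,s',\oa)-f^0_\alpha(s,s',\oa)|$, so the single-input gate-flip bound transfers with no $N$-factor. The remaining piece---bounding the distance from the linearized stationary point $F^0_{\alpha_*}$ to the target $g_k$---is handled exactly as you indicate, via the projection identity $F^0_{\alpha_*}=\Pi_{\cF^\cP_{\alpha,m_A}}(g_k)$ together with the shared actor/critic initialization and another application of Lemma~\ref{lemma:func_diff}.
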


 Lemma \ref{thrm:poly_eval} and \ref{thrm:poly_improv} show that despite non-convexity, both the policy evaluation and the policy improvement steps converge approximately to the global optimal solution.
In particular, given networks with large width, for a small number of iterations $T$,  the optimization error $\mathcal{O} \rbr{T^{-1/2}}$ dominates the optimality gap; for a large number of iterations $T$, the statistical error $\cO\rbr{N^{-1/2}}$ dominates the optimality gap.

%show that the policy evaluation and improvement error inversely depend on the width of actor/critic networks, and also on the number of agents. 
%Intuitively,  since we solve the policy evaluation/improvement subproblems with finite sample approximation of the mean-field state, using states of the observed agents, the approximation accuracy improves given more agents.

With Lemma \ref{thrm:poly_eval} and  \ref{thrm:poly_improv}, we  illustrate the main argument for the proof of Theorem \ref{thrm:main}. Let us assume the ideal case when $\epsilon_k = \epsilon'_{k+1} = 0$.
Note that for $\epsilon_k = 0$, we obtain the exact $Q$-function of policy $\pi_k$.
For $\epsilon'_{k+1} = 0$, we obtain the ideal energy-based updated policy defined in Proposition \ref{update:poly_regression}. That is,
\begin{align}\label{poly:ideal}
\pi_{k+1}  =  \argmax_\pi \EE_{ \nu_k} \bigg\{ \inner{Q^{\pi_k}(s, \mathrm{d}_\cS, \cdot)}{\pi(\cdot | s, \mathrm{d}_\cS)} -  \upsilon_k \mathrm{KL} \rbr{ \pi(\cdot|s, \mathrm{d}_\cS) \| \pi_{k}(\cdot|s, \mathrm{d}_\cS)} \bigg\}.
\end{align}
Without function approximation,  problem \eqref{poly:ideal} can be solved by treating each joint state $(s, \mathrm{d}_\cS)$ independently, hence one can apply the well known three-point lemma in mirror descent \citep{chen1993convergence} and obtain that, for all $(s, \mathrm{d}_\cS) \in \cS \times \cP(\cS)$:
\begin{align*}
&\hspace{-0.2in}\inner{Q^{\pi_k}(s, \mathrm{d}_\cS, \cdot)}{\pi^*(\cdot | s, \mathrm{d}_\cS) - \pi_{k}(\cdot | s, \mathrm{d}_\cS)}  \\ \leq &  \upsilon_k \bigg\{  \mathrm{KL} \rbr{ \pi^*(\cdot|s, \mathrm{d}_\cS) \| \pi_{k}(\cdot|s, \mathrm{d}_\cS)}  
- \mathrm{KL} \rbr{ \pi^*(\cdot|s, \mathrm{d}_\cS) \| \pi_{k+1}(\cdot|s, \mathrm{d}_\cS)} -  \mathrm{KL} \rbr{ \pi_{k+1}(\cdot|s, \mathrm{d}_\cS) \| \pi_{k}(\cdot|s, \mathrm{d}_\cS) }\bigg\} \\
& ~~~~~~  + \inner{Q^{\pi_k}(s, \mathrm{d}_\cS, \cdot)}{\pi_{k+1} (\cdot | s, \mathrm{d}_\cS) - \pi_{k}(\cdot | s, \mathrm{d}_\cS)} .
\end{align*}
From Lemma 6.1 in \citet{kakade2002approximately}, the expectation of the left hand side yields exactly $(1-\gamma)\cbr{ \cL(\pi^*) - \cL(\pi_k)}$.
Hence we have
\begin{align*}
&\hspace{-0.1in}(1-\gamma)\cbr{ \cL(\pi^*) - \cL(\pi_k)} \\ 
\leq   &  \upsilon_k \EE_{\nu^*} \bigg\{  \mathrm{KL} \rbr{ \pi^*(\cdot|s, \mathrm{d}_\cS) \| \pi_{k}(\cdot|s, \mathrm{d}_\cS)}  - \mathrm{KL} \rbr{ \pi^*(\cdot|s, \mathrm{d}_\cS) \| \pi_{k+1}(\cdot|s, \mathrm{d}_\cS)} -  \mathrm{KL} \rbr{ \pi_{k+1}(\cdot|s, \mathrm{d}_\cS) \| \pi_{k}(\cdot|s, \mathrm{d}_\cS) }\bigg\} \\
& ~~~~~~  +  \EE_{\nu^*} \inner{Q^{\pi_k}(s, \mathrm{d}_\cS, \cdot)}{\pi_{k+1} (\cdot | s, \mathrm{d}_\cS) - \pi_{k}(\cdot | s, \mathrm{d}_\cS)}  .
\end{align*}
Pinsker's Inequality
\begin{align*}
    \mathrm{KL} \rbr{ \pi_{k+1}(\cdot|s, \mathrm{d}_\cS) \| \pi_{k}(\cdot|s, \mathrm{d}_\cS) } \geq 
\frac{1}{2} \norm{\pi_{k+1} - \pi_k}_1^2,
\end{align*}
combined with the observation $\norm{Q^{\pi_k}(s, \mathrm{d}_\cS, \cdot)}_\infty \leq \overline{r}$, 
 and  basic inequality $-ax^2 + bx \leq b^2/(4a)$ for $a>0$ gives us  
\begin{align}
(1-\gamma)\cbr{ \cL(\pi^*) - \cL(\pi_k)} 
\leq   \upsilon_k  \EE_{\nu^*} \bigg\{  \mathrm{KL} \rbr{ \pi^*(\cdot|s, \mathrm{d}_\cS) \| \pi_{k}(\cdot|s, \mathrm{d}_\cS)} - \mathrm{KL}\rbr{ \pi^*(\cdot|s, \mathrm{d}_\cS) \| \pi_{k+1}(\cdot|s, \mathrm{d}_\cS)} \bigg\}  + \overline{r}^2/(2\upsilon_k) . \label{eq:key_recursion}
\end{align}
By setting $\upsilon_k = \cO(\sqrt{K})$, and telescoping the above inequality from $k= 0$ to $ K-1$, we obtain:
 $$\min_{0 \leq k \leq K-1} \{ \cL(\pi^*) - \cL(\pi_k) \}  = \cO(1/\sqrt{K}).$$ 
Note that the key element in the global convergence of MF-PPO is the recursion defined in \eqref{eq:key_recursion}, which holds whenever we have the exact $Q$-function of current policy and no function approximation is used when updating the next policy.
Now MF-PPO conducts approximate policy evaluation ($\epsilon_k >0$), and after obtaining the approximate $Q$-function,
conducts approximate policy improvement step ($\epsilon'_{k+1} >0$). 
In addition,  the error of  approximating the $Q$-function  introduced in the evaluation step can be further compounded in the improvement step. 
Nevertheless, recursion \eqref{eq:key_recursion} still holds approximately, with additional terms representing the policy evaluation/improvement errors.

 \begin{lemma}[\citet{liu2019neural}]\label{lemma:err_propagation}
Let $\epsilon_k$ (evaluation error) and $\epsilon_{k+1}'$ (improvement error) be defined as in Lemma \ref{thrm:poly_eval} and Lemma \ref{thrm:poly_improv}, respectively.  We have:
\vspace{-0.02in}
\begin{align}
(1- \gamma) \rbr{\cL(\pi^*) - \cL(\pi_k)}
  & \leq    \upsilon_k \EE_{\nu_*} \bigg\{   \mathrm{KL} \rbr{ \pi^*(\cdot|s, \mathrm{d}_\cS) \| \pi_{k}(\cdot|s, \mathrm{d}_\cS)}  - \mathrm{KL} \rbr{ \pi^*(\cdot|s, \mathrm{d}_\cS) \| \pi_{k+1}(\cdot|s, \mathrm{d}_\cS)} \bigg\}   \\
   & ~~~~~~ +  \upsilon_k \left(\varepsilon_k + \varepsilon_k'\right)   + \upsilon_k^{-1} M . \label{ineq:telescopt}
\end{align}
where
\vspace{-0.02in}
\begin{align*}
\varepsilon_k &= \tau_{k+1}^{-1} \epsilon_{k+1}' \phi_{k+1}^* + \upsilon_k^{-1} \epsilon_k \psi_k^*,\\
\varepsilon_k' &= \abs{\cA} \tau_{k+1}^{-2} (\epsilon_{k+1}')^2,\\
M &= \EE_{\nu^*} \cbr{ \max_{\oa \in \overline{\cA}} \sbr{F^Q_{\theta_0}(s, \mathrm{d}_\cS, \oa)}^2} + 2R_A^2.
\end{align*}
In addition, $\phi_k^*$ and $\psi_k^*$ are defined by:
\begin{align*}
\phi_k^* &= \EE_{\tilde{\sigma}_k} \sbr{\abs{\mathrm{d}\pi^* / \mathrm{d}\pi_0 -  \mathrm{d}\pi_{k} / \mathrm{d}\pi_0}^2 }^{1/2}, \\
\psi_k^* &= \EE_{\sigma_k} \sbr{\abs{\mathrm{d}\sigma^* / \mathrm{d}\sigma_k -  \mathrm{d}(\nu^*\times \pi_k) / \mathrm{d}\sigma_k}^2 }^{1/2}.
\end{align*}
\end{lemma}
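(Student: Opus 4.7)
My plan is to mimic the ``ideal'' mirror-descent recursion sketched just before the lemma, and then quantify, term by term, how the two sources of inexactness (policy evaluation and policy improvement) perturb it. Concretely, I would introduce an auxiliary ``ideal'' update $\overline{\pi}_{k+1} \propto \exp\{\upsilon_k^{-1} Q^{\pi_k} + \tau_k^{-1} F^A_{\alpha_k}\}$, which is the exact maximizer in Proposition~\ref{update:poly_regression} when one plugs in the true $Q$-function instead of $F^Q_{\theta_k}$. Since $\overline{\pi}_{k+1}$ is the pointwise argmax of the KL-regularized linear objective with driving function $Q^{\pi_k}$, the three-point identity for the negative entropy mirror map yields, at every joint state $(s,\mathrm{d}_\cS)$,
\begin{align*}
\langle Q^{\pi_k}(s,\mathrm{d}_\cS,\cdot), \pi^*(\cdot|s,\mathrm{d}_\cS) - \pi_k(\cdot|s,\mathrm{d}_\cS)\rangle
&\le \upsilon_k\bigl\{\mathrm{KL}(\pi^*\|\pi_k) - \mathrm{KL}(\pi^*\|\overline{\pi}_{k+1}) - \mathrm{KL}(\overline{\pi}_{k+1}\|\pi_k)\bigr\} \\
&\quad + \langle Q^{\pi_k}(s,\mathrm{d}_\cS,\cdot), \overline{\pi}_{k+1} - \pi_k\rangle.
\end{align*}
Taking expectation under $\nu^*$ and applying the performance difference lemma (Kakade--Langford, Lemma~6.1) rewrites the left-hand side as $(1-\gamma)\{\cL(\pi^*)-\cL(\pi_k)\}$.

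The next task is to swap $\overline{\pi}_{k+1}$ for the actual iterate $\pi_{k+1}$, which costs us a $\mathrm{KL}(\pi^*\|\overline{\pi}_{k+1}) - \mathrm{KL}(\pi^*\|\pi_{k+1})$ correction. Using the closed-form log-ratios of the two energy-based policies, this correction equals an expectation of $\langle \tau_{k+1} F^A_{\alpha_{k+1}} - (\upsilon_k^{-1} F^Q_{\theta_k} + \tau_k^{-1} F^A_{\alpha_k}),\, \pi^* - \pi_{k+1}\rangle$, which I would bound by a change of measure to $\tilde{\sigma}_k = \nu_k\pi_0$ followed by Cauchy--Schwarz; the $L^2(\tilde{\sigma}_k)$ factor is exactly $\sqrt{\epsilon_{k+1}'}$ by Lemma~\ref{thrm:poly_improv}, and the distribution-ratio factor produces $\phi_k^*$. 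Similarly, I replace $Q^{\pi_k}$ by $F^Q_{\theta_k}$ on both sides wherever it appears; the resulting residual has an $L^2(\sigma_k)$ factor of $\sqrt{\epsilon_k}$ paired with a density-ratio factor that yields $\psi_k^*$ after another Cauchy--Schwarz change of measure from $\sigma_k$ to $\sigma^*$ and $\nu^*\times\pi_k$. The $\abs{\cA} \tau_{k+1}^{-2}(\epsilon_{k+1}')^2$ portion of $\varepsilon_k'$ will arise when I need an $\ell_\infty$-type bound on $\log(\pi_{k+1}/\overline{\pi}_{k+1})$ inside the KL difference, and I handle it by splitting the action set and paying a union-bound factor of $|\overline{\cA}|$.

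Finally, the stray inner product $\langle Q^{\pi_k}(s,\mathrm{d}_\cS,\cdot),\overline{\pi}_{k+1} - \pi_k\rangle$ is controlled by Pinsker, $\|\overline{\pi}_{k+1}-\pi_k\|_1^2\le 2\mathrm{KL}(\overline{\pi}_{k+1}\|\pi_k)$, and H\"older: it is at most $\overline{r}\|\overline{\pi}_{k+1}-\pi_k\|_1$. Combined with the negative $-\upsilon_k\,\mathrm{KL}(\overline{\pi}_{k+1}\|\pi_k)$ term from the three-point inequality and the basic AM--GM bound $-ax^2+bx \le b^2/(4a)$, this produces the $\upsilon_k^{-1} M$ residual, where the extra $2R_A^2$ inside $M$ absorbs the $(F^A_{\alpha_k}-F^A_{\alpha_0})$-type deviations that appear when I bound $\max_{\oa}|F^A_{\alpha_k}(s,\mathrm{d}_\cS,\oa)|$ via the projection radius $R_A$.

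The technical bottleneck I anticipate is the careful bookkeeping in the change-of-measure steps: to turn an error measured under the \emph{training} distributions $\sigma_k$ and $\tilde{\sigma}_k$ into an error measured under the \emph{comparator} distributions $\nu^*$ and $\sigma^*$, I must insert the Radon--Nikodym derivatives and then apply Cauchy--Schwarz so that the mean-square error factors ($\sqrt{\epsilon_k}$, $\sqrt{\epsilon_{k+1}'}$) come out cleanly multiplied by $\psi_k^*$ and $\phi_k^*$, respectively. Getting the exponents of $\tau_{k+1}$ and $\upsilon_k$ correct in $\varepsilon_k$ and $\varepsilon_k'$, and ensuring the $\upsilon_k^{-1}M$ term genuinely encloses every leftover second-order piece (both the Pinsker slack and the $F^A_{\alpha_0}$-baseline contribution), is the delicate part; everything else is a mechanical assembly of the three-point lemma, Pinsker's inequality, and the two convergence lemmas already at hand.
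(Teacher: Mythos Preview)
The paper does \emph{not} supply its own proof of this lemma: it is stated with the attribution ``\citet{liu2019neural}'' and is used as a black box, with only the $\epsilon_k=\epsilon_{k+1}'=0$ special case sketched in the main text (the three-point lemma plus performance-difference plus Pinsker argument you reproduce). So there is nothing in the present paper to compare your proposal against line by line.

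That said, your outline is exactly the standard route taken in the cited reference and is consistent with the ideal-case skeleton the paper does provide. You correctly identify the three ingredients that go beyond the ideal case: (i) swapping $\overline{\pi}_{k+1}$ for $\pi_{k+1}$ via the explicit log-ratio of energy-based policies, which after change of measure to $\tilde{\sigma}_k$ and Cauchy--Schwarz produces the $\tau_{k+1}^{-1}\epsilon_{k+1}'\phi_{k+1}^*$ contribution; (ii) swapping $Q^{\pi_k}$ for $F^Q_{\theta_k}$, which after change of measure to $\sigma_k$ gives the $\upsilon_k^{-1}\epsilon_k\psi_k^*$ contribution; and (iii) the $|\overline{\cA}|$-factor arising from controlling the log-partition-function difference between $\pi_{k+1}$ and $\overline{\pi}_{k+1}$ uniformly over actions. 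Your handling of the $\upsilon_k^{-1}M$ term via Pinsker, H\"older, and $-ax^2+bx\le b^2/(4a)$ is precisely the mechanism the paper sketches, and your observation that the $2R_A^2$ in $M$ comes from bounding $\|F^A_{\alpha_k}-F^A_{\alpha_0}\|_\infty$ by the projection radius is correct. In short: your proposal matches the intended argument; the paper simply defers the details to the cited source.
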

Finally, from Lemma \ref{lemma:err_propagation}, by telescoping inequality \eqref{ineq:telescopt} from $k=0$ to $K-1$, we complete the proof of Theorem \ref{thrm:main}.

\vspace{-0.1in}
\section{Experiments}\label{sec:experiments}
\vspace{-0.05in}

We perform experiments on the benchmark multi-agent particle environment (MPE) used in prior work \citep{lowe2017multi,mordatch2018emergence,liu2019pic}.
In the \textit{cooperative navigation} task, $N$ agents each with position $x_i \in \Rbb^2$ must move to cover $N$ fixed landmarks at positions $y_i \in \Rbb^2$.
They receive a team reward $R = - \sum_{i=1}^N \min_{j \in [N]} \lVert y_i - x_j \rVert_2;$
In the \textit{cooperative push} task, $N$ agents with position $x_i \in \RR^2$ work together to push a ball $x \in \RR^2$ to a fixed landmark $y \in \RR^2$. 
They receive a team reward $R = - \min_{j \in [N]} \lVert x_j - x \rVert_2 - \norm{x -y}_2.$
Both tasks involve homogeneous agents.

We instantiate MF-PPO by building on the original PPO algorithm \citep{schulman2017proximal} with a centralized permutation invariant value-function critic, represented by a DeepSet \citep{zaheer2017deep} network with two hidden layers.
We use a standard two-layer multi-layer perception (MLP) for the actor network in all algorithms.
One version, labeled as \textbf{MF}, has a centralized actor network that outputs the mean and diagonal covariance of a Gaussian distribution over the joint continuous action space.
A second decentralized version with parameter sharing, labeled as \textbf{MF-PS}, has a decentralized actor network that maps individual observations to parameters of a Gaussian distribution over the individual action space.

\begin{figure}[htb!]
\vspace{-0.1in}
\centering
\begin{subfigure}[t]{0.45\linewidth}
    \centering
    \includegraphics[width=1.0\linewidth]{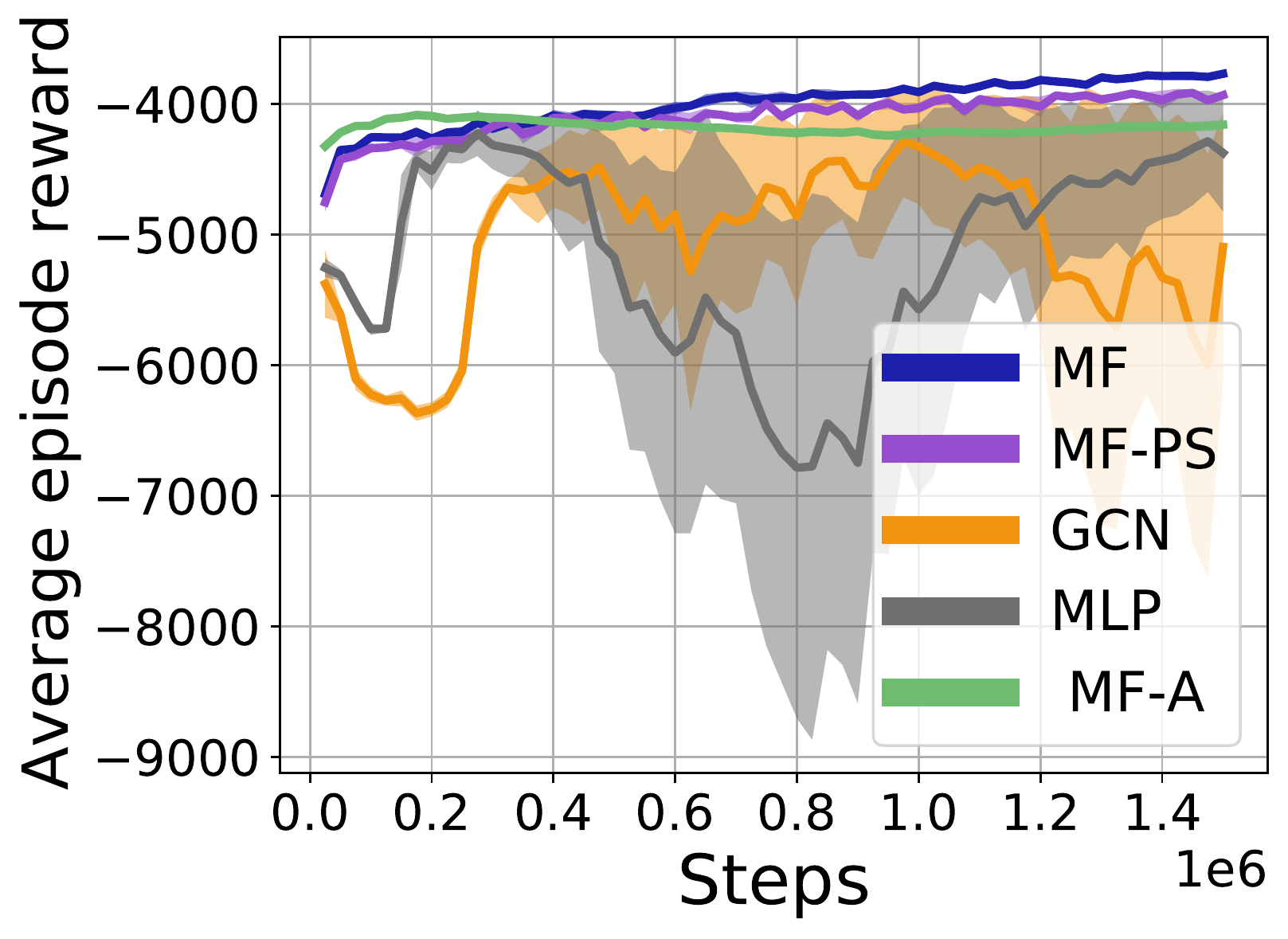}
    \vspace{-0.2in}
    \caption{$N=6$}
    \label{fig:coop_nav_n3}
\end{subfigure}
\hfill
\begin{subfigure}[t]{0.45\linewidth}
    \centering
    \includegraphics[width=1.0\linewidth]{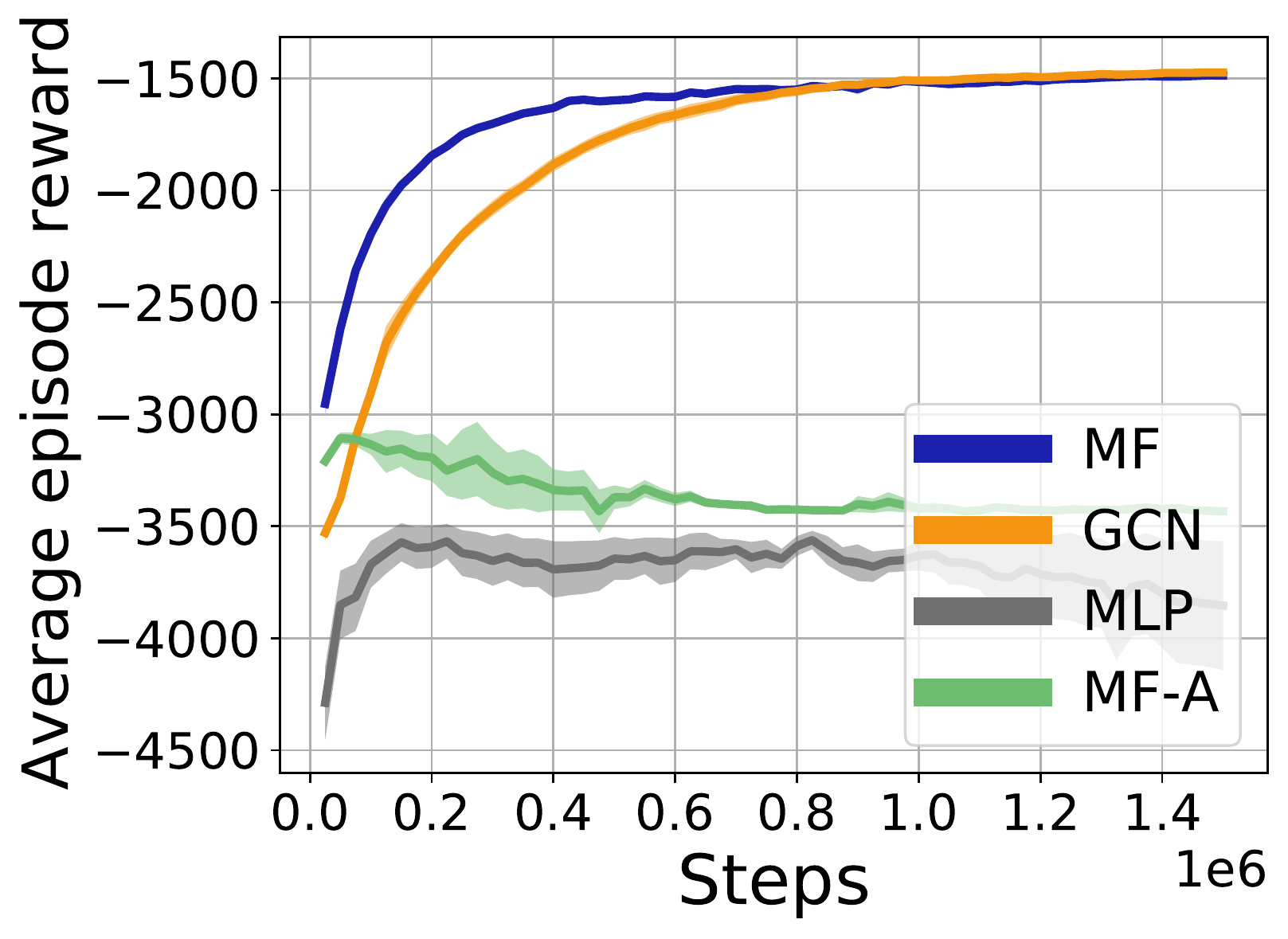}
    \vspace{-0.2in}
    \caption{$N=15$}
    \label{fig:coop_nav_n6}
  \end{subfigure}
\hfill
\begin{subfigure}[t]{0.45\linewidth}
  \centering
  \includegraphics[width=1.0\linewidth]{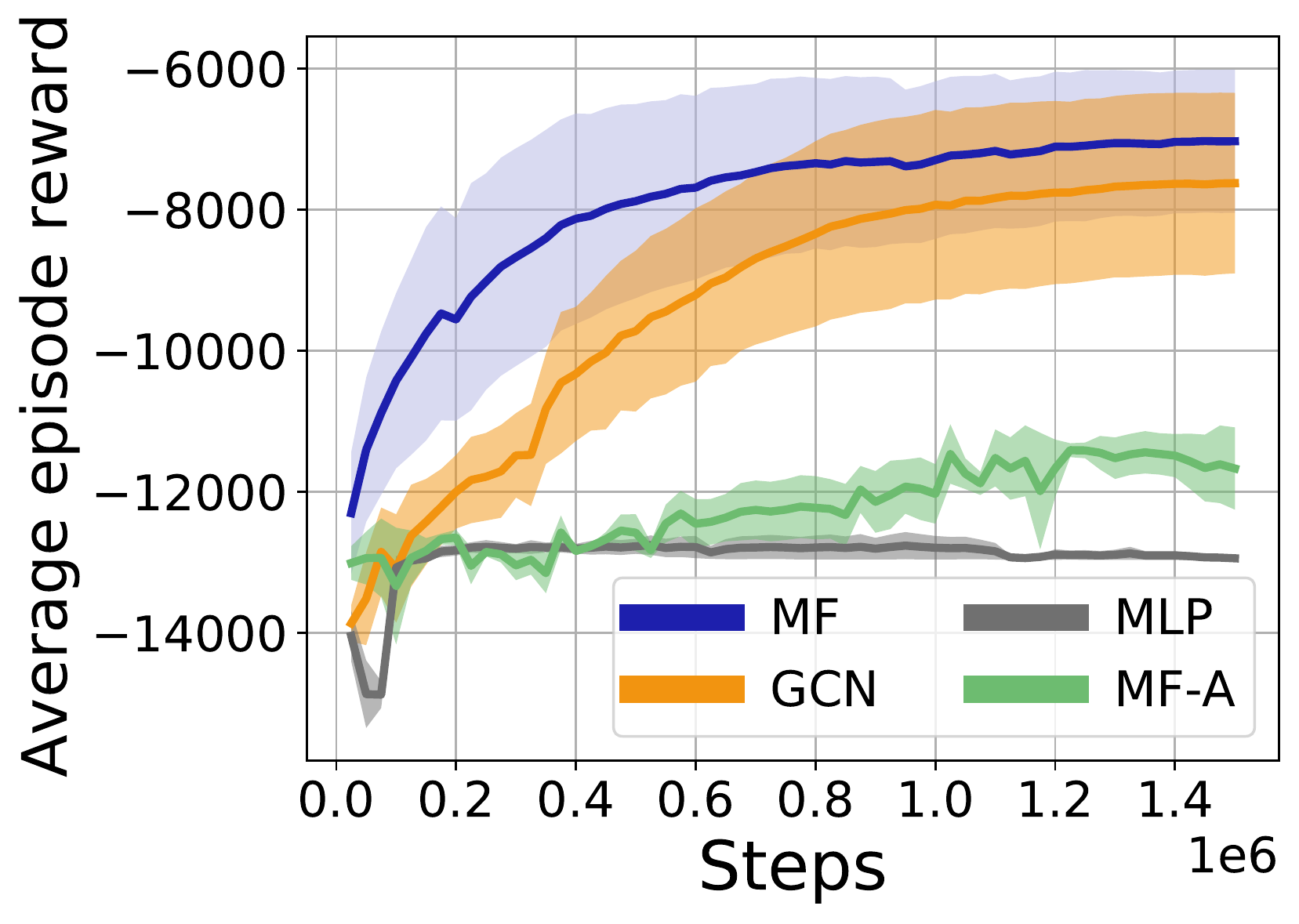}
  \vspace{-0.2in}
  \caption{$N=30$}
  \label{fig:coop_nav_n15}
\end{subfigure}
\hfill
\begin{subfigure}[t]{0.45\linewidth}
  \centering
  \includegraphics[width=1.0\linewidth]{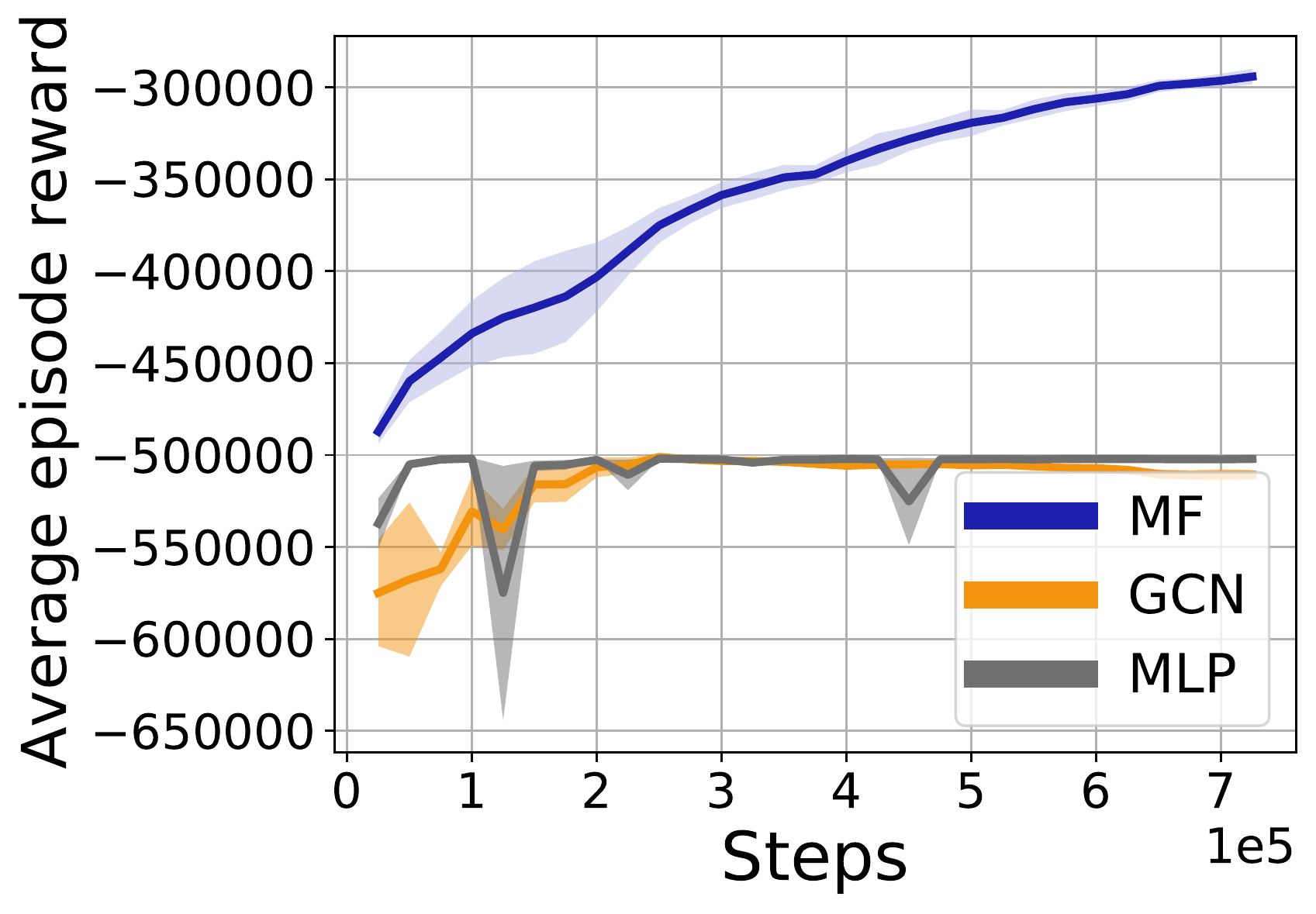}
  \vspace{-0.2in}
  \caption{$N=200$}
  \label{fig:critic_size}
\end{subfigure}
\vspace{-0.05in}
\caption{Performance versus number of environment steps in the multi-agent \textit{cooperative navigation} task, over five independent runs per method. 
Each point is the average reward during 1000 evaluation episodes.
 MF based on DeepSet invariant critic, and built on either PPO or MADDPG, significantly outperforms other critic representations for various number of agents.
 }
\label{fig:results_coop_nav}
\vspace{-0.1in}
\end{figure}

We compare with two other critic representations: one that uses MLP for the centralized critic, labeled \textbf{MLP}, and another that uses a graph convolutional network for the critic \citep{liu2019pic}, labeled \textbf{GCN}.
Note that the GCN representation is permutation invariant if one imposes a fully-connected graph for the agents in the MPE, but this invariance property does not hold for all graphs in general.
We also compare with an extension of \citep{yang2018mean} to the case of continuous action spaces, labeled \textbf{MF-A}, in which each independent DDPG agent $i$ has a decentralized critic $Q(s_i,a_i,\bar{a}_i)$ that takes in the mean of all other agents' actions $\bar{a}_i := \frac{1}{N-1}\sum_{j \neq i} a_j$.
Empirically, as we find that off-policy RL learns faster than on-policy RL in the MPE with higher agent number, regardless of the critic representation, we make all comparisons on top of MADDPG \citep{lowe2017multi} for all $N\geq 15$ cases.
For a fair comparison of all critic representations, we ensure that all neural network architectures contain approximately the same number of trainable weights.

For the cooperative navigation task, 
\Cref{fig:results_coop_nav} shows that the permutation invariant critic representation based on DeepSet enables MF and MF-PS to learn faster or reach a higher performance than all other representations and methods in the MPE with 6, 15, 30 and 200 agents.
\Cref{fig:varying_critic} shows that MF consistently and significantly outperforms GCN as the number of parameters in their critic network varies over a range, with all other settings fixed.
In particular, MF requires much fewer critic parameters to achieve higher performance than GCN.
For the cooperative push task, 
\Cref{fig:coop_push_n15}  demonstrates  similar performance improvement of MF compared to all other methods.
We provide additional experiment detail in Appendix \ref{app:experiment}.

%!TEX root = report.tex
%\section{Discussion}

\begin{figure}[htb!]
\centering
\vspace{-0.1in}
\begin{subfigure}[t]{0.48\linewidth}
    \centering
    \includegraphics[width=1.0\linewidth]{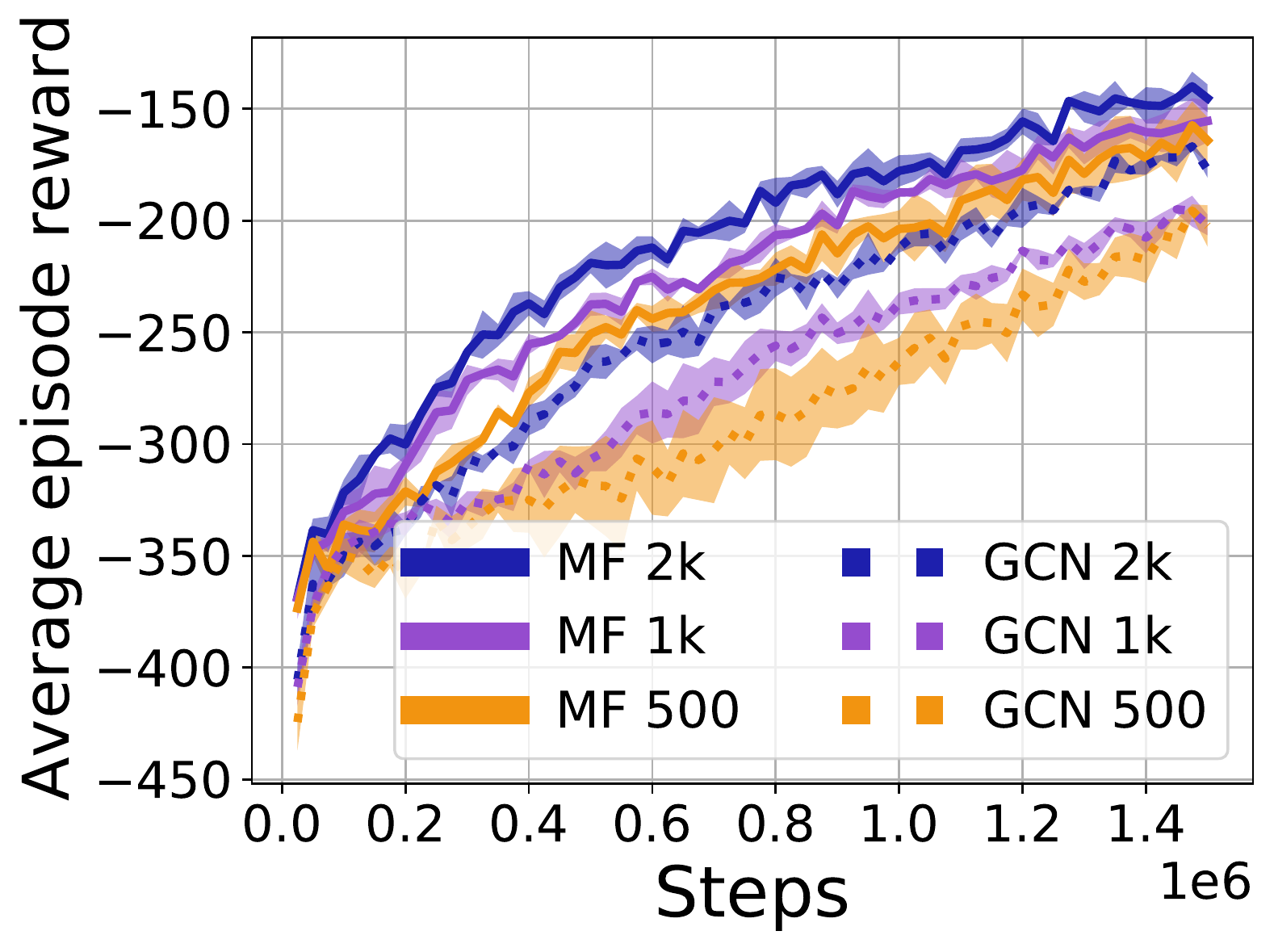}
    \vspace{-0.2in}
    \caption{Varying critic size}
    \label{fig:varying_critic}
  \end{subfigure}
  \hfill
\begin{subfigure}[t]{0.48\linewidth}
    \centering
    \includegraphics[width=1.0\linewidth]{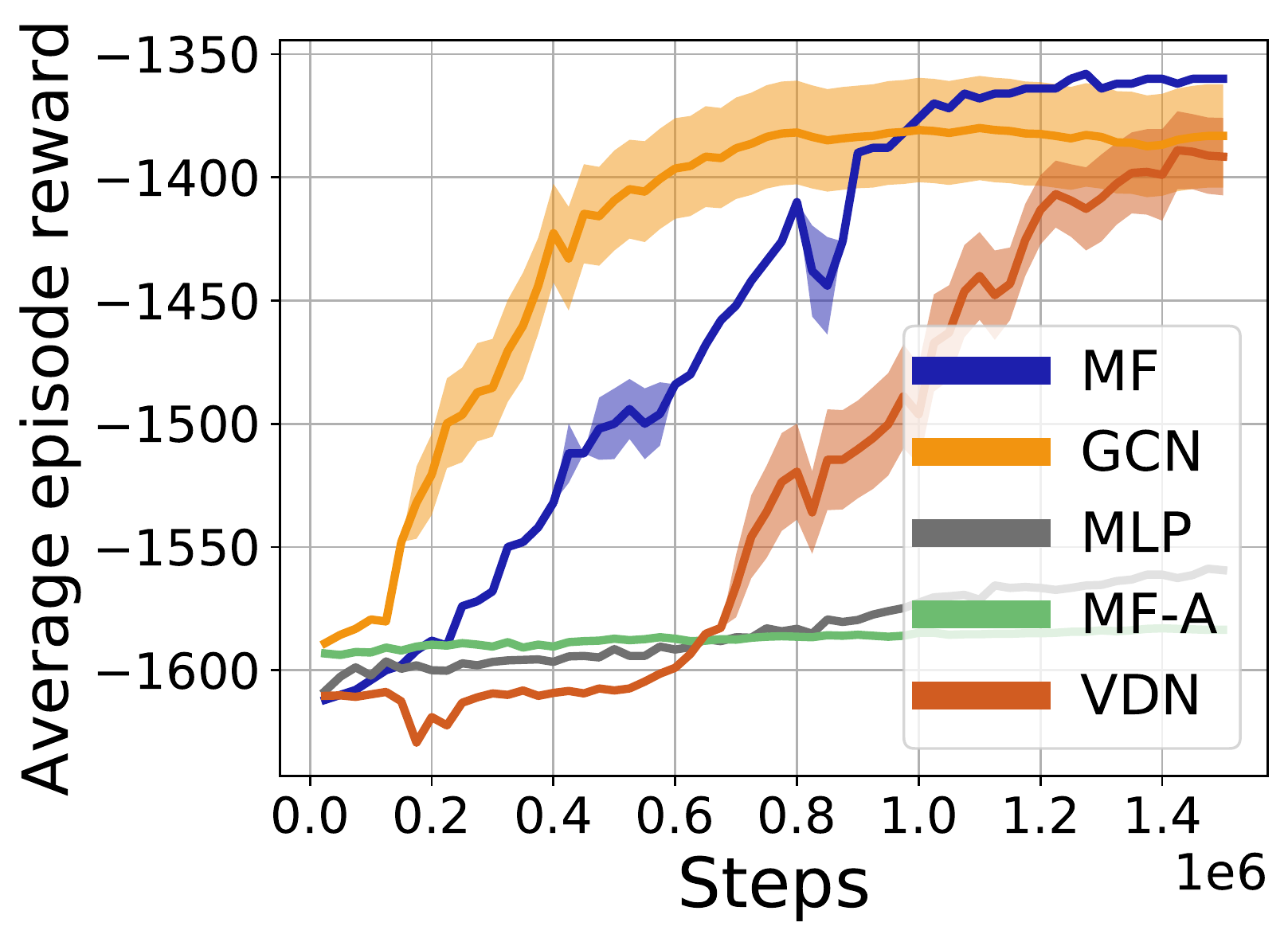}
    \vspace{-0.2in}
    \caption{$N=15$}
    \label{fig:coop_push_n15}
\end{subfigure}
\vspace{-0.05in}
\caption{ 
(a) MF outperforms GCN even with a fewer number of critic network parameters ($N=3$).
(b) Performance versus number of environment steps in the multi-agent \textit{cooperative push} task. 
 MF based on DeepSet invariant critic significantly outperforms other critic representations.
  }
  \vspace{-0.1in}
\end{figure}

\textbf{Computational Improvements}.
Theorem \ref{thrm:main} states that to obtain a small optimality gap in MF-PPO, one needs to compute the update on a large number of agents. We remark that with the dual embedding techniques developed in \citet{dai2017sbeed}, one can avoid computation on all the agents by sampling a small number of agents  to compute the update. This technique could be readily incorporated into MF-PPO to improve its computational efficiency.

\vspace{-0.1in}
\section{Conclusion}
\vspace{-0.05in}

We propose a principled approach to exploit agent homogeneity and permutation invariance through the mean-field approximation in MARL. 
Moreover, our results are also the first to provably show the global convergence of MARL algorithms with neural networks as function approximators. 
This is in sharp contrast to current practices, which are mostly heuristic methods without convergence guarantees. 

% We remark that MARL has wide applications in autonomous driving, robotics, and policy making.
% This paper provides important theoretical guidelines and practical methods for practitioners as well as new understandings on how MARL can be efficiently applied to address these problems.

% \bibliography{advtrain}
\bibliographystyle{ims}
\bibliography{references}
%!TEX root = report.tex

\newpage
\onecolumn 

\noindent\rule[0.5ex]{\linewidth}{4pt}
\begin{center}
{ \bf \Large Supplementary Material for A Permutation Invariant Approach to Deep Mean-Field Multi-Agent Reinforcement Learning}
\end{center}
\noindent\rule[0.5ex]{\linewidth}{1pt}
\appendix

\section{Algorithms for Policy Evaluation/Improvement}\label{append:alg}
\begin{algorithm}[H]
    \caption{Policy Evaluation via TD }
    \label{alg:evaluation}
    \begin{algorithmic}
    \STATE{\textbf{Require}: Mean-field MDP $(\overline{\cS}, \overline{\cA}, \PP,  r, \gamma)$, sample $\{(s_t, \mathbf{s}_t, \oa_t, s_t' ,\mathbf{s}_t', \oa_t')\}$, number of iterations $T$}
    \STATE{\textbf{Initialize:} $w_j(0) \sim \mathrm{Unif} \{-1, +1\}, ~~~ \theta_j(0) \sim \cN(0, \mathrm{I}_d/d), ~ \forall j \in [m_Q]$ }
      \STATE{Set step size $\eta \leftarrow T^{-1/2}$  }
    \FOR{$t=0, \ldots, T-1$}
    \STATE{ Set $(s, \mathbf{s} ,\oa, s', \mathbf{s}', \oa') \leftarrow (s_t, \mathbf{s}_{t} ,\oa_{t}, s_{t}', \mathbf{s}_{t}', \oa_{t}')$}
    \STATE{ $ \theta(t+1/2)  = \theta(t) - \eta \cbr{ F^Q_{\theta(t)}(s, \mathbf{s}, \oa) - (1-\gamma) r(s, \mathrm{d}_\cS, \oa) - \gamma F^Q_{\theta(t)}(s, \mathbf{s}', \oa')} \nabla_\theta F^Q_{\theta(t)}(s, \mathbf{s},\oa) $}
\STATE{ $\theta(t+1) = \Pi_{\mathbb{B}(R_\theta, \theta_0, )} ( \theta(t+1/2)  ) $}
    \ENDFOR
    \STATE{Take ergodic average: $\overline{\theta}_T \leftarrow \frac{1}{T} \sum_{t=0}^{T-1} \theta(t)$}
    \STATE{\textbf{Output}: $F^Q_{\overline{\theta}_T}$}
    \end{algorithmic}
\end{algorithm}

\begin{algorithm}[H]
    \caption{Policy Improvement via SGD}
    \label{alg:improvement}
    \begin{algorithmic}
    \STATE{\textbf{Require}: Mean-field MDP $(\overline{\cS}, \overline{\cA}, \PP,  r, \gamma)$, sample $\{(s_t, \mathbf{s}_t, \oa_t\}$, number of iterations $T$}
        \STATE{\textbf{Initialize:} $v_j(0) \sim \mathrm{Unif} \{-1, +1\}, ~~~ \alpha_j(0) \sim \cN(0, \mathrm{I}_d/d), ~\forall j \in [m_A]$ }
      \STATE{Set step size $\eta \leftarrow T^{-1/2}$  }
    \FOR{$t=0, \ldots, T-1$}
    \STATE{ Set $(s, \mathbf{s} ,\oa) \leftarrow (s_t, \mathbf{s}_{t} ,\oa_{t})$}
    \STATE{ $\alpha(t+1/2) = \alpha(t) - \eta \cbr{F^A_{\alpha(t)}(s, \mathbf{s}, \oa) - \tau_{k+1} ( \upsilon_k^{-1} F^{Q}_{\theta_k}(s, \mathbf{s}, \oa) + \tau_k^{-1} F^A_{\alpha_k}(s, \mathbf{s}, \oa))} \nabla_\alpha F^A_{ \alpha(t)}(s, \mathbf{s}, \oa)$}
\STATE{  $ \alpha(t+1) = \Pi_{\mathbb{B}(R_\alpha, \alpha_0)}  \rbr{ \alpha(t+1/2) }$}
    \ENDFOR
    \STATE{Take ergodic average: $\overline{\alpha}_T \leftarrow \frac{1}{T} \sum_{t=0}^{T-1} \alpha(t)$}
    \STATE{\textbf{Output}: $F^A_{\overline{\alpha}_T}$}
    \end{algorithmic}
\end{algorithm}

\section{Experimental details}
\label{app:experiment}

\noindent  $\diamondsuit$ \textbf{Environment.}
We used the open-source code for the multi-agent particle environments provided by \citet{liu2019pic}, which itself is based on the original code by \citet{lowe2017multi}, without any modification.
Please refer to \citet[Appendix B]{liu2019pic} for complete details.

\noindent  $\diamondsuit$ \textbf{Computing infrastructure and runtime.}
Experiments were run on Intel Xeon 6154 CPUs, using one core for each independent policy training session.
Average training time was approximately 4 hours for $N=15$ and $N=30$ with 1.5e6 steps, and 12 hours for $N=200$ with 7.25e5 steps.

\subsection{Implementation}

\noindent $\diamondsuit$ \textbf{GCN\footnote{We label this architecture as ``GCN'' to distinguish it from alternative ways to instantiate a permutation invariant critic.}.} 
For experiments based on MADDPG, we re-ran the open-source code provided by \citet{liu2019pic} without modification.
For experiments based on PPO, we used the GCN network in \citet{kipf2016semi} as the critic in PPO.
Performance of GCN reported in this work are the results of our experimental runs.

\noindent $\diamondsuit$ \textbf{MF.}
In the PPO-based implementation, the joint policy is parameterized by a multi-layer perceptron (MLP) with two hidden layers of size 128 and ReLU activation, which takes in the concatenation of all agents' observations and outputs the mean and diagonal covariance of a joint Gaussian policy.
In the MADDPG-based implementation, each decentralized actor (i.e., policy) network is an MLP with two hidden layers of size 187 and ReLU activation, which takes in each agent's individual observation and outputs the agent's real-valued action vector.
The centralized critic has the form $V(s) = f_3(f_2(\sum_{i=1}^N f_1(s_i)))$, where $f_1$ and $f_2$ are hidden layers of size $h=190/205/187/187/187$ (for $N=3/6/15/30/200$ agents, respectively, to have approximately equal number of trainable critic parameters as the GCN critic used in \citet{liu2019pic}) with ReLU activation, and $f_3$ is a linear layer with output size 1.

\noindent  $\diamondsuit$ \textbf{MLP.}
The centralized critic is an MLP with two hidden layers of size 138,187,75,42,7 for the case of $N=3,6,15,30,200$ agents, respectively, such that total number of trainable parameters is approximately equal to that in GCN \citep{liu2019pic}.
The joint policy has the same representation as the one in \textbf{MF} for both PPO-based and MADDPG-based implementations.

\noindent  $\diamondsuit$ \textbf{Hyperparameters.}
We used the Adam optimizer for all algorithms.
For experiments based on PPO, the common hyperparameters across all algorithms are: discount factor $\gamma=0.99$, KL divergence target $d_{\text{targ}}=0.01$, KL divergence initial coefficient $\beta=3$, and GAE $\lambda=0.95$ (see \citet{schulman2017proximal}), policy entropy loss coefficient 0.01, max gradient norm $0.5$, and actor learning rate $10^{-4}$.
Each PPO training step uses 5 epochs with minibatch size 2.
We did a sweep of critic learning rate for each critic architecture, choosing $10^{-3}$ for MF, $10^{-4}$ for GCN and MLP, and $10^{-2}$ for MF-A.

For experiments based on MADDPG, the common hyperparameters are:
actor learning rate $0.01$, critic learning rate $0.01$, batch size 1024, discount factor $\gamma=0.95$, replay buffer size $10^5$, 8 actor and critic updates per 100 environment steps, and target network update rate $\tau = 0.01$.

% \begin{table}[h]
%     \centering
%     \caption{Hyperparameters for $N=3,6$.}
%     \label{tab:hyperparams_3_6}
%     \begin{tabular}{lrrrrrrrr}
%         \toprule
%         \multicolumn{4}{r}{$N=3$} & \multicolumn{5}{c}{$N=6$}\\
%         \cmidrule(r){2-5}
%         \cmidrule(r){6-9}
%         Parameter & MF & GCN & MLP & MF-A & MF & GCN & MLP & MF-A \\
%         \midrule
%         critic learning rate & $10^{-3}$ & $10^{-4}$ & $10^{-4}$ & $10^{-2}$ & $10^{-3}$ & $10^{-4}$ & $10^{-4}$ & $10^{-2}$ \\
%         \bottomrule
%     \end{tabular}
% \end{table}

\section{Proofs in Section \ref{sec:background}}

\begin{proof}[Proof of Proposition \ref{prop:invariant_policy}]
We first prove the first part of claim. We begin by showing that the optimal Q-function $Q^*$ is permutation invariant, i.e., $Q^*(\sbb, \ab) = Q^*(\kappa(\sbb), \kappa(\ab))$ for all permutation mapping $\kappa$.
Note that $Q^*$ is the unique solution to Bellman optimality condition over the space of $Q$-function
\begin{align}
Q(\sbb, \ab) &= \EE_{\sbb' \sim \PP(\cdot| \sbb, \ab)} \cbr{r(\sbb, \ab) + \max_{\ab'} Q(\sbb', \ab') } \nonumber \\
&= r(\sbb, \ab) + \sum_{\sbb'} \PP(\sbb'|\sbb, \ab) \max_{\ab'} Q(\sbb', \ab'). \label{eq:bellmanq}
\end{align}
Or equivalently, from the permutation invariance of $r$ and $\PP$ defined in \eqref{eq:system}
\begin{align*}
Q(\kappa(\sbb), \kappa(\ab)) & = r(\kappa(\sbb), \kappa(\ab) ) + \sum_{\sbb'} \PP(\kappa(\sbb')|\kappa(\sbb), \kappa(\ab)) \max_{\ab'} Q(\kappa(\sbb'), \kappa(\ab')) \\
& =  r(\sbb, \ab) + \sum_{\sbb'} \PP(\sbb'|\sbb, \ab) \max_{\ab'} Q(\kappa(\sbb'), \kappa(\ab')).
\end{align*}
That is, $Q(\kappa(\sbb), \kappa(\ab))$ is also a solution to the Bellman optimality condition \eqref{eq:bellmanq}. From the uniqueness of the solution, we have 
$Q^*(\sbb, \ab) = Q^*(\kappa(\sbb), \kappa(\ab))$.  Hence the optimal policy $\nu^*(\sbb) = \argmax_{\ab} Q^*(\sbb, \ab)$ is permutation invariant.

For second part of the proposition, we show the detailed proof for permutation invariance of value function $V$.
For any permutation invariance policy $\nu$, let $R^\nu(\sbb) = \EE_{\ab \sim \nu} R(\sbb, \ab)$, and $\PP^\nu(\sbb'|\sbb) = \EE_{\ab \sim \nu} \PP(\sbb'|\sbb, \ab)$ denote the key elements of the induced Markov reward process. 
One can clearly see that from permutation invariance defined in \eqref{eq:system}, we have that both $R^\nu$ and $\PP^\nu(\sbb'|\sbb)$ is permutation invariant. 
We define the $k$-step value function $V^\nu_k$ as the expected reward from time $0$ to time $k$, then:
\begin{align*}
V^\nu_1 (\sbb) &= R^\nu (\sbb) ,\\
V^\nu_k (\sbb) & = R^\nu(\sbb) + \gamma \EE_{s' \sim \PP^\nu(\cdot|s)} V^\nu_{k-1} (\sbb'), ~~ \forall k > 1.
\end{align*}
We can see from above recursion that $V^\nu_k$ is permutation invariant for all $k\geq 1$. From $V^\nu = \lim_{k \to \infty} V^\nu_k$, we can conclude that $V^\nu$ is also permutation invariant. 

For permutation invariance of Q-function, recall that $Q^\nu(\sbb, \ab)$ is the unique solution of the following Bellman evaluation equation
\begin{align}
Q(\sbb, \ab) &= \EE_{\sbb' \sim \PP(\cdot| \sbb, \ab)} \cbr{r(\sbb, \ab) + \EE_{\ab' \sim \nu(\cdot| \sbb') } Q(\sbb', \ab') } \nonumber \\
&= r(\sbb, \ab) + \sum_{\sbb', \ab'} \PP(\sbb'|\sbb, \ab) Q(\sbb', \ab') \nu(\sbb' \ab') \label{eq:qeval}.
\end{align}
From the permutation invariance of $r$, $\PP$ and $\nu$, we have $Q^\nu$ is also a solution to \eqref{eq:qeval}, as
\begin{align*}
Q(\kappa(\sbb), \kappa(\ab)) = r(\sbb, \ab) + \sum_{\sbb', \ab'} \PP(\sbb'|\sbb, \ab) Q(\kappa(\sbb'), \kappa(\ab')) \nu(\sbb', \ab').
\end{align*}
Since $Q^\nu$ is the unique solution to \eqref{eq:qeval}, we have $Q^\nu(\sbb, \ab) = Q^\nu(\kappa(\sbb), \kappa(\ab))$.

\end{proof}

\begin{proof}[Proof of Proposition \ref{prop:val_func}]
The proof for showing $V^\nu(s, \sbb_r) = V^\nu(s, \kappa(\sbb_r))$ follows the exact same ingredients as in the proof for Proposition \ref{prop:invariant_policy}. 
Following Theorem 11 in \citet{bloem2019probabilistic}, there exists a function $h_\nu(s, \eta, \sum_{s' \in \sbb_r} \frac{1}{|\sbb_r|} \delta_{s'})$ such that $Y \overset{a.s.}{=} h_\nu$, where $\eta \sim \mathrm{Unif}[0,1]$. Then Proposition \ref{prop:val_func} follows from letting $g_\nu(s,  \sum_{s' \in \sbb_r} \frac{1}{|\sbb_r|} \delta_{s'}) = \EE_\eta h_\nu(\eta,  \sum_{s' \in \sbb_r} \frac{1}{|\sbb_r|} \delta_{s'}) $.

\end{proof}

\section{Proofs in Section \ref{sec:algo}}
\begin{proof}[Proof of Proposition \ref{prop:search_space}]
For any permutation invariant actor/critic function  $F(s, \sbb, \oa) = F(s, \kappa(\sbb), \oa)$,
we consider the size of  tabular representation for such functions and denote it as the size for the search space.
We say $\sbb$ and $\sbb'$ are \textit{permutation equivalent} if $\sbb = \kappa(\sbb')$ for some permutation mapping $\kappa$.
We can observe that (i) the permutation equivalent is a valid equivalent relation defined over the space of all possible $\sbb$; (ii) $\sbb$ and $\sbb'$ are permutation equivalent if and only if for every value $v$ in $\sbb$, $v$ occurs the same number of times in $\sbb$ and $\sbb'$. 
One can verify that for $\sbb$ containing $N$ elements, with each element $s'$ taking values in $\cS$, then the number of equivalence classes over the space of all possible $\sbb$,  induced by \textit{permutation equivalent} relation is  $\sum_{k=1}^{\min\{|\cS|, N\}} \binom{N-1}{k-1} \binom{|\cS|}{k}$. The claim of Proposition \ref{prop:search_space} follows immediately. 
\end{proof}

\begin{proof}[Proof of Proposition \ref{update:poly_regression}]
  Note that the maximization problem can be solved separately for each $s, \mathrm{d}_\cS$.
  Hence we can equivalently solve
\begin{align*}
\overline{\pi}(\cdot|s, \mathrm{d}_\cS) \max_{\pi(\cdot|s, \mathrm{d}_\cS) \in \RR^\cA} \sbr{ \inner{F^{Q}_{\theta_k}(s, \mathrm{d}_\cS, \cdot)}{\pi(\cdot | s, \mathrm{d}_\cS)} - \upsilon_k \mathrm{KL} \rbr{ \pi_\alpha(\cdot|s, \mathrm{d}_\cS) \| \pi(\cdot|s, \mathrm{d}_\cS)} },
\end{align*}
which is an optimization problem over finite-dimensional parameter $\pi(\cdot|s, \mathrm{d}_\cS) \in \RR^{\overline{\cA}}$.
Setting the derivative w.r.t. $\pi(\oa|s, \mathrm{d}_\cS)$ for each $\oa\in \overline{\cA}$ equal to zero, we have
\begin{align*}
F^{Q}_{\theta_k}(s, \mathrm{d}_\cS, \oa)  - \upsilon_k (\log\rbr{\frac{\underline{\pi}}{\pi_k(\oa|s, \mathrm{d}_\cS)}} - 1) = 0,
\end{align*}
or equivalently, $\overline{\pi}(a|s, \mathrm{d}_\cS)  \propto \pi_k(a|s, \mathrm{d}_\cS) \exp   \{\upsilon_k^{-1}F^{Q}_{\theta_k}(s, \mathrm{d}_\cS, \oa)   \}$. Plug in the definition of $\pi_{k} \propto \exp \rbr{\tau_k^{-1} F^{A}_{\alpha_k}}$ completes the proof.
\end{proof}

\section{Proofs in Section \ref{sec:theory}}

We provide a unified analysis of the policy evaluation and policy improvement step.
The policy evaluation and improvement steps can be described as minimizing the following mean-squared loss:
\begin{align}\label{opt:generic}
\beta(t+1) = \argmin_{\beta \in \mathbb{B}(R_\beta, \beta_0) } \EE_{(s, \mathrm{d}_\cS, a ) \sim \rho} \sbr{( F_{\beta}(s, \mathrm{d}_\cS, \oa) - \zeta_{F_\beta}(s, \mathrm{d}_\cS, \oa) )^2 } .
\end{align}
where the operator $\zeta$ maps function $F_\beta$ to 
\begin{align}\label{def:operator}
\zeta_{F_\beta}(s, \mathrm{d}_\cS, \oa) = \EE_{s', \mathrm{d}_\cS' \sim \PP(\cdot|s, \mathrm{d}_\cS, \oa),\oa' \sim \pi(\cdot|s', \mathrm{d}_\cS')} \sbr{\tau \xi(s, \mathrm{d}_\cS, \oa) + \mu F_{\beta}(s', \mathrm{d}_\cS', \oa')}.
\end{align}
Formulation \eqref{opt:generic} includes policy improvement and policy evaluation as special cases:

\noindent \textbf{Policy Improvement}. This corresponds to $\rho = \tilde{\sigma}_k$,  
$ \xi(s, \mathrm{d}_\cS, \oa) =  \tau_{k+1}( \upsilon_k^{-1} F^{Q}_{\beta_k}(s, \mathrm{d}_\cS, \oa) + \tau_k^{-1} F^A_{\alpha_k}(s, \mathrm{d}_\cS, \oa))$, $\tau = 1, \mu = 0$,

\noindent \textbf{Policy Evaluation}. This corresponds to $\rho = \sigma_k$, $\tau = 1-\gamma$, $\mu = \gamma$, $\xi(s, \mathrm{d}_\cS, \oa) = r(s, \mathrm{d}_\cS, \oa)$.
Note that $\zeta_{F_\theta}(s, \mathrm{d}_\cS, \oa) $ equals to the Bellman operator: $\zeta_{F_\theta}(s, \mathrm{d}_\cS, \oa) = \cT^\pi F_\theta(s, \mathrm{d}_\cS, \oa)$.

To solve  problem \eqref{opt:generic}, we consider the following generic update rule:
\begin{align}
 \beta(t+1/2) & = \rbr{F_{\beta(t)}(s, \mathrm{d}_\cS, \oa) - \tau \xi(s, \mathrm{d}_\cS, \oa) - \mu F_{\beta(t)}(s', \mathrm{d}_\cS', \oa') } \nabla_{\beta} F_{\beta(t)}(s, \mathrm{d}_\cS, \oa) ,\label{generic:grad} \\ 
 \beta(t+1) & = \Pi_{\mathbb{B}^0(R_\beta)} (\beta(t+1/2)) \label{generic:proj}.
\end{align}
where $(s, \mathrm{d}_\cS, \oa) \sim \rho, (s', \mathrm{d}_\cS') \sim \PP(\cdot| s, \mathrm{d}_\cS, \oa),\oa' \sim \pi(\cdot|s', \mathrm{d}_\cS')$.

Instead of knowing $s, \mathrm{d}_\cS$ exactly, we only have access to $s, \mathrm{d}_\cS$ via $N$ independent samples represented as the states of $N$ agents. Hence, we perform the following update:
\begin{align}
 \beta(t+1/2) & = \rbr{F_{\beta(t)}(s, \mathbf{s}, \oa) - \tau \xi(s, \mathrm{d}_\cS, \oa) - \mu F_{\beta(t)}(s', \mathbf{s'}, \oa') } \nabla_{\beta} F_{\beta(t)}(s, \mathbf{s}, \oa) ,\label{agent:grad} \\ 
 \beta(t+1) & = \Pi_{\mathbb{B}^0(R_\beta)} (\beta(t+1/2)). \label{agent:proj}
\end{align}
where $(s, \mathrm{d}_\cS, \oa) \sim \rho, (s', \mathrm{d}_\cS' ) \sim \PP(\cdot| s, \mathrm{d}_\cS, \oa), \oa' \sim \pi(\cdot|s', \mathrm{d}_\cS')$; and $\mathbf{s} \iid \mathrm{d}_\cS, \mathbf{s}' \iid \mathrm{d}_\cS'$.

\noindent \textbf{Policy Improvement Updates}. For $\rho = \tilde{\sigma}_k$, $\tau = 1, \mu = 0, R_\beta= R_\alpha$ and $\xi(s, \mathrm{d}_\cS, \oa) =  \tau_{k+1}( \beta_k^{-1} F^{Q}_{\theta_k} (s, \mathrm{d}_\cS, \oa) + \tau_k^{-1} F^A_{\alpha_k}(s, \mathrm{d}_\cS, \oa))$, we recover the policy improvement update in Algorithm \ref{alg:evaluation}.

\noindent \textbf{Policy Evaluation Updates}. For $\rho = \sigma_k$, $\tau = (1-\gamma), \mu = \gamma, \xi(s, \mathrm{d}_\cS, \oa) = r(s, \mathrm{d}_\cS, \oa), R_\beta = R_\theta$, we recover the policy evaluation update in Algorithm  \ref{alg:improvement}.

We define a few more notations before we proceed
\begin{align*}
\text{(residual):} & ~~~\delta_{\beta(t)}(s, \mathrm{d}_\cS, \oa, s', \mathrm{d}_\cS', \oa') = F_{\beta(t)}(s, \mathrm{d}_\cS, \oa) - \tau \xi(s, \mathrm{d}_\cS, \oa) - \mu F_{\beta(t)}(s', \mathrm{d}_\cS', \oa'), \\
\text{(stochastic semi-gradient):} & ~~~ g_{\beta(t)}(s, \mathrm{d}_\cS, \oa, s', \mathrm{d}_\cS', \oa') = \delta(s, \mathrm{d}_\cS, \oa, s', \mathrm{d}_\cS', \oa') \nabla_\beta  F_{\beta(t)}(s, \mathrm{d}_\cS, \oa), \\
\text{(population semi-gradient):} & ~~~ \overline{g}_{\beta(t)} = \EE_{s, \mathrm{d}_\cS, \oa, s', \mathrm{d}_\cS',\oa'} g(s, \mathrm{d}_\cS, \oa, s', \mathrm{d}_\cS', \oa'),
\end{align*}
where $(s, \mathrm{d}_\cS, \oa) \sim \rho, ( s', \mathrm{d}_\cS') \sim \PP(\cdot| s, \mathrm{d}_\cS, \oa),\oa' \sim \pi_k(\cdot | s', \mathrm{d}_\cS')$.

We also define the associated finite sample approximation:
\begin{align*}
\hat{\delta}_{\beta(t)}(s, \as, \oa, s',\as', \oa') & = F_{\beta(t)}(s, \mathbf{s}, \oa) - \tau \xi(s, \mathrm{d}_\cS, \oa) - \mu F_{\beta(t)}(s, \as', \oa'), \\ 
\hat{g}_{\beta(t)}(s, \as, a,s',  \as', \oa') & = \hat{\delta}(s, \as, \oa, s',  \as', \oa')  \nabla_\beta F_{\beta(t)}(s, \mathbf{s}, \oa),
\end{align*}
where $\as \stackrel{\mathrm{i.i.d.}}{\sim} \mathrm{d}_\cS,  \as' \stackrel{\mathrm{i.i.d.}}{\sim}  \mathrm{d}_\cS'$.
Note that given  $\hat{g}$ is not an unbiased estimator of $g$, as $\EE_{\as, \as'} \hat{g}(s, \as, \oa, s', \as', \oa') \neq g(s, \mathrm{d}_\cS, \oa, s', \mathrm{d}_\cS', \oa')$.

\subsection{Linearization at Initialization}
For a two layer ReLU network defined as $f_{\beta, u} (s, x, a ) = \frac{1}{\sqrt{m} }\sum_{j=1}^m u_j \sigma(\beta_j^\top(s, x, a))$, where $\sigma(x) = \max\{0,x\}$, we define 
its linearization at the initialization
\begin{align*}
f^0_{\beta} (s, x, a) =  \frac{1}{\sqrt{m}} \sum_{j=1}^m u_j  \indic \{\beta_j(0)^\top (s,x, a)>0\} \beta_j^\top(s,x, a).
\end{align*}
Note that $f^0_{\beta} (s, x, \oa)  = \inner{\nabla_{\beta} f_{\beta(0)} (s, x, a )}{\beta}$, which equivalently means $f^0_{\beta}$ is the local linearization of $f_{\beta}$ at its initialization $\beta(0)$ (note we do not train the second layer $u$).

We define
\begin{align*}
F^0_{\beta} (s, \as, \oa)  = \frac{1}{N} \sum_{i=1}^N f^0_{\beta} (s, s_i, \oa), ~~~F^0_{\beta} (s, \mathrm{d}_\cS, \oa)  = \EE_{x \sim \mathrm{d}_\cS} f^0_{\beta} (s, x, \oa),
\end{align*}
and similarly define $\delta^0, g^0 , \overline{g}^0, \hat{\delta}^0$, and $\hat{g}^0 $,  with everything related to $F_{\beta} ,F_{\beta}$ replaced by $F^0_{\beta}, F^0_{\beta}$.

Our main objectives are to show that the objective in \eqref{opt:generic} can be replaced by replacing everything related to $F$ with its local linearization $F^0$, and the linearized stochastic semi-gradient $\hat{g}^0 $ remains close to the population semi-gradient  $\overline{g}$ when the number of hidden units $m$ and number of agents are large enough.
By doing so, we can conclude that learning with linearized stochastic semi-gradient $\hat{g} $ is sufficient to solve \eqref{opt:generic} to high accuracy, when the number of hidden units $m$ and number of agents are large enough.
For the ease of exposition, we also make the following assumptions,

\begin{assumption}\label{assump:bounded_space}
We assume $\norm{(s, x, \oa)}_2 \leq 1$ for all $(s, x, \oa) \in \cS \times \cS \times \cA$.
\end{assumption}

\begin{assumption}\label{assump:reward}
The function $\xi(s, \mathrm{d}_\cS, \oa)$ satisfies:  for each $(s, \mathrm{d}_\cS) \in \overline{\cS} $ and any action $a$, we have
$(\xi(s, \mathrm{d}_\cS, \oa))^2 \leq \EE_{x \sim \mathrm{d}_\cS} \tau_1  (f_{\beta(0)} (s,x, \oa))^2 + \tau_2 R_\beta^2 + \tau_3$.
\end{assumption}

\textbf{Remark}: We will  verify that with   $(\tau_1, \tau_2, \tau_3) = (4,4,0)$, Assumption \ref{assump:reward} holds for the policy evaluation step. With $(\tau_1, \tau_2, \tau_3) = (0,0, \overline{r}^2)$,  Assumption \ref{assump:reward}  holds for the policy improvement step.

We first bound the difference between $F^0_{\beta}(s, \mathrm{d}_\cS, \oa)$ and  $F_{\beta} (s, \mathrm{d}_\cS, \oa)$.

\begin{lemma}\label{lemma:func_diff}
With Assumption \ref{assump:mild} and \ref{assump:bounded_space}, we have
\begin{align*}
\EE_{\mathrm{init}, \rho} |F^0_{\beta}(s, \mathrm{d}_\cS, \oa) - F_{\beta} (s, \mathrm{d}_\cS, \oa)|^2 \leq  \cO (R_\beta^3 m^{-1/2}).
\end{align*}
\end{lemma}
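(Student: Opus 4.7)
The plan is to reduce the claim to an NTK-style per-sample bound, then control the probability of the neuron-wise sign-flip events via Assumption~\ref{assump:mild}. As a preliminary, two applications of Jensen's inequality give
\begin{align*}
|F^0_\beta(s, \mathrm{d}_\cS, \oa) - F_\beta(s, \mathrm{d}_\cS, \oa)|^2 \leq \EE_{x \sim \mathrm{d}_\cS}|f^0_\beta(s, x, \oa) - f_\beta(s, x, \oa)|^2,
\end{align*}
so it suffices to bound $\EE_{\init, \tilde\rho}|f^0_\beta - f_\beta|^2$, where $\tilde\rho$ is the joint distribution of $z := (s, x, \oa)$ induced by $(s, \mathrm{d}_\cS, \oa) \sim \rho$ together with $x \sim \mathrm{d}_\cS$.

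Writing $\bar\beta_j := \beta_j - \beta_{0,j}$ and using $\sigma(y) = \indic\{y > 0\}\, y$, the neuron-wise decomposition
\begin{align*}
f_\beta(s,x,\oa) - f^0_\beta(s,x,\oa) = \frac{1}{\sqrt{m}} \sum_{j=1}^m u_j\, (\beta_j^\top z)\, \left( \indic\{\beta_j^\top z > 0\} - \indic\{\beta_{0,j}^\top z > 0\} \right)
\end{align*}
has nonzero summands only on the sign-flip event $A_j := \{\mathrm{sgn}(\beta_j^\top z) \neq \mathrm{sgn}(\beta_{0,j}^\top z)\}$. On $A_j$, $\beta_j^\top z$ and $\beta_{0,j}^\top z$ have opposite signs, so both are bounded in absolute value by $|\bar\beta_j^\top z| \leq \norm{\bar\beta_j}_2 \norm{z}_2 \leq \norm{\bar\beta_j}_2$ under Assumption~\ref{assump:bounded_space}. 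Combining $|u_j| = 1$ with Cauchy--Schwarz over $j$ and the constraint $\sum_j \norm{\bar\beta_j}_2^2 \leq R_\beta^2$ yields
\begin{align*}
|f_\beta(s,x,\oa) - f^0_\beta(s,x,\oa)|^2 \leq \frac{1}{m}\left(\sum_{j=1}^m \norm{\bar\beta_j}_2 \indic\{A_j\}\right)^{\!2} \leq \frac{R_\beta^2}{m} \sum_{j=1}^m \indic\{A_j\}.
\end{align*}

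For the expectation, note $A_j \subseteq \{|\beta_{0,j}^\top z| \leq \norm{\bar\beta_j}_2\}$. Conditioning on the initialization (so that $\beta_{0,j}$ and $\bar\beta_j$ are fixed) and invoking Assumption~\ref{assump:mild} with the dummy vector therein instantiated as $\beta_{0,j}$ and with $t = \norm{\bar\beta_j}_2$ gives $\EE_{\tilde\rho}\indic\{A_j\} \leq c\norm{\bar\beta_j}_2 / \norm{\beta_{0,j}}_2$. Cauchy--Schwarz over $j$ followed by Jensen then produce
\begin{align*}
\EE_\init \sum_{j=1}^m \frac{\norm{\bar\beta_j}_2}{\norm{\beta_{0,j}}_2} \leq R_\beta\, \EE_\init \sqrt{\sum_{j=1}^m \norm{\beta_{0,j}}_2^{-2}} \leq R_\beta\sqrt{m\, \EE[\norm{\beta_{0,1}}_2^{-2}]} = O(R_\beta \sqrt{m}),
\end{align*}
where the last estimate uses the standard inverse-moment bound $\EE[\norm{\beta_{0,1}}_2^{-2}] = d/(d-2) = O(1)$ for the rescaled chi-squared variable $d\norm{\beta_{0,1}}_2^2 \sim \chi^2_d$ (valid whenever $d \geq 3$). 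Chaining all estimates delivers $\EE_{\init,\rho}|F_\beta - F^0_\beta|^2 \leq (R_\beta^2/m) \cdot O(R_\beta \sqrt{m}) = O(R_\beta^3 / \sqrt{m})$, as claimed. The one delicate point is the dependence of $\bar\beta_j$ on the random initialization (since $\beta \in \mathbb{B}(\beta_0, R_\beta)$), but the Cauchy--Schwarz step above uses only the deterministic constraint $\sum_j \norm{\bar\beta_j}_2^2 \leq R_\beta^2$ and therefore remains valid even when $\beta$ is chosen adversarially given the initialization.
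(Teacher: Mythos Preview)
Your proof is correct and follows essentially the same route as the paper: reduce to the per-sample function $f_\beta$ via Jensen, bound the sign-flip contribution by $\|\bar\beta_j\|_2$, apply Cauchy--Schwarz over neurons to extract the $R_\beta^2/m$ factor, invoke Assumption~\ref{assump:mild} to control the flip probability by $c\|\bar\beta_j\|_2/\|\beta_{0,j}\|_2$, and finish with a second Cauchy--Schwarz together with the inverse-moment bound on $\|\beta_{0,j}\|_2^{-2}$. You are in fact slightly more explicit than the paper, which stops at $(\sum_j \|\beta_j(0)\|_2^{-2})^{1/2}$ and asserts the $\cO(m^{1/2})$ order without spelling out the $\chi^2_d$ computation or the $d\geq 3$ caveat, and which does not comment on the possible dependence of $\bar\beta_j$ on the initialization that you flag and resolve.
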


\begin{proof}
Be definition of $F^0_{\beta}(s, \mathrm{d}_\cS, \oa)$ and $F_{\beta} (s, \mathrm{d}_\cS, \oa)$ and Jensen's inequality, we have
\begin{align*}
\EE_{\mathrm{init}, s, \mathrm{d}_\cS, \oa } |F^0_{\beta}(s, \mathrm{d}_\cS, \oa) - F_{\beta} (s, \mathrm{d}_\cS, \oa)| & \leq  \EE_{\mathrm{init}, s, \mathrm{d}_\cS, \oa, x} |f^0_{\beta}(s, x, \oa) - f_{\beta} (s, x, \oa)|.
\end{align*}
We can further bound the right hand side with
\begin{align*}& 
\EE_{\mathrm{init}, s, \mathrm{d}_\cS, \oa, x} |f^0_{\beta}(s, x, \oa) - f_{\beta} (s, x, \oa)|  \\
= & \EE_{\mathrm{init}, s, \mathrm{d}_\cS, \oa, x}  \frac{1}{\sqrt{m}} \sum_{j=1}^m | \indic \{\beta_j(0)^\top (s, x, \oa) > 0 \} - \indic \{\beta_j^\top (s, x, \oa) >0\} |\beta_j^\top (s, x, \oa)| \\
\leq &  \EE_{\mathrm{init}, s, \mathrm{d}_\cS, \oa, x}  \frac{1}{\sqrt{m}} \sum_{j=1}^m   \abs{\indic \{\beta_j(0)^\top (s, x, \oa) > 0 \} - \indic \{\beta_j^\top (s, x, \oa) >0\} }
 \norm{\beta_j - \beta_j(0)}_2,
\end{align*}
where the last inequality comes from that fact that $\indic \{\beta_j(0)^\top (s, x, \oa) > 0 \} \neq \indic \{\beta_j^\top (s, x, \oa) >0\}$ implies $\abs{\beta_j^\top (s, x, \oa)} \leq   \abs{(\beta_j - \beta_j(0))^\top (s, x, \oa)} \leq    \norm{\beta_j - \beta_j(0)}_2 \norm{(s, x, \oa)}_2 \leq  \norm{\beta_j - \beta_j(0)}_2$.
Applying Cauchy Schwartz inequality, we have
\begin{align*}
& \EE_{\mathrm{init}, s, \mathrm{d}_\cS, a } |F^0_{\beta}(s, \mathrm{d}_\cS, \oa) - F_{\beta} (s, \mathrm{d}_\cS, \oa)|^2  \\ \leq 
 & \EE_{\mathrm{init}, s, \mathrm{d}_\cS, \oa, x}  \frac{1}{m} \rbr{\sum_{j=1}^m   | \indic \{\beta_j(0)^\top (s, x, \oa) > 0 \} - \indic \{\beta_j^\top (s, x, \oa) >0\} }
\rbr{\sum_{j=1}^m \norm{\beta_j - \beta_j(0)}_2^2}  \\
\leq &  \EE_{\mathrm{init}, s, \mathrm{d}_\cS, \oa, x}  \frac{R_\beta^2}{m} \sum_{j=1}^m  \indic \{ \abs{\beta_j(0)^\top (s, x, \oa)} < \norm{\beta_j(0) - \beta_j}_2\}  \\
\leq & \EE_{\mathrm{init}} \frac{cR_\beta^2}{m} \sum_{j=1}^m \frac{ \norm{\beta_j(0) - \beta_j}_2}{\norm{\beta_j(0)}_2} \\
\leq & \EE_{\init}  \frac{cR_\beta^2}{m} \rbr{\sum_{j=1}^m \norm{\beta_j(0) - \beta_j}_2^2}^{1/2}  \rbr{\sum_{j=1}^m \norm{\beta_j(0)}_2^{-2}}^{1/2}  =  \cO \rbr{\frac{cR_\beta^3}{m^{1/2}}},
\end{align*}
where in the third inequality we use Assumption \ref{assump:mild}.

\end{proof}

Next, we bound the difference between  $\pG$ and  and $\pLG$.

\begin{lemma}\label{lemma:diff_pg}
With Assumption  \ref{assump:mild}, \ref{assump:bounded_space} and \ref{assump:reward}, we have
\begin{align*}
\EE_{\mathrm{init}}\norm{\pG_\beta - \pLG_\beta}_2^2 \leq  \cO\rbr{ \frac{R_\beta^3}{m^{1/2}}}  .
\end{align*}
\end{lemma}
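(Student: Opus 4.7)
The plan is to write
\begin{align*}
g_\beta - g^0_\beta = \underbrace{(\delta_\beta - \delta^0_\beta)\,\nabla_\beta F_\beta}_{\text{(I)}} + \underbrace{\delta^0_\beta\,(\nabla_\beta F_\beta - \nabla_\beta F^0_\beta)}_{\text{(II)}},
\end{align*}
take expectations over $(s,\mathrm{d}_\cS,\oa,s',\mathrm{d}_\cS',\oa')$ to obtain a decomposition of $\pG_\beta - \pLG_\beta$, then apply Jensen's and Cauchy--Schwarz to push the squared norm inside the expectation and bound each factor separately. Note that since we do not train the second layer, $\nabla_\beta F^0_\beta$ is just $\nabla_\beta F_{\beta(0)}$, so the difference $\nabla_\beta F_\beta - \nabla_\beta F^0_\beta$ is driven entirely by flips in the ReLU activation pattern between $\beta$ and $\beta(0)$.

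\textbf{Step 1: control the residual term (I).} First, $\|\nabla_\beta F_\beta(s,\mathrm{d}_\cS,\oa)\|_2^2 \le \tfrac{1}{m}\sum_j \|(s,x,\oa)\|_2^2 \le 1$ uniformly (after averaging $x\sim\mathrm{d}_\cS$), using Assumption \ref{assump:bounded_space}. Next,
$\delta_\beta - \delta^0_\beta = (F_\beta - F^0_\beta)(s,\mathrm{d}_\cS,\oa) - \mu(F_\beta - F^0_\beta)(s',\mathrm{d}_\cS',\oa')$, and Lemma \ref{lemma:func_diff} bounds both of these in $L^2$ by $\cO(R_\beta^3 m^{-1/2})$ (the second one requires that $(s',\mathrm{d}_\cS',\oa')$ has marginal density controlled under $\rho$ and the transition, which follows from Assumption \ref{assump:mild}). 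Combining via Cauchy--Schwarz in the inner product against $\nabla_\beta F_\beta$ yields the bound $\EE_{\init}\|\EE[(\mathrm{I})]\|_2^2 = \cO(R_\beta^3 m^{-1/2})$.

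\textbf{Step 2: control the linearization term (II).} This is the main obstacle. Since only the first-layer weights are trained,
\begin{align*}
\nabla_\beta F_\beta - \nabla_\beta F^0_\beta = \frac{1}{\sqrt m N}\sum_{i,j} u_j\bigl(\indic\{\beta_j^\top(s,s_i,\oa)>0\} - \indic\{\beta_j(0)^\top(s,s_i,\oa)>0\}\bigr)(s,s_i,\oa),
\end{align*}
(interpreted blockwise in $j$). A flip of the $j$-th indicator forces $|\beta_j(0)^\top(s,s_i,\oa)| \le \|\beta_j-\beta_j(0)\|_2$, so by Assumption \ref{assump:mild} this event has probability at most $c\|\beta_j-\beta_j(0)\|_2/\|\beta_j(0)\|_2$ under $(s_i\sim\mathrm{d}_\cS,\oa\sim\pi)$. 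Taking the squared norm, passing to expectations, and applying Cauchy--Schwarz together with the bound $\sum_j\|\beta_j-\beta_j(0)\|_2^2\le R_\beta^2$ and $\EE_{\init}\sum_j\|\beta_j(0)\|_2^{-2}=\cO(m)$ (since $\beta_{0,j}\sim\cN(0,I_d/d)$) gives
\begin{align*}
\EE_{\init}\bigl\|\nabla_\beta F_\beta - \nabla_\beta F^0_\beta\bigr\|_2^2 \le \cO(R_\beta^3 m^{-1/2}),
\end{align*}
by the same computation style as in Lemma \ref{lemma:func_diff}. For the $\delta^0_\beta$ factor, we use $(\delta^0_\beta)^2 \lesssim (F^0_\beta)^2 + \tau^2 \xi^2 + \mu^2 (F^0_\beta)^2$, and Assumption \ref{assump:reward} together with $\EE_{\init}(F^0_\beta(s,\mathrm{d}_\cS,\oa))^2 = \cO(R_\beta^2 + \EE_{\init,x}(f_{\beta(0)}(s,x,\oa))^2)$ keeps this quantity $\cO(R_\beta^2 + 1)$ in expectation. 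Cauchy--Schwarz on (II) then yields $\EE_{\init}\|\EE[(\mathrm{II})]\|_2^2 = \cO(R_\beta^3 m^{-1/2})$.

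\textbf{Step 3: combine.} Applying $(a+b)^2 \le 2a^2 + 2b^2$ to the two contributions and using Jensen's inequality to pull the expectation over $(s,\mathrm{d}_\cS,\oa,s',\mathrm{d}_\cS',\oa')$ outside the squared norm, we obtain $\EE_{\init}\|\pG_\beta - \pLG_\beta\|_2^2 = \cO(R_\beta^3 m^{-1/2})$, as claimed. The trickiest bookkeeping is in Step 2, where care is needed to handle each block $j$ independently before invoking Cauchy--Schwarz, so that the activation-flip probability produced by Assumption \ref{assump:mild} can be matched against $\|\beta_j-\beta_j(0)\|_2$ rather than the aggregate radius $R_\beta$; otherwise one loses a factor of $m^{1/2}$ and the bound degrades.
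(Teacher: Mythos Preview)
Your decomposition and overall strategy match the paper's proof exactly: split $g_\beta-g_\beta^0$ into a residual-difference term and a gradient-difference term, control (I) via Lemma~\ref{lemma:func_diff} and $\|\nabla_\beta F_\beta\|_2\le 1$, and control (II) via the activation-flip argument combined with a second-moment bound on $\delta^0_\beta$.

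There is, however, a genuine gap in Step~2. After Cauchy--Schwarz over the sample distribution you obtain
\[
\bigl\|\EE[(\mathrm{II})]\bigr\|_2^2 \;\le\; \EE_{s,\mathrm{d}_\cS,\oa,\ldots}\bigl[(\delta^0_\beta)^2\bigr]\cdot \EE_{s,\mathrm{d}_\cS,\oa}\bigl[\|\nabla_\beta F_\beta-\nabla_\beta F^0_\beta\|_2^2\bigr],
\]
and you then take $\EE_{\init}$ of each factor \emph{separately}. But both factors depend on the random initialization $\beta(0)$: the first through $f_{\beta(0)}^2$ (which is not deterministically $\cO(1)$; a crude bound gives $\sum_j\|\beta_j(0)\|_2^2$, of order $m$), and the second through $(\sum_j\|\beta_j(0)\|_2^{-2})^{1/2}$. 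They are correlated and cannot be factored. The paper handles this by keeping the product intact and expanding
\[
f_{\beta(0)}(s,x,\oa)^2=\frac{1}{m}\sum_{j}\sigma^2(\beta_j(0)^\top(s,x,\oa))+\frac{1}{m}\sum_{k\neq l}u_ku_l\,\sigma(\beta_k(0)^\top(s,x,\oa))\sigma(\beta_l(0)^\top(s,x,\oa)),
\]
then using that the second-layer signs $u_k$ are independent of $\beta(0)$ with $\EE_{\init}[u_ku_l]=0$ for $k\neq l$. This kills the off-diagonal terms and leaves $\frac{1}{m}\sum_j\sigma_j^2\le\frac{1}{m}\sum_j\|\beta_j(0)\|_2^2$, which can now be paired with $(\sum_j\|\beta_j(0)\|_2^{-2})^{1/2}$ via a single Cauchy--Schwarz over $\EE_{\init}$. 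Your write-up omits this decoupling step, and without it the argument does not close.

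A minor point: your claimed order $\EE_{\init}\|\nabla_\beta F_\beta-\nabla_\beta F^0_\beta\|_2^2=\cO(R_\beta^3 m^{-1/2})$ overcounts by $R_\beta^2$. Unlike in Lemma~\ref{lemma:func_diff}, the squared gradient difference has no prefactor $\sum_j\|\beta_j-\beta_j(0)\|_2^2$; the correct bound is $\cO(R_\beta m^{-1/2})$. The extra powers of $R_\beta$ in the final estimate come from the $\delta^0_\beta$ factor, not from the gradient term.
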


\begin{proof}
%From definition we have
%\begin{align*}
%\pG = \EE_{s, \mathrm{d}_\cS, \oa, s', \mathrm{d}_\cS',\oa'} (F(s, \mathrm{d}_\cS, \oa) - \tau \xi(s, \mathrm{d}_\cS, \oa) - \mu F(s', \mathrm{d}_\cS', \oa')) \nabla_{\beta} F(s, \mathrm{d}_\cS, \oa).
%\end{align*}
With decomposition
\begin{align*}
 g_\beta  - g^0_\beta= &  \rbr{F(s, \mathrm{d}_\cS, \oa) - F(s, \mathrm{d}_\cS, \oa) - \mu (F(s', \mathrm{d}_\cS', \oa') - F(s', \mathrm{d}_\cS', \oa'))} \nabla F(s, \mathrm{d}_\cS, \oa) \\
 + & \rbr{F(s, \mathrm{d}_\cS, \oa) - \tau \xi(s, \mathrm{d}_\cS, \oa) - \mu F(s', \mathrm{d}_\cS', \oa') }\rbr{\nabla_\beta F(s, \mathrm{d}_\cS, \oa) - \nabla_\beta F(s, \mathrm{d}_\cS, \oa)},
\end{align*}
we apply basic inequality $\norm{a+b}_2^2 \leq 2(\norm{a}_2^2 + \norm{b}_2^2)$ and obtain
\begin{align}\label{ineq:func_1}
\norm{\pG_\beta - \pLG_\beta}_2^2 \leq & 2 \rbr{\EE_{s, \mathrm{d}_\cS, \oa, s', \mathrm{d}_\cS',\oa'} |F(s, \mathrm{d}_\cS, \oa) - F(s, \mathrm{d}_\cS, \oa) - \mu (F(s', \mathrm{d}_\cS', \oa') - F(s', \mathrm{d}_\cS', \oa'))| \norm{\nabla F(s, \mathrm{d}_\cS, \oa)}_2}^2 + \nonumber \\
& 2 \rbr{\EE_{s, \mathrm{d}_\cS, \oa, s', \mathrm{d}_\cS',\oa'} |F(s, \mathrm{d}_\cS, \oa) - \tau \xi(s, \mathrm{d}_\cS, \oa) - \mu F(s', \mathrm{d}_\cS', \oa') | \norm{\nabla_\beta F(s, \mathrm{d}_\cS, \oa) - \nabla_\beta F(s, \mathrm{d}_\cS, \oa)}_2 }^2.
\end{align}
We have
$\nabla_{\beta} F(s, \mathrm{d}_\cS, \oa) = \frac{1}{\sqrt{m}}  \rbr{\indic \{\beta_1(0)^\top (s, x, \oa)>0\} (s, x, \oa), \ldots,  \indic \{\beta_m(0)^\top (s, x, \oa)>0\}(s, x, \oa) }$, and with the assumption that $\norm{(s, x, \oa)}_2 \leq 1$, we have $\norm{\nabla_{\beta} F(s, \mathrm{d}_\cS, \oa)}_2 \leq 1$. In addition, we have
\begin{align*}
\EE_{\init, s, \mathrm{d}_\cS, \oa, s', \mathrm{d}_\cS',\oa'} |F(s, \mathrm{d}_\cS, \oa) - F(s, \mathrm{d}_\cS, \oa) - \mu (F(s', \mathrm{d}_\cS', \oa') - F(s', \mathrm{d}_\cS', \oa'))|^2   = \cO(R_\beta^3 m^{-1/2}),
\end{align*}
as implied by Lemma \ref{lemma:func_diff}.
Hence the first term in \eqref{ineq:func_1} is of order $\cO(R_\beta^3 m^{-1/2})$.

To bound the second term in \eqref{ineq:func_1}, 
by Cauchy-Schwartz inequality we have
\begin{align*}
& \rbr{ \EE_{s, \mathrm{d}_\cS, \oa, s', \mathrm{d}_\cS',\oa'} |F(s, \mathrm{d}_\cS, \oa) - \tau \xi(s, \mathrm{d}_\cS, \oa) - \mu F(s', \mathrm{d}_\cS', \oa') | \norm{\nabla_\beta F(s, \mathrm{d}_\cS, \oa) - \nabla_\beta F(s, \mathrm{d}_\cS, \oa)}_2 }^2 \\
 \leq & \EE_{s, \mathrm{d}_\cS, \oa, s', \mathrm{d}_\cS',\oa'} |F(s, \mathrm{d}_\cS, \oa) - \tau \xi(s, \mathrm{d}_\cS, \oa) - \mu F(s', \mathrm{d}_\cS', \oa') |^2  \EE_{s, \mathrm{d}_\cS, a} \norm{\nabla_\beta F(s, \mathrm{d}_\cS, \oa) - \nabla_\beta F(s, \mathrm{d}_\cS, \oa)}_2^2.
\end{align*}

From Jensen's inequality we have
\begin{align*}
|F(s, \mathrm{d}_\cS, \oa) - \tau \xi(s, \mathrm{d}_\cS, \oa) - \mu F(s', \mathrm{d}_\cS', \oa') |^2 & \leq \EE |f^0_{\beta} (s, x, \oa) - \tau \xi(s, \mathrm{d}_\cS, \oa) - \mu f^0_{\beta} (s', x', \oa') |^2 \\
& \leq 3 \EE \sbr{(f^0_{\beta} (s, x, \oa))^2 + (\tau \xi(s, \mathrm{d}_\cS, \oa))^2 +(\mu f^0_{\beta} (s', x', \oa'))^2 }.
\end{align*}
Note that $\nabla f^0_\beta = \nabla f^0_{\beta(0)}$ for all $\beta$, we have
\begin{align*}
f^0_{\beta}(s, x, \oa) \leq  f^0_{\beta(0)} (s, x, \oa)  + \norm{\nabla_{\beta} f^0_{\beta}}_2 \norm{\beta - \beta(0)}_2 \leq   f^0_{\beta(0)} (s, x, \oa)  + R_\beta.
\end{align*}
We have 
$3 \rbr{f^0_{\beta} (s, x, \oa)^2 +(\mu f^0_{\beta} (s', x', \oa'))^2 } \leq 6 \rbr{f_{\beta(0)} (s, x, \oa)^2 +(\mu f_{\beta(0)} (s', x', \oa'))^2 } + 12 R_\beta^2 $.
Note that from Assumption \ref{assump:reward}:  $\xi(s, \mathrm{d}_\cS, \oa)^2 \leq \tau_1  \EE_{s, a} \sbr{(f_{\beta(0)} (s, x, \oa))^2 + \tau_2 R_\beta^2 + \tau_3}$.
Hence
\begin{align}
& \EE_{s, \mathrm{d}_\cS, \oa, s', \mathrm{d}_\cS',\oa'} \abs{F(s, \mathrm{d}_\cS, \oa) - \tau \xi(s, \mathrm{d}_\cS, \oa) - \mu F(s', \mathrm{d}_\cS', \oa') }^2  \nom \\
 \leq & \EE_{s, \mathrm{d}_\cS, \oa, s', \mathrm{d}_\cS',\oa'}    \EE_{x, x'} \sbr{ 6(f_{\beta(0)} (s, x, \oa)^2 + 6\mu^2 f_{\beta(0)} (s', x', \oa')^2 + 12 R_\beta^2 + 3\tau^2 \xi(s, \mathrm{d}_\cS, \oa)^2} \nom \\
  = & \EE_{s, \mathrm{d}_\cS, x, \oa} \sbr{ 6 f_{\beta(0)} (s, x, \oa)^2 + 6\mu^2 f_{\beta(0)} (s, x, \oa)^2 + 12 R_\beta^2 + 3\tau^2 \xi(s, \mathrm{d}_\cS, \oa)^2} \nom \\
 \leq & \EE_{s, \mathrm{d}_\cS, x, \oa}  \sbr{ 6 f_{\beta(0)} (s, x, \oa)^2 + 6\mu^2 f_{\beta(0)} (s, x, \oa)^2 + 12 R_\beta^2 + 3\tau^2 \tau_1 f_{\beta(0)} (s, x, \oa))^2 + 3\tau^2 \tau_2 R_\beta^2 + 3\tau^2\tau_3} \label{ineq:first} .
\end{align}
%Now with  Assumption \ref{XXX}, we have
%\begin{align*}
%\EE_{s, a} (\xi(s, x, \oa) )^2 \leq \tau_1 \EE_{s, a} f_{\beta(0)} (s, x, \oa))^2 + \tau_2 R_\beta^2 + \tau_3.
%\end{align*}
On the other hand, we have
\begin{align}
& \EE_{s, \mathrm{d}_\cS, a} \norm{\nabla_\beta F(s, \mathrm{d}_\cS, \oa) - \nabla_\beta F(s, \mathrm{d}_\cS, \oa)}_2^2  \nom \\
\leq & \frac{1}{m}\EE_{s, \mathrm{d}_\cS, a, x}\norm{  \rbr{ \indic \{\beta_1(0)^\top (s, x, \oa)>0\} - \indic \{\beta_1^\top (s, x, \oa)>0\}, \ldots,  \indic \{\beta_m(0)^\top (s, x, \oa)>0\} - \indic \{\beta_m^\top (s, x, \oa)>0\}}}_2^2 \norm{(s, x, \oa)}_2^2  \nom \\ 
 \leq &  \EE_{s, \mathrm{d}_\cS, \oa, x }  \frac{1}{m} \sum_{j=1}^m (\indic \{\beta_j(0)^\top (s, x, \oa)>0\} - \indic \{\beta_j^\top (s, x, \oa)>0\} )^2 \nom \\
 \leq &   \EE_{s, \mathrm{d}_\cS, \oa, x}\frac{1}{m} \sum_{j=0}^m \indic \{ \beta_j(0)^\top (s, x, \oa) \leq \norm{\beta_j(0) - \beta_j}_2 \} \nom\\
 \leq & \frac{c}{m} \sum_{j=1}^m\frac{\norm{\beta_j(0) - \beta_j}_2 }{\norm{\beta_j(0)}_2} \nom \\
 \leq & \frac{c}{m} \rbr{ \Bigl\lVert \sum_{j=1}^m\beta_j (0) - \beta_j \Bigr\rVert_2^2}^{1/2} \rbr{ \sum_{j=1}^m \norm{\beta_j(0)}_2^{-2}}^{1/2}\nom \\
 \leq & \frac{cR_\beta }{m} \rbr{ \sum_{j=1}^m \norm{\beta_j(0)}_2^{-2}}^{1/2} \label{ineq:sec},
\end{align}
where in the fourth inequality we use Assumption \ref{assump:richness},  and  in the final  inequality we use $\rbr{\norm{ \sum_{j=1}^m\beta_j (0) - \beta_j}_2^2}^{1/2} \leq R_\beta$.

Combining \eqref{ineq:first} and \eqref{ineq:sec}, to bound the second term in  \eqref{ineq:func_1}, it remains to bound the following:
\begin{align*}
& \EE_{s, \mathrm{d}_\cS, \oa, x}  \cbr{f_{\beta(0)} (s, x, \oa)^2 \rbr{ \sum_{j=1}^m \norm{\beta_j(0)}_2^{-2}}^{1/2} \left(\frac{cR_\beta}{m} \right) }\\
 = &\frac{cR_\beta}{m^2} \EE_{s, \mathrm{d}_\cS, \oa, x}  \rbr{ \sum_{j=1}^m \sigma^2( \beta_j(0)^\top (s, x, \oa) )  +  \sum_{k \neq l}^m u_k u_l \sigma( \beta_k(0)^\top (s, x, \oa)) \sigma(\beta_l(0)^\top (s, x, \oa) )  }  \rbr{ \sum_{j=1}^m \norm{\beta_j(0)}_2^{-2}}^{1/2} \\
 \leq & \frac{cR_\beta}{m^2}\EE_{s, \mathrm{d}_\cS, \oa, x}   \rbr{\sum_{j=1}^m \norm{\beta_j(0)}_2^2 + \sum_{k \neq l}^m u_k u_l \sigma( \beta_k(0)^\top (s, x, \oa)) \sigma(\beta_l(0)^\top (s, x, \oa) )} \rbr{ \sum_{j=1}^m \norm{\beta_j(0)}_2^{-2}}^{1/2} ,
\end{align*}
Taking expectation with respect to initialization and noticing that $\EE_{\init} \cbr{u_k u_l }= 0$ for $k \neq l$, we have
\begin{align*}
&\frac{cR_\beta}{m^2} \EE_{\init, s, \mathrm{d}_\cS, \oa, x}  \rbr{\sum_{j=1}^m \norm{\beta_j(0)}_2^2 + \sum_{k \neq l}^m u_k u_l \sigma( \beta_k(0)^\top (s, x, \oa)) \sigma(\beta_l(0)^\top (s, x, \oa) )}   \rbr{ \sum_{j=1}^m \norm{\beta_j(0)}_2^{-2}}^{1/2} \\
= & \frac{c R_\beta}{m^2} \EE_{\init}  \rbr{\sum_{j=1}^m \norm{\beta_j(0)}_2^2 }   \rbr{ \sum_{j=1}^m \norm{\beta_j(0)}_2^{-2}}^{1/2}  \\
\leq & \frac{c R_\beta}{m^2}   \EE_{\init}^{1/2}  \rbr{\sum_{j=1}^m \norm{\beta_j(0)}_2^2 }^2   \EE_{\init}^{1/2}   \rbr{ \sum_{j=1}^m \norm{\beta_j(0)}_2^{-2}} \\
= & \cO \rbr{\frac{cR_\beta}{m^{1/2}}}.
\end{align*}
Then the the second term in \eqref{ineq:func_1} is at the order of $\cO \left( \max \{ \frac{R_\beta^3}{m^{1/2}},  \frac{R_\beta}{m^{1/2}} \} \right) = \cO \left( \frac{R_\beta^3}{m^{1/2}} \right)$.

\end{proof}
We then bound the the variance of stochastic semi-gradient $g_\beta(s, \mathrm{d}_\cS, \oa, s', \mathrm{d}_\cS', \oa')$.
%In what follows, we will denote use $\EE_{s, \mathrm{d}_\cS,a,  s', \mathrm{d}_\cS',\oa'} [\cdot]$ to denote $\EE_{(s, \mathrm{d}_\cS, \oa) \sim \rho, s', \mathrm{d}_\cS' \sim \PP(\cdot|s, \mathrm{d}_\cS, \oa),\oa' \sim \pi(\cdot|s', \mathrm{d}_\cS')} [\cdot]$.
%In addition, we use $\EE_{s, \mathrm{d}_\cS,a,  s', \mathrm{d}_\cS',\oa', \mathbf{s}, \mathbf{s}'} [\cdot]$ to denote $\EE_{s, \mathrm{d}_\cS,a,  s', \mathrm{d}_\cS',\oa'} \EE_{\mathbf{s} \sim \mathrm{d}_\cS, \mathbf{s}' \sim \mathrm{d}_\cS'}[\cdot]$
\begin{lemma}\label{lemma:var_sg}
With Assumption  \ref{assump:reward}, there exists $\varsigma^2 = \cO(R_\beta^2)$, such that:
\begin{align*}
\EE_{\init, s, \mathrm{d}_\cS, \oa, s', \mathrm{d}_\cS',\oa'} \norm{g_\beta(s, \mathrm{d}_\cS, \oa, s', \mathrm{d}_\cS', \oa') - \overline{g}_\beta}_2^2 \leq \varsigma^2.
\end{align*}
\end{lemma}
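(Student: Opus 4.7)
\textbf{Proof plan for Lemma \ref{lemma:var_sg}.} The plan is to bound the second moment of $g_\beta$ directly, since $\mathbb{E}\|g_\beta - \overline{g}_\beta\|_2^2 \leq \mathbb{E}\|g_\beta\|_2^2$. By definition, $g_\beta = \delta_\beta \cdot \nabla_\beta F_\beta(s,\mathrm{d}_\cS,\oa)$, so I will first bound the gradient-norm factor and then the residual factor separately, and combine by Cauchy–Schwarz (or just as a product, since the gradient norm is deterministically bounded).

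First, I would show $\|\nabla_\beta F_\beta(s,\mathrm{d}_\cS,\oa)\|_2 \leq 1$ almost surely. Using the Deep-Sets representation $F_\beta(s,\mathrm{d}_\cS,\oa)=\mathbb{E}_{x\sim\mathrm{d}_\cS}f_\beta(s,x,\oa)$, Jensen's inequality, and the closed form $\nabla_\beta f_\beta(s,x,\oa)=\frac{1}{\sqrt m}(u_j\indic\{\beta_j^\top (s,x,\oa)>0\}(s,x,\oa))_{j=1}^m$, Assumption \ref{assump:bounded_space} gives $\|\nabla_\beta f_\beta\|_2^2 \leq \frac{1}{m}\sum_j\|(s,x,\oa)\|_2^2 \leq 1$, hence $\|\nabla_\beta F_\beta\|_2 \leq 1$ as well. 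This reduces the task to bounding $\mathbb{E}\,\delta_\beta^2$.

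Next, applying the elementary inequality $(a+b+c)^2 \leq 3(a^2+b^2+c^2)$, I split
\begin{align*}
\mathbb{E}_{\init,\rho,\PP,\pi}\,\delta_\beta^2 \leq 3\,\mathbb{E}\,F_\beta(s,\mathrm{d}_\cS,\oa)^2 + 3\tau^2\,\mathbb{E}\,\xi(s,\mathrm{d}_\cS,\oa)^2 + 3\mu^2\,\mathbb{E}\,F_\beta(s',\mathrm{d}_\cS',\oa')^2,
\end{align*}
and handle each term. For the $F_\beta^2$ pieces, I use Jensen's inequality to pass to $\mathbb{E}_x f_\beta^2$, then write $f_\beta=f_{\beta(0)}+(f_\beta-f_{\beta(0)})$; the $1$-Lipschitzness of ReLU, Cauchy–Schwarz on the outer sum, and the constraint $\beta\in\mathbb{B}(\beta_0,R_\beta)$ give $|f_\beta-f_{\beta(0)}|\leq R_\beta\,\|(s,x,\oa)\|_2\leq R_\beta$, so $f_\beta^2\leq 2f_{\beta(0)}^2+2R_\beta^2$. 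The random-initialization term $\mathbb{E}_\init f_{\beta(0)}^2$ is $\mathcal{O}(1)$: the cross terms vanish because $u_j\sim\mathrm{Unif}\{\pm1\}$ are independent mean-zero, and each diagonal term is controlled by $\mathbb{E}_\init\|\beta_j(0)\|_2^2=1$ since $\beta_j(0)\sim\mathcal{N}(0,I_d/d)$. For the $\xi^2$ term, Assumption \ref{assump:reward} bounds it by $\tau_1\mathbb{E}_x f_{\beta(0)}^2+\tau_2 R_\beta^2+\tau_3$, whose expectation is again $\mathcal{O}(R_\beta^2)$ (with constants $\tau_1,\tau_2,\tau_3$ treated as $\mathcal{O}(1)$ per the remark following Assumption \ref{assump:reward}).

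Combining these bounds yields $\mathbb{E}_{\init,\rho,\PP,\pi}\,\delta_\beta^2 = \mathcal{O}(R_\beta^2)$, hence $\mathbb{E}\|g_\beta\|_2^2 \leq \mathbb{E}\,\delta_\beta^2 = \mathcal{O}(R_\beta^2)$, which gives the desired $\varsigma^2=\mathcal{O}(R_\beta^2)$. I do not expect any genuine obstacle here; the step that takes the most care is the second-moment bound $\mathbb{E}_\init f_{\beta(0)}^2=\mathcal{O}(1)$, since this is where one must exploit the independence of the $u_j$'s so that the $\mathcal{O}(m)$ off-diagonal terms vanish—otherwise the naive bound would blow up with the network width $m$.
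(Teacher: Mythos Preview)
Your proposal is correct and follows essentially the same route as the paper: bound the variance by the second moment of $g_\beta$, use $\|\nabla_\beta F_\beta\|_2\le 1$ to reduce to $\EE\,\delta_\beta^2$, split via $(a+b+c)^2\le 3(a^2+b^2+c^2)$, pass from $F_\beta$ to $f_\beta$ by Jensen, control $|f_\beta-f_{\beta(0)}|\le R_\beta$, invoke Assumption~\ref{assump:reward} for $\xi^2$, and handle $\EE_{\init}f_{\beta(0)}^2$ by expanding the square and using the independence of the $u_j$'s to kill the cross terms. The only cosmetic difference is that you carry the $u_j$ factor in the gradient expression (which the paper suppresses since $|u_j|=1$); this does not affect any bound.
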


\begin{proof}
We have
\begin{align*}
\EE_{s, \mathrm{d}_\cS, \oa, s', \mathrm{d}_\cS',\oa'} \norm{g_\beta(s, \mathrm{d}_\cS, \oa, s', \mathrm{d}_\cS', \oa') - \overline{g}_\beta}_2^2 & \leq  \EE_{s, \mathrm{d}_\cS, \oa, s', \mathrm{d}_\cS',\oa'} \norm{g_\beta(s, \mathrm{d}_\cS, \oa, s', \mathrm{d}_\cS', \oa') }_2^2 \\
& \leq \EE_{s, s, \mathrm{d}_\cS, s', \mathrm{d}_\cS',\oa'} \norm{ \delta(s, \mathrm{d}_\cS, \oa, s', \mathrm{d}_\cS', \oa') \nabla_\beta  F_{\beta(t)}(s, \mathrm{d}_\cS, \oa)}_2^2.
\end{align*}
With $\nabla_{\beta} F_{\beta(t)} (s, \mathrm{d}_\cS, \oa)  = \EE_{x} \nabla_{\beta} f_{\beta(t)} (s, x, \oa) =  \EE_{x} \frac{1}{\sqrt{m}} \rbr{ \indic\{\beta_j^\top(s, x, \oa) > 0\}, \ldots, \indic\{\beta_j^\top(s, x, \oa) > 0\} } (s, x, \oa)$, we have $\norm{\nabla_{\beta} F_{\beta(t)} (s, \mathrm{d}_\cS, \oa)  }_2 \leq 1$.
Then
\begin{align*}
 & \EE_{s, \mathrm{d}_\cS, \oa, s', \mathrm{d}_\cS',\oa'} \norm{g_\beta(s, \mathrm{d}_\cS, \oa, s', \mathrm{d}_\cS', \oa') - \overline{g}_\beta}_2^2  \\
 \leq & \EE_{s, s, \mathrm{d}_\cS, s', \mathrm{d}_\cS',\oa'} \sbr{ \delta^2(s, \mathrm{d}_\cS, \oa, s', \mathrm{d}_\cS', \oa')} \\
=& \EE_{s, \mathrm{d}_\cS, \oa, s', \mathrm{d}_\cS',\oa'} \rbr{F_{\beta}(s, \mathrm{d}_\cS, \oa) - \tau \xi(s, \mathrm{d}_\cS, \oa) - \mu F_{\beta}(s', \mathrm{d}_\cS', \oa')}^2 \\
 \leq & \EE_{s, \mathrm{d}_\cS, \oa, s', \mathrm{d}_\cS',\oa', x, x'} \rbr{f_{\beta}(s, x, \oa) - \tau \xi(s, \mathrm{d}_\cS, \oa) - \mu f_{\beta}(s', x', \oa')}^2 \\
 \leq  &3 \EE_{s, \mathrm{d}_\cS, \oa, x} \sbr{(1+\mu^2) \rbr{f_{\beta}(s, x, \oa) }^2 + \rbr{\tau \xi(s, \mathrm{d}_\cS, \oa)}^2} \\
\leq & 3 \EE_{s, \mathrm{d}_\cS, \oa, x} \sbr{(1+\mu^2) \rbr{f_{\beta(0)}(s, x, \oa) }^2 + \tau_1\rbr{\tau f_{\beta(0)}(s, x, \oa)}^2} + \cO(R_\beta^2) + \cO(\tau_3).
\end{align*}
where in the last inequality we use $\abs{f_{\beta(0)}(s, x, \oa) - f_{\beta}(s, x, \oa)} \leq \norm{\beta - \beta(0)}_2 \leq R_\beta$ and Assumption  \ref{assump:reward}.
Finally, note that 
\begin{align*}
\EE_{\init, s, \mathrm{d}_\cS, \oa, x} \sbr{f_{\beta(0)} (s, x, \oa)^2} & = \EE_{\init, s, \mathrm{d}_\cS, \oa, x}  \frac{1}{m} \rbr{ \sum_{j=1}^m \sigma^2( \beta_j(0)^\top (s, x, \oa) )  +  \sum_{k \neq l}^m u_k u_l \sigma( \beta_k(0)^\top (s, x, \oa)) \sigma(\beta_l(0)^\top (s, x, \oa) )  } \\
& =  \EE_{\init, s, \mathrm{d}_\cS, \oa, x}  \frac{1}{m} \rbr{ \sum_{j=1}^m \sigma^2( \beta_j(0)^\top (s, x, \oa) ) } \\
& \leq \EE_{\init} \frac{1}{m} \rbr{ \sum_{j=1}^m  \norm{\beta_j(0)}_2^2 } = 1.
\end{align*}
The claim follows immediately.
\end{proof}

We then bound the variance of $\hat{g}(s, \mathbf{s}, a, s', \mathbf{s'}, \oa')$ for fixed $\mathrm{d}_\cS, \mathrm{d}_\cS'$.
\begin{lemma}\label{lemma:var_ssg}
For any $s, \mathrm{d}_\cS, \oa, s', \mathrm{d}_\cS',\oa'$, let $\mathbf{s} \iid s, \mathrm{d}_\cS$ and $\mathbf{s}' \iid s', \mathrm{d}_\cS'$, and $|\mathbf{s}| = |\mathbf{s}'| = N$,  
we have
\begin{align*}
\EE_{\init,  \mathbf{s}, \mathbf{s}'} \norm{\hat{g}_\beta(s, \mathbf{s}, \oa, s', \mathbf{s'},\oa' ) - g_\beta(s, \mathrm{d}_\cS, \oa, s', \mathrm{d}_\cS', \oa')}_2^2 \leq \cO\rbr{\frac{R_\beta^2}{N}}.
\end{align*}
\end{lemma}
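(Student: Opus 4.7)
My plan is to exploit the fact that the ``hat'' quantities are i.i.d. empirical averages of the ``no-hat'' quantities: since $\mathbf{s}\iid\mathrm{d}_\cS$ and $\mathbf{s}'\iid\mathrm{d}_\cS'$, both $F_\beta(s,\mathbf{s},\oa)=\frac{1}{N}\sum_{i=1}^N f_\beta(s,s_i,\oa)$ and $\nabla_\beta F_\beta(s,\mathbf{s},\oa)=\frac{1}{N}\sum_{i=1}^N \nabla_\beta f_\beta(s,s_i,\oa)$ are unbiased estimators of $F_\beta(s,\mathrm{d}_\cS,\oa)$ and $\nabla_\beta F_\beta(s,\mathrm{d}_\cS,\oa)$, respectively, so their variances scale as $1/N$. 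The final bound should then drop out by a product decomposition and Cauchy--Schwarz.

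The first step is the algebraic decomposition
\begin{align*}
\hat{g}_\beta(s,\mathbf{s},\oa,s',\mathbf{s}',\oa') - g_\beta(s,\mathrm{d}_\cS,\oa,s',\mathrm{d}_\cS',\oa')
&= \bigl(\hat{\delta}_\beta - \delta_\beta\bigr)\,\nabla_\beta F_\beta(s,\mathbf{s},\oa) \\
&\quad + \delta_\beta \bigl(\nabla_\beta F_\beta(s,\mathbf{s},\oa) - \nabla_\beta F_\beta(s,\mathrm{d}_\cS,\oa)\bigr),
\end{align*}
followed by $\|a+b\|_2^2\leq 2\|a\|_2^2+2\|b\|_2^2$. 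For the first term, I use $\|\nabla_\beta F_\beta(s,\mathbf{s},\oa)\|_2\leq 1$ (as in Lemma \ref{lemma:var_sg}), reducing to bounding $\EE|\hat{\delta}_\beta-\delta_\beta|^2$. Expanding and using $\|a+b\|^2\leq 2\|a\|^2+2\|b\|^2$ again gives two terms, each of the form $\EE|F_\beta(s,\mathbf{s},\oa)-F_\beta(s,\mathrm{d}_\cS,\oa)|^2$, which equals $\tfrac{1}{N}\mathrm{Var}_{x\sim\mathrm{d}_\cS}(f_\beta(s,x,\oa))\le \tfrac{1}{N}\EE f_\beta(s,x,\oa)^2$. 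Using $|f_\beta-f_{\beta(0)}|\le R_\beta$ combined with $\EE_{\init} f_{\beta(0)}(s,x,\oa)^2=O(1)$ (already computed in the proof of Lemma \ref{lemma:var_sg}) yields $\EE_{\init,\mathbf{s}} (F_\beta(s,\mathbf{s},\oa)-F_\beta(s,\mathrm{d}_\cS,\oa))^2=\cO(R_\beta^2/N)$, and analogously for the $\mathbf{s}'$ term.

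For the second term I condition on $\init$ and on $(s,\mathrm{d}_\cS,\oa,s',\mathrm{d}_\cS',\oa')$: under this conditioning $\delta_\beta$ is deterministic while $\mathbf{s},\mathbf{s}'$ are the only remaining randomness, so
\begin{align*}
\EE_{\mathbf{s},\mathbf{s}'}\bigl\|\delta_\beta\bigl(\nabla_\beta F_\beta(s,\mathbf{s},\oa) - \nabla_\beta F_\beta(s,\mathrm{d}_\cS,\oa)\bigr)\bigr\|_2^2
= |\delta_\beta|^2 \cdot \EE_{\mathbf{s}}\bigl\|\nabla_\beta F_\beta(s,\mathbf{s},\oa)-\nabla_\beta F_\beta(s,\mathrm{d}_\cS,\oa)\bigr\|_2^2.
\end{align*}
The gradient-variance factor is at most $\tfrac{1}{N}\EE_x\|\nabla_\beta f_\beta(s,x,\oa)\|_2^2\leq \tfrac{1}{N}$ since each coordinate of $\nabla_\beta f_\beta$ is bounded in norm by $\tfrac{1}{\sqrt m}\|(s,x,\oa)\|_2\leq\tfrac{1}{\sqrt m}$, so the whole gradient has norm at most $1$. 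The residual factor $\EE|\delta_\beta|^2=\cO(R_\beta^2)$ follows directly from the computation in Lemma \ref{lemma:var_sg} (which in fact bounds $\EE\|g_\beta\|_2^2=\EE|\delta_\beta|^2\cdot\|\nabla F\|^2$ and the residual squared inherits the same $\cO(R_\beta^2)$ bound via Assumption \ref{assump:reward}). Combining gives $\cO(R_\beta^2/N)$ for this piece as well.

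Adding the two pieces yields the claimed $\cO(R_\beta^2/N)$ bound. I do not anticipate a serious obstacle: the decomposition is standard, and all variance estimates follow essentially for free from the i.i.d.\ averaging structure and the second-moment computations already carried out in Lemmas \ref{lemma:func_diff}--\ref{lemma:var_sg}. The only care needed is in the second term, where one must condition on $\mathrm{d}_\cS, \mathrm{d}_\cS'$ to separate the residual $\delta_\beta$ (which does not depend on $\mathbf{s},\mathbf{s}'$) from the fluctuating gradient, and in the first term, where one should be careful that the dependence of $\nabla_\beta F_\beta(s,\mathbf{s},\oa)$ on $\mathbf{s}$ can be absorbed by the uniform $\ell_2$-bound rather than requiring independence.
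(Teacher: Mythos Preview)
Your proposal is correct and follows essentially the same approach as the paper's proof: the same product decomposition $\hat{g}_\beta-g_\beta=(\hat{\delta}_\beta-\delta_\beta)\nabla_\beta F_\beta(s,\mathbf{s},\oa)+\delta_\beta(\nabla_\beta F_\beta(s,\mathbf{s},\oa)-\nabla_\beta F_\beta(s,\mathrm{d}_\cS,\oa))$, the same use of $\|\nabla_\beta F_\beta\|_2\le 1$ on the first piece, the same $1/N$ gradient-variance bound on the second piece, and the same appeal to the second-moment computations from Lemma~\ref{lemma:var_sg}. The only cosmetic difference is that for $\EE|\hat{\delta}_\beta-\delta_\beta|^2$ the paper pairs $(s_i,s_i')$ and treats $\hat{\delta}_\beta$ as a single i.i.d.\ average, whereas you split into separate $\mathbf{s}$ and $\mathbf{s}'$ variances; both routes give $\cO(R_\beta^2/N)$.
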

\begin{proof}
We have
\begin{align}\label{ineq:decomp}
& \hat{g}_\beta(s, \mathbf{s}, \oa, s', \mathbf{s'},\oa' ) - g_\beta(s, \mathrm{d}_\cS, \oa, s', \mathrm{d}_\cS', \oa')  \nonumber \\
= &  \hat{\delta}_\beta(s, \mathbf{s}, \oa, s', \mathbf{s'}, \oa')   \nabla_\beta \sum_{s \in \mathbf{s}} f_\beta(s, x, \oa) /N - \delta_\beta(s, \mathrm{d}_\cS, \oa, s', \mathrm{d}_\cS', \oa') \nabla_\beta \EE_{x \sim \mathrm{d}_\cS} f_\beta(s, x, \oa) \nonumber  \\
= &  \rbr{ \hat{\delta}_\beta(s, \mathbf{s}, \oa, s', \mathbf{s'}, \oa')  - \delta_\beta(s, \mathrm{d}_\cS, \oa, s', \mathrm{d}_\cS', \oa')  }  \nabla_\beta \sum_{s \in \mathbf{s}} f_\beta(s, x, \oa) /N \nonumber \\
 & + \delta_\beta(s, \mathrm{d}_\cS, \oa, s', \mathrm{d}_\cS', \oa')\rbr{    \sum_{s \in \mathbf{s}}  \nabla_\beta f_\beta(s, x, \oa) /N -  \nabla_\beta \EE_{x \sim \mathrm{d}_\cS} f_\beta(s, x, \oa) } .
\end{align}
For the first term,  note that $\norm{ \sum_{s \in \mathbf{s}}  \nabla_\beta  f_\beta(s, x, \oa) /N}_2 \leq 1$, we have
\begin{align*}
& \EE_{\mathbf{s} , \mathbf{s}'} \Bigl\lVert \rbr{ \hat{\delta}_\beta(s, \mathbf{s}, \oa, s', \mathbf{s'}, \oa')  - \delta_\beta(s, \mathrm{d}_\cS, \oa, s', \mathrm{d}_\cS', \oa')  }  \sum_{s \in \mathbf{s}}  \nabla_\beta f_\beta(s, x, \oa) /N \Bigr\rVert_2^2 \\
\leq & \EE_{\mathbf{s} , \mathbf{s}'}  \rbr{ \hat{\delta}_\beta(s, \mathbf{s}, \oa, s', \mathbf{s'}, \oa')  - \delta_\beta(s, \mathrm{d}_\cS, \oa, s', \mathrm{d}_\cS', \oa') } ^2\\
\leq & \EE_{x , x'} \rbr{f_\beta(s, x, \oa) - \tau \xi(s, \mathrm{d}_\cS, \oa) - \mu f_\beta(s', x', \oa')}^2/N \\
\leq & 3 \EE_{x , x'} \sbr{ \rbr{f_{\beta(0)}(s, x, \oa) }^2 + \mu^2 \rbr{f_{\beta(0)}(s', x', \oa') }^2+ \tau_1\rbr{\tau f_{\beta(0)}(s, x, \oa)}^2}/N + \cO(R_\beta^2/N) + \cO(\tau_3).\\
\end{align*}
where the last inequality we use the similar argument in Lemma \ref{lemma:var_sg}.
Now as we have shown in Lemma \ref{lemma:var_sg}, we have $\EE_{\init, s'} \sbr{f_{\beta(0)}(s, x, \oa)^2 } \leq 1$.
Hence the first term in  \eqref{ineq:decomp} is  at the order of $\cO(R_\beta^2/N)$.

To bound the second term in \eqref{ineq:decomp}, from Jensen's inequality we have
\begin{align*}
& \EE_\init \delta_\beta(s, \mathrm{d}_\cS, \oa, s', \mathrm{d}_\cS', \oa')^2 \\
\leq  &\EE_{\init, x, x'} \rbr{ f_{\beta}(s, x, \oa) - \tau \xi(s, \mathrm{d}_\cS, \oa) - \mu f_{\beta} (s',x', \oa')  }^2 \\
 \leq & 3 \EE_{\init, x, x'} \sbr{ \rbr{f_{\beta(0)}(s, x, \oa) }^2 + \mu^2 \rbr{f_{\beta(0)}(s', x', \oa') }^2+ \tau_1\rbr{\tau f_{\beta(0)}(s, x, \oa)}^2} + \cO(R_\beta^2) + \cO(\tau_3) \\
 = & \cO \rbr{R_\beta^2}.
\end{align*}
On the other hand, we have
\begin{align*}
\EE_{\mathbf{s}}  \Bigl\lVert   \sum_{s \in \mathbf{s}} \nabla_\beta  f_\beta(s, x, \oa) /N -  \nabla_\beta \EE_{x \sim \mathrm{d}_\cS} f_\beta(s, x, \oa) \Bigr\rVert_2^2 
\leq \EE_{x  \sim \mathrm{d}_\cS} \norm{ \nabla_\beta  f_\beta(s, x, \oa)}_2^2/N \leq 1/N.
\end{align*}

Therefore, taking square on both sides of \eqref{ineq:decomp} and using Cauchy Schwartz inequality, we have
\begin{align*}
&\EE_{\init,  \mathbf{s}, \mathbf{s}'} \norm{\hat{g}_\beta(s, \mathbf{s}, \oa, s', \mathbf{s'},\oa' ) - g_\beta(s, \mathrm{d}_\cS, \oa, s', \mathrm{d}_\cS', \oa)}_2^2
\\ \leq & 2 \EE_{\init,  \mathbf{s}, \mathbf{s}'}  \rbr{ \hat{\delta}_\beta(s, \mathbf{s}, \oa, s', \mathbf{s'}, \oa')  - \delta_\beta(s, \mathrm{d}_\cS, \oa, s', \mathrm{d}_\cS', \oa')  }^2 \Bigl\lVert  \sum_{s \in \mathbf{s}}   \nabla_\beta f_\beta(s, x, \oa) /N \Bigr\rVert_2^2 \\
& +  2 \EE_{\init,  \mathbf{s}, \mathbf{s}'} \cbr{ \delta^2 _\beta(s, \mathrm{d}_\cS, \oa, s', \mathrm{d}_\cS', \oa') }  \EE_{\init, x} \Bigl\lVert    \sum_{s \in \mathbf{s}} \nabla_\beta f_\beta(s, x, \oa) /N -  \nabla_\beta \EE_{x \sim \mathrm{d}_\cS} f_\beta(s, x, \oa) \Bigr\rVert_2^2  \\
 = &  \cO(R_\beta^2/N).
\end{align*}
\end{proof}

With Lemma \ref{lemma:var_sg} and Lemma \ref{lemma:var_ssg}, we can now bound the different between $\hat{g}_\beta$ and $\overline{g}_\beta$.

\begin{lemma}\label{lemma:mse_ssg}
With Assumption  \ref{assump:reward}, there exists $\varsigma^2 = \cO(R_\beta^2)$, such that
\begin{align*}
\EE_{\init, s, \mathrm{d}_\cS,\oa, s', \mathrm{d}_\cS' ,\oa' , \mathbf{s}, \mathbf{s}' } \norm{\hat{g}_\beta- \overline{g}_\beta}_2^2 \leq   \varsigma^2 + \cO(R_\beta^2/N).
\end{align*}
\end{lemma}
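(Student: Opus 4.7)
The plan is to decompose the target error into two pieces already controlled by the preceding two lemmas and then combine them with an elementary convexity bound. Specifically, I would write
\begin{align*}
\hat{g}_\beta - \overline{g}_\beta = \bigl(\hat{g}_\beta - g_\beta\bigr) + \bigl(g_\beta - \overline{g}_\beta\bigr),
\end{align*}
and apply $\|a+b\|_2^2 \leq 2\|a\|_2^2 + 2\|b\|_2^2$ inside the expectation over $(\init, s, \mathrm{d}_\cS, \oa, s', \mathrm{d}_\cS', \oa', \mathbf{s}, \mathbf{s}')$. This splits the left-hand side into a ``sampling error'' term measuring how closely $\hat g_\beta$ tracks $g_\beta$ conditional on the distributions, and a ``semi-gradient variance'' term measuring the fluctuation of $g_\beta$ around its population counterpart $\overline g_\beta$.

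For the first piece, since $g_\beta(s, \mathrm{d}_\cS, \oa, s', \mathrm{d}_\cS', \oa')$ does not depend on $(\mathbf{s},\mathbf{s}')$, I can take the expectation over $(\mathbf{s},\mathbf{s}')$ inside and directly invoke \Cref{lemma:var_ssg}, which gives a uniform $\cO(R_\beta^2/N)$ bound for every fixed $(s, \mathrm{d}_\cS, \oa, s', \mathrm{d}_\cS', \oa')$. Taking the outer expectation over $(s, \mathrm{d}_\cS, \oa, s', \mathrm{d}_\cS', \oa')$ preserves the bound. For the second piece, the integrand is independent of $(\mathbf{s},\mathbf{s}')$ altogether, so the expectation over those variables is trivial and \Cref{lemma:var_sg} directly yields a bound of $\varsigma^2 = \cO(R_\beta^2)$.

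Combining the two bounds gives $\EE \|\hat g_\beta - \overline g_\beta\|_2^2 \leq 2\varsigma^2 + \cO(R_\beta^2/N)$. Up to absorbing the factor of $2$ into the definition of $\varsigma^2$ (still $\cO(R_\beta^2)$) and into the $\cO(\cdot)$ constant on the $N$-dependent term, this is exactly the claim in \Cref{lemma:mse_ssg}. There is no real obstacle here; the work has already been done in \Cref{lemma:var_sg,lemma:var_ssg}, and the only subtlety is making sure Fubini's theorem applies so that the expectation over $(\mathbf{s},\mathbf{s}')$ can be interchanged with the outer expectation, which is justified since all the quantities are bounded in $L^2$ by the same calculations used in the preceding lemmas (in particular using Assumption \ref{assump:reward} and $\norm{\nabla_\beta F_\beta}_2\leq 1$).
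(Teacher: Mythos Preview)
Your proposal is correct and matches the paper's proof essentially verbatim: the paper also writes $\EE\norm{\hat g_\beta-\overline g_\beta}_2^2 \le 2\EE\norm{\hat g_\beta-g_\beta}_2^2 + 2\EE\norm{g_\beta-\overline g_\beta}_2^2$ and then invokes Lemmas~\ref{lemma:var_sg} and~\ref{lemma:var_ssg} directly. Your remark about absorbing the factor of~$2$ into $\varsigma^2$ and the $\cO(\cdot)$ constant is exactly what the paper leaves implicit.
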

\begin{proof}
We have
\begin{align*}
 \EE \norm{\hat{g}_\beta - \overline{g}_\beta}_2^2  
\leq & 2 \EE \norm{\hat{g}_\beta(s, \mathbf{s}, a, s', \mathbf{s}' ,a') - g_\beta(s, \mathrm{d}_\cS, \oa, s', \mathrm{d}_\cS', \oa') }_2^2 \\ & + 2 \EE \norm{g_\beta(s, \mathrm{d}_\cS, \oa, s', \mathrm{d}_\cS', \oa') - \overline{g}_\beta}_2^2
\end{align*}
Now apply Lemma \ref{lemma:var_sg} and Lemma \ref{lemma:var_ssg}, the claim follow immediately. 
\end{proof}

We are now ready to show the convergence of using updates \eqref{generic:grad} and \eqref{generic:proj} to solve problem \eqref{opt:generic}.
We denote $F^0_{\beta_*}$ to be the function within the class $\cF_{R,m}$, which satisfies the stationary condition:
\begin{align}\label{cond:stationary}
\Pi_{\mathbb{B}(R_\beta, \beta_0)} (\beta_* - \overline{g}^0_{\beta_*}) = \beta_*, 
\end{align}
where $\Pi_X(\cdot)$ denotes the Euclidean projection onto set $X$.
Condition \eqref{cond:stationary} is equivalent to:
\begin{align*}
\inner{\overline{g}^0_{\beta_*}}{\beta -\beta_*} \geq 0.
\end{align*}
Note that $\overline{g}^0_{\beta_*} = \EE_{s, \mathrm{d}_\cS, \oa, s', \mathrm{d}_\cS',\oa'} \cbr{\delta_{\beta_*}^0(s, \mathrm{d}_\cS, \oa, s', \mathrm{d}_\cS', \oa') \nabla F^0_{\beta_*}(s, \mathrm{d}_\cS, \oa)}$, and $\inner{\nabla F^0_{\beta_*}(s, \mathrm{d}_\cS, \oa)}{\beta - \beta_*} = F^0_{\beta}(s, \mathrm{d}_\cS, \oa) - F^0_{\beta_*}(s, \mathrm{d}_\cS, \oa)$.
Hence we have
\begin{align}\label{eq:prep_1}
\EE_{s, \mathrm{d}_\cS, \oa, s', \mathrm{d}_\cS',\oa'} \cbr{  \delta_{\beta_*}^0(s, \mathrm{d}_\cS, \oa, s', \mathrm{d}_\cS', \oa') \rbr{F^0_{\beta,u}(s, \mathrm{d}_\cS, \oa) - F^0_{\beta_*}(s, \mathrm{d}_\cS, \oa)} }\geq 0.
\end{align}

With the definition of operator  $ \zeta_F$ in \eqref{def:operator},
we know that 
\begin{align}\label{eq:prep_2}
\EE_{s', \mathrm{d}_\cS' ,\oa' } \delta_{\beta_*}^0(s, \mathrm{d}_\cS, \oa, s', \mathrm{d}_\cS', \oa') = F^0_{\beta_*} (s, \mathrm{d}_\cS, \oa) - \zeta_{F^0_{\beta_*}}(s, \mathrm{d}_\cS, \oa).
\end{align}
From \eqref{eq:prep_1} and \eqref{eq:prep_2}, we  have
\begin{align*}
\inner{F^0_{\beta_*} - \zeta_{F^0_{\beta_* }}}{ F^0_{\beta}- F^0_{\beta_*}}_{\rho} \geq 0,
\end{align*}
which is equivalent to that $F^0_{\beta_*} = \Pi_{F_{R_\beta, m}}(\zeta_{F^0_{\beta_* }})$. That is, $F^0_{\beta_*}$ is the projection (with metric defined w.r.t $\inner{\cdot}{\cdot}_{\rho}$), after applying operator $ \zeta $ to itself.

\begin{theorem}\label{theorem:generic_opt}
Let $\{\beta(t)\}_{t=0}^{T-1}$ be generated by updates \eqref{agent:grad} and \eqref{agent:proj}.
Define $\overline{\beta}_T = \frac{1}{T} \sum_{t=0}^{T-1} \beta(t)$, we have
\begin{align*}
\EE_{\init, s, \mathrm{d}_\cS, \oa} \sbr{F^0_{\beta_*} (s, \mathrm{d}_\cS, \oa) - F_{\overline{\beta}_T}(s, \mathrm{d}_\cS, \oa)}^2  \leq \cO \rbr{\frac{R_\beta^2}{T^{1/2}} + \frac{R_\beta^{5/2}}{m^{1/4}} + \frac{R_\beta^2}{N^{1/2}}   + \frac{R_\beta^3}{m^{1/2}} }.
\end{align*}
\end{theorem}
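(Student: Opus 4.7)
The argument is a projected-stochastic-approximation analysis for the fixed point characterization \eqref{cond:stationary}, carried out in a linearized parameter space and then transferred back to the nonlinear network at the end. The natural potential is $\Phi(t) \defeq \EE\norm{\beta(t) - \beta_*}_2^2$. By non-expansiveness of the Euclidean projection onto $\mathbb{B}(R_\beta, \beta_0)$ (and the fact that $\beta_*$ belongs to this ball), one obtains the standard one-step recursion
\begin{align*}
\Phi(t+1) \leq \Phi(t) - 2\eta\,\EE\inner{\hat{g}_{\beta(t)}}{\beta(t) - \beta_*} + \eta^2\,\EE\norm{\hat{g}_{\beta(t)}}_2^2.
\end{align*}

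The heart of the plan is to split the cross term using the three reference gradients available, namely $\hat{g}_{\beta(t)}$ (the actually used stochastic semi-gradient), $\overline{g}_{\beta(t)}$ (its population counterpart), $\overline{g}^0_{\beta(t)}$ (its linearized population counterpart), and $\overline{g}^0_{\beta_*}$ (the stationary one). Writing
\begin{align*}
\inner{\hat{g}_{\beta(t)}}{\beta(t) - \beta_*}
= \inner{\overline{g}^0_{\beta(t)} - \overline{g}^0_{\beta_*}}{\beta(t) - \beta_*}
+ \inner{\overline{g}^0_{\beta_*}}{\beta(t) - \beta_*}
+ \inner{\hat{g}_{\beta(t)} - \overline{g}^0_{\beta(t)}}{\beta(t) - \beta_*},
\end{align*}
I would argue term by term. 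The stationary condition \eqref{cond:stationary} makes the middle term non-negative. For the first term, because $F^0_\beta$ is linear in $\beta$, the map $\beta \mapsto \overline{g}^0_\beta$ is affine with matrix $A = \EE[\psi(\psi - \mu \psi')^\top]$, where $\psi = \nabla F^0_{\beta(0)}$; one checks by Cauchy--Schwarz and AM--GM (using $\mu \in \{0,\gamma\}$ with $\gamma<1$) that $\inner{A v}{v} \geq c\,\EE[(\inner{\psi}{v})^2] = c\,\EE[(F^0_{\beta(0) + v} - F^0_{\beta(0)})^2]$. Taking $v = \beta(t) - \beta_*$, this yields the strong descent quantity
\begin{align*}
\inner{\overline{g}^0_{\beta(t)} - \overline{g}^0_{\beta_*}}{\beta(t) - \beta_*} \geq c\,\EE_{s,\mathrm{d}_\cS,\oa}\bigl[(F^0_{\beta(t)} - F^0_{\beta_*})^2\bigr].
\end{align*}
The last inner product is the bias term, bounded by $2 R_\beta\,\EE\norm{\hat{g}_{\beta(t)} - \overline{g}^0_{\beta(t)}}_2$, which I split further through $\overline{g}_{\beta(t)}$ and control by Lemma \ref{lemma:mse_ssg} (yielding $\cO(R_\beta^2) + \cO(R_\beta^2/N)$ variance) and Lemma \ref{lemma:diff_pg} (yielding the linearization gap $\cO(R_\beta^3/m^{1/2})$). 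The noise term $\eta^2\,\EE\norm{\hat{g}_{\beta(t)}}_2^2$ is $\cO(\eta^2 R_\beta^2)$ by the same pair of lemmas together with $\norm{\overline{g}^0_{\beta_*}}_2 = \cO(R_\beta)$.

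Summing the recursion from $t=0$ to $T-1$, dividing by $T$, applying Jensen's inequality to move the average inside the squared function difference (valid because $F^0_\beta$ is linear in $\beta$), and taking $\eta = T^{-1/2}$ gives
\begin{align*}
\EE_{\init,s,\mathrm{d}_\cS,\oa}\bigl[(F^0_{\beta_*} - F^0_{\overline{\beta}_T})^2\bigr]
\;=\; \cO\!\left(\frac{R_\beta^2}{T^{1/2}} + \frac{R_\beta^{5/2}}{N^{1/2}} + \frac{R_\beta^3}{m^{1/2}}\right),
\end{align*}
where the $N^{-1/2}$ rate (rather than $N^{-1}$) arises because the bias term enters linearly after Cauchy--Schwarz on $\EE\norm{\hat{g}_{\beta(t)} - \overline{g}^0_{\beta(t)}}_2 \leq (\EE\norm{\cdot}_2^2)^{1/2}$. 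Finally, I would pass from $F^0_{\overline{\beta}_T}$ to $F_{\overline{\beta}_T}$ via Lemma \ref{lemma:func_diff}, paying an additional $\cO(R_\beta^3/m^{1/2})$, and combine all the error rates. An intermediate step of upgrading $\cO(R_\beta^2/N^{1/2})$ to the claimed $\cO(R_\beta^{5/2}/m^{1/4})$ appears when balancing constants; I expect the main technical obstacle to be the careful bookkeeping needed to make the monotonicity constant $c$ nonvanishing under the TD case $\mu = \gamma$, and to route the per-coordinate feature-space inequalities through the mean-field expectation $\EE_{x \sim \mathrm{d}_\cS}$ without losing the $R_\beta$ dependence.
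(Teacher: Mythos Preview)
Your overall architecture (potential $\norm{\beta(t)-\beta_*}_2^2$, projection non-expansiveness, monotonicity of $\overline{g}^0$, Jensen on the average, final transfer via Lemma~\ref{lemma:func_diff}) matches the paper, but there is a genuine gap in how you handle the cross term. You bound $\EE\inner{\hat{g}_{\beta(t)}-\overline{g}^0_{\beta(t)}}{\beta(t)-\beta_*}$ by $2R_\beta\,\EE\norm{\hat{g}_{\beta(t)}-\overline{g}^0_{\beta(t)}}_2$ \emph{before} splitting, and only afterwards route through $\overline{g}_{\beta(t)}$ using Lemma~\ref{lemma:mse_ssg}. That lemma gives $\EE\norm{\hat{g}_\beta-\overline{g}_\beta}_2^2\leq \varsigma^2+\cO(R_\beta^2/N)$ with $\varsigma^2=\cO(R_\beta^2)$, so the resulting contribution to the per-step recursion is $\eta\cdot \cO(R_\beta^2)$. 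After telescoping and dividing by $\eta T$, this leaves a constant $\cO(R_\beta^2)$ term that does not vanish with $T$, $N$, or $m$; your claimed bound $\cO(R_\beta^2/T^{1/2}+\cdots)$ therefore does not follow, and the sentence about ``upgrading $\cO(R_\beta^2/N^{1/2})$ to $\cO(R_\beta^{5/2}/m^{1/4})$'' is a symptom of this mismatch (those two terms are independent contributions, not one replacing the other).

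The fix, and what the paper does, is to split the \emph{inner product} itself through both $g_{\beta(t)}$ and $\overline{g}_{\beta(t)}$: conditioned on $\beta(t)$, the piece $\inner{\beta(t)-\beta_*}{g_{\beta(t)}-\overline{g}_{\beta(t)}}$ has mean zero and drops out; the finite-agent piece $\inner{\beta(t)-\beta_*}{\hat{g}_{\beta(t)}-g_{\beta(t)}}$ is controlled by $R_\beta(\EE\norm{\hat{g}-g}_2^2)^{1/2}=\cO(R_\beta^2/N^{1/2})$ via Lemma~\ref{lemma:var_ssg} (not Lemma~\ref{lemma:mse_ssg}); and the linearization piece $\inner{\beta(t)-\beta_*}{\overline{g}_{\beta(t)}-\overline{g}^0_{\beta(t)}}$ is $\cO(R_\beta^{5/2}/m^{1/4})$ by Lemma~\ref{lemma:diff_pg}. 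The $\cO(R_\beta^2)$ stochastic variance then appears only in the $\eta^2$ term, where it is harmless. With this decomposition all four rates in the statement appear directly, and no balancing is needed.
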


\begin{proof}
Conditioned on the $t$-th iteration, we have
\begin{align}
& \EE_{s, \mathrm{d}_\cS, \oa, s', \mathrm{d}_\cS',\oa',\mathbf{s}, \mathbf{s}'} \sbr{\norm{\beta(t+1) - \beta_*}_2^2 \vert \beta(t)}  \nom\\
 =& \norm{\Pi_{\mathbb{B}^0_{R_\beta}}(\beta - \eta \hat{g}_{\beta(t)} ) - \Pi_{\mathbb{B}^0_{R_\beta}}(\beta_* - \eta\overline{g}^0_{\beta_*} )}_2^2  \nonumber \\
 \leq & \EE \norm{\beta(t) - \eta \hat{g}_{\beta(t)}  - \beta_* - \eta\overline{g}^0_{\beta_*} }_2^2 \nonumber \\
 = & \EE\rbr{\norm{\beta(t) - \beta_*}_2^2 - \eta \inner{\beta(t) - \beta_*}{ \hat{g}_{\beta(t)} - \overline{g}^0_{\beta_*}} + \eta^2 \norm{\hat{g}_{\beta(t)} - \overline{g}^0_{\beta_*}}_2^2 }\label{eq:convergence},
\end{align}
where the inequality comes from the non-expansive property of projection.

We first consider the second term in  \eqref{eq:convergence}, we have
\begin{align}
& \EE_{s, \mathrm{d}_\cS, \oa, s', \mathrm{d}_\cS',\oa',\mathbf{s}, \mathbf{s}'} \inner{\beta(t) - \beta_*}{ \hat{g}_{\beta(t)} - \overline{g}^0_{\beta_*}} \nom\\
 =&  \EE \sbr{\inner{\beta(t) - \beta_*}{ \hat{g}_{\beta(t)} - g_{\beta(t)}} +  \inner{\beta(t) - \beta_*}{ g_{\beta(t)} - \overline{g}_{\beta(t)}} 
+  \inner{\beta(t) - \beta_*}{ \overline{g}_{\beta(t)} - \overline{g}^0_{\beta_*}} }\nom \\
= &  \EE \inner{\beta(t) - \beta_*}{ \hat{g}_{\beta(t)} - g_{\beta(t)}} 
+ \inner{\beta(t) - \beta_*}{ \overline{g}_{\beta(t)} - \overline{g}^0_{\beta_*}} \nom \\
 \geq &  \EE \inner{\beta(t) - \beta_*}{ \overline{g}_{\beta(t)} - \overline{g}^0_{\beta_*}}  - R_\beta  \rbr{\EE \norm{\hat{g}_{\beta(t)} - g_{\beta(t)}}_2^2}^{1/2} \nom \\
 \geq  & \EE \inner{\beta(t) - \beta_*}{ \overline{g}_{\beta(t)} - \overline{g}^0_{\beta(t)}}
+ \inner{\beta(t) - \beta_*}{ \overline{g}^0_{\beta(t)} - \overline{g}^0_{\beta_*}} - R_\beta  \rbr{\EE \norm{\hat{g}_{\beta(t)} - g_{\beta(t)}}_2^2}^{1/2}, \label{ineq:merge}
\end{align}
%where we use $\EE_{s, \mathrm{d}_\cS, \oa, s', \mathrm{d}_\cS',\oa',\mathbf{s}, \mathbf{s}'} \hat{g}_{\beta(t)} = \EE_{s, \mathrm{d}_\cS, \oa, s', \mathrm{d}_\cS',\oa'} g_{\beta(t)} = \overline{g}_{\beta(t)}$.
where in the first equality we use definition $\EE g_{\beta(t)} = \overline{g}_{\beta(t)}$.
We further have 
\begin{align}\label{ineq:sep_1}
 \inner{\beta(t) - \beta_*}{ \overline{g}_{\beta(t)} - \overline{g}^0_{\beta(t)}} \geq 
-R_\beta \norm{\overline{g}_{\beta(t)} - \overline{g}^0_{\beta(t)}}_2.
\end{align}
Using again $\overline{g}^0_{\beta} = \EE \cbr{ \delta_{\beta_*}^0(s, \mathrm{d}_\cS, \oa, s', \mathrm{d}_\cS', \oa') \nabla F^0_{\beta}(s, \mathrm{d}_\cS, \oa)}$, $\nabla F^0_{\beta}(s, \mathrm{d}_\cS, \oa) = \nabla F^0_{\beta_*}(s, \mathrm{d}_\cS, \oa)$ for all $\beta$, and $\inner{\nabla F^0_{\beta}(s, \mathrm{d}_\cS, \oa)}{\beta - \beta_*} = F^0_{\beta}(s, \mathrm{d}_\cS, \oa) - F^0_{\beta_*}(s, \mathrm{d}_\cS, \oa)$, we have
\begin{align}
& \EE_{\init, s, \mathrm{d}_\cS, \oa, s', \mathrm{d}_\cS',\oa'} \inner{\beta(t) - \beta_*}{ \overline{g}^0_{\beta(t)} - \overline{g}^0_{\beta_*}} \nom \\
 = & \EE \rbr{\delta^0_{\beta(t)}(s, \mathrm{d}_\cS, \oa, s', \mathrm{d}_\cS', \oa') - \delta^0_{\beta_*}(s, \mathrm{d}_\cS, \oa, s', \mathrm{d}_\cS', \oa')} \rbr{F^0_{\beta(t)} (s, \mathrm{d}_\cS, \oa) - F^0_{\beta_*}(s, \mathrm{d}_\cS, \oa)} \nom \\
 =  & \EE \rbr{F^0_{\beta(t)} (s, \mathrm{d}_\cS, \oa) - F^0_{\beta_*} (s, \mathrm{d}_\cS, \oa) - \mu \sbr{F^0_{\beta(t)} (s', \mathrm{d}_\cS', \oa') - F^0_{\beta_*} (s', \mathrm{d}_\cS', \oa')} } \rbr{F^0_{\beta(t)} (s, \mathrm{d}_\cS, \oa) - F^0_{\beta_*}(s, \mathrm{d}_\cS, \oa)} \nom  \\
 \geq & (1- \mu) \EE_{\init, s, \mathrm{d}_\cS, a} \rbr{F^0_{\beta_*} (s, \mathrm{d}_\cS, \oa) - F^0_{\beta(t)}(s, \mathrm{d}_\cS, \oa)}^2, \label{ineq:sep_2}
\end{align}
where in the last line we used Cauchy-Schwartz inequality, together with the fact that $(s, \mathrm{d}_\cS, \oa)$ and $(s', \mathrm{d}_\cS', \oa')$ share the same distribution. That is 
\begin{align*}
& \EE_{\init, s, \mathrm{d}_\cS, \oa, s', \mathrm{d}_\cS',\oa'} \sbr{F^0_{\beta(t)} (s', \mathrm{d}_\cS', \oa') - F^0_{\beta_*} (s', \mathrm{d}_\cS', \oa')} \sbr{F^0_{\beta(t)} (s, \mathrm{d}_\cS, \oa) - F^0_{\beta_*}(s, \mathrm{d}_\cS, \oa)}\\
\leq & \rbr{\EE  \sbr{F^0_{\beta(t)} (s', \mathrm{d}_\cS', \oa') - F^0_{\beta_*} (s', \mathrm{d}_\cS', \oa')}^2 \EE\sbr{F^0_{\beta(t)} (s, \mathrm{d}_\cS, \oa) - F^0_{\beta_*}(s, \mathrm{d}_\cS, \oa)}^2}^{1/2}\\
= & \EE_{\init, s, \mathrm{d}_\cS, \oa}  \sbr{F^0_{\beta(t)} (s, \mathrm{d}_\cS, \oa) - F^0_{\beta_*} (s, \mathrm{d}_\cS, \oa)}^2.
\end{align*}

Combining \eqref{ineq:merge}, \eqref{ineq:sep_1}, and \eqref{ineq:sep_2}, the second term in \eqref{eq:convergence} can be bounded by
\begin{align}
& - \eta \EE \inner{\beta(t) - \beta_*}{ \hat{g}_{\beta(t)} - \overline{g}^0_{\beta_*}}  \nom\\
\leq & - \eta  \EE  \sbr{-R_\beta \norm{\overline{g}_{\beta(t)} - \overline{g}^0_{\beta(t)}}_2
 + (1- \mu) \EE \rbr{F^0_{\beta_*} (s, \mathrm{d}_\cS, \oa) - F^0_{\beta(t)}(s, \mathrm{d}_\cS, \oa)}^2 } \nom \\
  & + \eta R_\beta  \rbr{\EE \norm{\hat{g}_{\beta(t)} - g_{\beta(t)}}_2^2}^{1/2} \label{ineq:second}.
\end{align}

To bound the third term in \eqref{eq:convergence}, we have
\begin{align}\label{ineq:bound_reg}
\EE \norm{\hat{g}_{\beta(t)} - \overline{g}^0_{\beta_*}}_2^2 \leq 
3 \EE \rbr{   \norm{\hat{g}_{\beta(t)} - \overline{g}_{\beta(t)}}_2^2   +   \norm{\overline{g}_{\beta(t)} - \overline{g}^0_{\beta(t)}}_2^2 + \norm{\overline{g}^0_{\beta(t)} - \overline{g}^0_{\beta_*}}_2^2 }.
\end{align}
We can use  Lemma \ref{lemma:mse_ssg} and Lemma \ref{lemma:diff_pg} to control the first two terms in \eqref{ineq:bound_reg}.
We proceed to bound the third one, note that $\nabla F^0_{\beta}(s, \mathrm{d}_\cS, \oa) = \nabla F_{\beta(0)}(s, \mathrm{d}_\cS, \oa) $ for any $\beta$, and $\norm{\nabla F_{\beta(0)}(s, \mathrm{d}_\cS, \oa)}_2 \leq 1$.
\begin{align}
\norm{\overline{g}^0_{\beta(t)} - \overline{g}^0_{\beta_*}}_2^2  &\leq \sbr{\EE\rbr{ \delta^0_{\beta(t)} (s, \mathrm{d}_\cS, \oa, s', \mathrm{d}_\cS', \oa') -  \delta^0_{\beta_*} (s, \mathrm{d}_\cS, \oa, s', \mathrm{d}_\cS', \oa')) \norm{\nabla F^0_{\beta(0)}(s, \mathrm{d}_\cS, a} } }^2 \nom \\
& \leq \EE \sbr{ \rbr{\delta^0_{\beta(t)} (s, \mathrm{d}_\cS, \oa, s', \mathrm{d}_\cS', \oa') -  \delta^0_{\beta_*} (s, \mathrm{d}_\cS, \oa, s', \mathrm{d}_\cS', \oa')} \norm{\nabla F^0_{\beta(0)}(s, \mathrm{d}_\cS, a} } ^2 \nom \\
& \leq \EE \sbr{ \delta^0_{\beta(t)} (s, \mathrm{d}_\cS, \oa, s', \mathrm{d}_\cS', \oa') -  \delta^0_{\beta_*} (s, \mathrm{d}_\cS, \oa, s', \mathrm{d}_\cS', \oa'))  }^2  \nom \\
& \leq \EE \rbr{F^0_{\beta(t)} (s, \mathrm{d}_\cS, \oa) - F^0_{\beta_*} (s, \mathrm{d}_\cS, \oa) - \mu \sbr{F^0_{\beta(t)} ( s', \mathrm{d}_\cS', \oa') - F^0_{\beta_*} (s', \mathrm{d}_\cS', \oa') } }^2 \nom\\
& \leq 2(1+u^2) \EE \sbr{F^0_{\beta(t)} (s, \mathrm{d}_\cS, \oa) - F^0_{\beta_*} (s, \mathrm{d}_\cS, \oa) }^2 \label{ineq:sep_3_1},
\end{align}
where in the last inequality follows from $(s, \mathrm{d}_\cS, \oa)$ and $(s', \mathrm{d}_\cS' ,a')$ sharing the same distribution.

Combining \eqref{ineq:second}, \eqref{ineq:bound_reg} and \eqref{ineq:sep_3_1}, conditioned on $\beta(t)$, we have the following bound for \eqref{eq:convergence}:
\begin{align*}
& \EE \sbr{\norm{\beta(t+1) - \beta_*}_2^2 \vert \beta(t)} \\
   \leq & \norm{\beta(t) - \beta_*}_2^2 - \rbr{\eta (1-\mu) -6(1+\mu^2 \eta^2} \EE \sbr{F^0_{\beta_*} (s, \mathrm{d}_\cS, \oa) - F^0_{\beta(t)}(s, \mathrm{d}_\cS, \oa)}^2 \\
 + &  \eta R_\beta  \rbr{\EE \norm{\hat{g}_{\beta(t)} - g_{\beta(t)}}_2^2}^{1/2}
+  \eta R_\beta \norm{\overline{g}_{\beta(t)} - \overline{g}^0_{\beta(t)}}_2 \\
 + & 3\eta^2 \EE \rbr{   \norm{\hat{g}_{\beta(t)} - \overline{g}_{\beta(t)}}_2^2   +   \norm{\overline{g}_{\beta(t)} - \overline{g}^0_{\beta(t)}}_2^2}.
\end{align*}
From Lemma \ref{lemma:mse_ssg}, we have $\EE \norm{\hat{g}_{\beta(t)} - g_{\beta(t)}}_2^2 = \cO (R_\beta^2/N)$.
From Lemma \ref{lemma:diff_pg}, we have $\EE \norm{\overline{g}_{\beta(t)} - \overline{g}^0_{\beta(t)}}_2 = \cO \rbr{R_\beta^{3/2}/m^{1/4}}$.
From Lemma \ref{lemma:mse_ssg}, we have $ \EE \norm{\hat{g}_{\beta(t)} - \overline{g}_{\beta(t)}}_2^2 = \cO (R_\beta^2)$.
Hence
\begin{align*}
& \EE \norm{\beta(t+1) - \beta_*}_2^2 \\
  \leq & \norm{\beta(t) - \beta_*}_2^2 - \rbr{\eta (1-\mu) -6(1+\mu^2) \eta^2} \EE \sbr{F^0_{\beta_*} (s, \mathrm{d}_\cS, \oa) - F^0_{\beta(t)}(s, \mathrm{d}_\cS, \oa)}^2 \\
& +  \eta  \cO \rbr{ \frac{R_\beta^2}{N^{1/2}} + \frac{R_\beta^{5/2}}{m^{1/4}} }+  \eta^2 \cO(R_\beta^2).
\end{align*}

Re-arrange and telescope, we have
\begin{align}
&\sum_{t=0}^{T-1} \EE \sbr{F^0_{\beta_*} (s, \mathrm{d}_\cS, \oa) - F^0_{\beta(t)}(s, \mathrm{d}_\cS, \oa)}^2 \nom\\
 \leq &
\rbr{\eta (1-\mu) -6(1+\mu^2) \eta^2}^{-1} \EE \sbr{  \norm{\beta_{0} - \beta_*}_2^2 +  \eta  T  \cO \rbr{ \frac{R_\beta^2}{N^{1/2}} + \frac{R_\beta^{5/2}}{m^{1/4}} }+  T\eta^2 \cO(R_\beta^2) } \label{ineq:tele}.
\end{align}

Notice that  the left hand side of \eqref{ineq:tele} is convex with respect to $\beta(t)$, we can take average of $\{\beta(t)\}_{t=0}^{T-1}$.
Define $\overline{\beta}_T = \frac{1}{T} \sum_{t=0}^{T-1} \beta(t)$, 
with Jensen's inequality we have
\begin{align*} 
 & \EE \sbr{F^0_{\beta_*} (s, \mathrm{d}_\cS, \oa) - F^0_{\overline{\beta}_T}(s, \mathrm{d}_\cS, \oa)}^2\\
 \leq  &
 \rbr{\eta (1-\mu) -6(1+\mu^2) \eta^2}^{-1} \EE_\init \sbr{  \frac{\norm{\beta_{0} - \beta_*}_2^2}{T} +  \eta  \cO \rbr{ \frac{R_\beta^2}{N^{1/2}} + \frac{R_\beta^{5/2}}{m^{1/4}} }+  \eta^2 \cO(R_\beta^2) }.
\end{align*}
Now take $\eta = \cO (T^{-1/2})$, we have
\begin{align*}
 \EE \sbr{F^0_{\beta_*} (s, \mathrm{d}_\cS, \oa) - F^0_{\overline{\beta}_T}(s, \mathrm{d}_\cS, \oa)}^2 \leq  \cO \rbr{\frac{R_\beta^2}{T^{1/2}} + \frac{R_\beta^{5/2}}{m^{1/4}} + \frac{R_\beta^2}{N^{1/2}}   } .
\end{align*}

We can now apply Lemma \ref{lemma:func_diff} and conclude that:
\begin{align*}
 \EE \sbr{F^0_{\beta_*} (s, \mathrm{d}_\cS, \oa) - F_{\overline{\beta}_T}(s, \mathrm{d}_\cS, \oa)}^2 & \leq 2\EE \sbr{\rbr{F^0_{\beta_*} (s, \mathrm{d}_\cS, \oa) - F^0_{\overline{\beta}_T}(s, \mathrm{d}_\cS, \oa)}^2 + \rbr{F^0_{\beta_*} (s, \mathrm{d}_\cS, \oa) - F_{\overline{\beta}_T}(s, \mathrm{d}_\cS, \oa)}^2 }  \\
 & \leq \cO \rbr{\frac{R_\beta^2}{T^{1/2}} + \frac{R_\beta^{5/2}}{m^{1/4}} + \frac{R_\beta^2}{N^{1/2}}   + \frac{R_\beta^3}{m^{1/2}} }.
\end{align*}

\end{proof}

We can now specialize Theorem \ref{theorem:generic_opt} to policy optimization and policy evaluation.

%\begin{corollary}[Policy Evaluation]
%Let $Q^{\pi_{\alpha_k}}$ denote the Q-function of policy $\pi_{\alpha_k}$, 
%The output of Algorithm \ref{alg:evaluation} satisfies:
%\begin{align*}
%\EE_{\init, s, \mathrm{d}_\cS, a} \sbr{F_{\overline{\theta}(T)}^Q (s, \mathrm{d}_\cS, \oa) - Q^{\pi_{\alpha_k}} (s, \mathrm{d}_\cS, \oa)}^2 \leq \cO \rbr{\frac{R_\beta^2}{T^{1/2}} + \frac{R_\beta^{5/2}}{m^{1/4}} + \frac{R_\beta^2}{N^{1/2}}   + \frac{R_\beta^3}{m^{1/2}} }.
%\end{align*}
%\end{corollary}
\begin{proof}[Proof of Lemma  \ref{thrm:poly_eval}]
Note that $F_{\theta_*}$ satisfies $F^0_{\theta_*} = \Pi_{\cF^\cP_{R_\theta,m_Q}}(\zeta_{F^0_{\theta_* }})$. 
For policy evaluation, by definition \eqref{def:operator}, the operator $\zeta$ equals to the Bellman evaluation operator: $\zeta_{F_\theta}(s, \mathrm{d}_\cS, \oa) = \sbr{\cT^{\pi_k} F_\theta}(s, \mathrm{d}_\cS, \oa)$.
Hence we have $F^0_{\theta_*} = \Pi_{\cF^\cP_{R_\theta,m_Q}}(\sbr{\cT^{\pi_k} F^0_{\theta_*}})$.
Now by Assumption \ref{assump:richness}, we know that $\sbr{\cT^{\pi_k} F^0_{\theta_*}} \in \cF^\cP_{R_\theta,m_Q}$, together with the fact that $F^0_{\theta_*}  \in  \cF^\cP_{R_\theta,m_Q}$. 
By uniqueness of the projection, we have $F^0_{\theta_*} = \sbr{\cT^{\pi_k} F^0_{\theta_*}}$, which implies $F^0_{\theta_*} = Q^{\pi_k}$
The claim follows immediately by applying Theorem \ref{theorem:generic_opt}.
%
%
%For policy evaluation the operator $\zeta_{F^0_{\beta_* }}$ is define by $\zeta_{F^0_{\theta_* }}(s, \mathrm{d}_\cS, \oa)  = \EE \sbr{ (1-\gamma) r(s, \mathrm{d}_\cS, \oa) + \gamma F^0_{\theta_*, w}(s', \mathrm{d}_\cS', \oa') \big | s', \mathrm{d}_\cS' \sim \PP(\cdot|s, \mathrm{d}_\cS, \oa),\oa' \sim \pi(\cdot | s, \mathrm{d}_\cS)} = \cT^{\pi_k} F^0_{\theta_*, w}(s, \mathrm{d}_\cS, \oa)$.
%Now that since the  function class $\cF^\cP_{R_\theta,m}$ is closed under policy evaluation, we have  $\zeta_{F^0_{\theta_* }} \in \cF^\cP_{R_\theta,m}$. Since projection $\Pi_{\cF^\cP_{R_\theta,m}}(\cdot)$ is unique, we must have
%\begin{align*}
%F^0_{\theta_*} = \zeta_{F^0_{\theta_* }}  =  \cT^{\pi_k} F^0_{\theta_*, w} .
%\end{align*}
%which implies $F^0_{\theta_*} = Q^{\pi_k}$. The claim follows immediately by applying Theorem \ref{theorem:generic_opt}.
\end{proof}

%\begin{corollary}[Policy Improvement]
%The output of Algorithm \ref{alg:improvement} satisfies:
%\begin{align*}
%\EE_{\tilde{\sigma}_k} \sbr{F^A_{ \overline{\alpha}(T)}(s, \mathrm{d}_\cS, \oa) - \tau_{k+1}( \beta_k^{-1} F^Q_{\theta_k}(s, \mathrm{d}_\cS, \oa) + \tau_k^{-1} F^A_{ \alpha_k}(s, \mathrm{d}_\cS, \oa)) }^2
%\leq  \cO \rbr{\frac{R_\beta^2}{T^{1/2}} + \frac{R_\beta^{5/2}}{m^{1/4}} + \frac{R_\beta^2}{N^{1/2}}   + \frac{R_\beta^3}{m^{1/2}} }.
%\end{align*}
%\end{corollary}

\begin{proof}[Proof of Lemma \ref{thrm:poly_improv}]
Note that here  we have  $F^0_{\alpha_*} = \Pi_{F_{R_\alpha, m_A}}(\zeta_{F^0_{\alpha_* }})$. For policy optimization the operator $\zeta$ is defined by:
$\zeta_{F^0_{\alpha_* }}(s, \mathrm{d}_\cS, \oa) = \tau_{k+1}( \upsilon_k^{-1} F^{Q}_{\theta_k}(s, \mathrm{d}_\cS, \oa) + \tau_k^{-1} F^A_{\alpha_k}(s, \mathrm{d}_\cS, \oa))$.
Then we have
\begin{align*}
& \EE_{\tilde{\sigma}_k} \sbr{F^A_{ \overline{\alpha}(T)}(s, \mathrm{d}_\cS, \oa) - \tau_{k+1}( \upsilon_k^{-1} F^Q_{\theta_k}(s, \mathrm{d}_\cS, \oa) + \tau_k^{-1} F^A_{ \alpha_k}(s, \mathrm{d}_\cS, \oa)) }^2 \\
\leq & 2 \EE_{\tilde{\sigma}_k} \sbr{F^A_{ \overline{\alpha}(T)}(s, \mathrm{d}_\cS, \oa) -  F^0_{\alpha_*} (s, \mathrm{d}_\cS, \oa)}^2 +
 2 \EE_{\tilde{\sigma}_k} \sbr{  F^0_{\alpha_*} (s, \mathrm{d}_\cS, \oa) - \tau_{k+1}( \upsilon_k^{-1} F^Q_{\theta_k}(s, \mathrm{d}_\cS, \oa) + \tau_k^{-1} F^A_{ \alpha_k}(s, \mathrm{d}_\cS, \oa)) }^2  \\
 \leq & 
 2 \EE_{\tilde{\sigma}_k} \sbr{\tau_{k+1}( \upsilon_k^{-1} F^0_{\theta_k}(s, \mathrm{d}_\cS, \oa) + \tau_k^{-1} F^0_{ \alpha_k}(s, \mathrm{d}_\cS, \oa)) - \tau_{k+1}( \upsilon_k^{-1} F^Q_{\theta_k}(s, \mathrm{d}_\cS, \oa) + \tau_k^{-1} F^A_{ \alpha_k}(s, \mathrm{d}_\cS, \oa)) }^2\\
 & + 2 \EE_{\tilde{\sigma}_k} \sbr{F^A_{ \overline{\alpha}(T)}(s, \mathrm{d}_\cS, \oa) -  F^0_{\alpha_*} (s, \mathrm{d}_\cS, \oa)}^2 \\
 \leq &  \cO \rbr{\frac{R_\alpha^2}{T^{1/2}} + \frac{R_\alpha^{5/2}}{m_A^{1/4}} + \frac{R_\alpha^2}{N^{1/2}}   + \frac{R_\alpha^3}{m_A^{1/2}} },
\end{align*}
where the second inequality comes from the definition of projection and $F^0_{\alpha_*} = \Pi_{F_{R_\alpha, m_A}}(\zeta_{F^0_{\alpha_* }})$,
 and the last inequality comes from direct application of Theorem \ref{theorem:generic_opt} and Lemma \ref{lemma:func_diff}, together with the fact that $F^A_\alpha$ and $F^Q_\theta$ share the same initialization. 
\end{proof}

\begin{proof}[Proof of Theorem \ref{thrm:main}]
From Lemma \ref{lemma:err_propagation}, we know that 
%Let $\epsilon_k$ and $\epsilon_{k+1}'$ be defined as in Lemma \ref{thrm:poly_eval} and Lemma \ref{thrm:poly_improv}, respectively.  we have
\begin{align}
&  (1- \gamma) \rbr{\cL(\pi^*) - \cL(\pi_k)} \nonumber \\
  \leq &   \EE_{\nu_*} \big\{ \upsilon_k \sbr{  \mathrm{KL} \rbr{ \pi^*(\cdot|s, \mathrm{d}_\cS) \| \pi_{k}(\cdot|s, \mathrm{d}_\cS)}  - \mathrm{KL} \rbr{ \pi^*(\cdot|s, \mathrm{d}_\cS) \| \pi_{k+1}(\cdot|s, \mathrm{d}_\cS)} + \varepsilon_k + \varepsilon_k' }  + \upsilon_k^{-1} M \big\}. \label{ineq:telescope}
\end{align}
where $\epsilon_k$ and $\epsilon_{k+1}'$ are defined as in Lemma \ref{thrm:poly_eval} and Lemma \ref{thrm:poly_improv}, respectively. In addition, $\varepsilon_k = \tau_{k+1}^{-1} \epsilon_{k+1} \phi_{k+1}^* + \upsilon_k^{-1} \epsilon_k' \psi_k^*$,  $\varepsilon_k' = \abs{\cA} \tau_{k+1}^{-2} \epsilon_{k+1}^2$, and $M = \EE_{\nu^*} \sbr{ \max_{a \in \cA} (F^Q_{\theta_0}(s, \mathrm{d}_\cS, \oa))^2} + 2R_\alpha^2$.
 $\phi_k^*$ and $\psi_k^*$ are defined by:
\begin{align*}
\phi_k^* &= \EE_{\tilde{\sigma}_k} \sbr{\abs{\mathrm{d}\pi^* / \mathrm{d}\pi_0 -  \mathrm{d}\pi_{k} / \mathrm{d}\pi_0}^2 }^{1/2}, \\
\psi_k^* &= \EE_{\sigma_k} \sbr{\abs{\mathrm{d}\sigma^* / \mathrm{d}\sigma_k -  \mathrm{d}(\nu^*\times \pi_k) / \mathrm{d}\sigma_k}^2 }^{1/2}.
\end{align*}

%\begin{align*}
%\phi_k^* &= \EE_{\tilde{\sigma}_k} \sbr{\abs{\mathrm{d}\pi^* / \mathrm{d}\pi_0 -  \mathrm{d}\pi_{k} / \mathrm{d}\pi_0}^2 }^{1/2}, \\
%\psi_k^* &= \EE_{\sigma_k} \sbr{\abs{\mathrm{d}\sigma^* / \mathrm{d}\sigma_k -  \mathrm{d}(\nu^*\times \pi_k) / \mathrm{d}\sigma_k}^2 }^{1/2}.
%\end{align*}

Now sum up \eqref{ineq:telescope} from $k=0$ to $K-1$, with $\upsilon_k = \upsilon \sqrt{K}$, we have
\begin{align*}
& (1- \gamma) K \min_{0 \leq k \leq K-1} \rbr{\cL(\pi^*) - \cL(\pi_k)} \nonumber \\
  \leq &   \EE_{\nu_*} \big\{ \upsilon\sqrt{K} \sbr{  \mathrm{KL} \rbr{ \pi^*(\cdot|s, \mathrm{d}_\cS) \| \pi_{0}(\cdot|s, \mathrm{d}_\cS)}  - \mathrm{KL} \rbr{ \pi^*(\cdot|s, \mathrm{d}_\cS) \| \pi_{K}(\cdot|s, \mathrm{d}_\cS)} + \sum_{k=0}^{K-1}  (\varepsilon_k + \varepsilon_k') }   + \upsilon^{-1}\sqrt{K} M \big\}. 
\end{align*}
Since we initialize policy $\pi_0$ to be uniform policy, we have $  \mathrm{KL} \rbr{ \pi^*(\cdot|s, \mathrm{d}_\cS) \| \pi_{0}(\cdot|s, \mathrm{d}_\cS)} \leq \log \abs{\cA}$.
Rearrange, we obtain
\begin{align}
\min_{0 \leq k \leq K} \{ \cL(\pi^*) - \cL(\pi_k) \} \leq \frac{\upsilon^2\rbr{ \log|\cA| + \sum_{k=1}^{K-1} (\varepsilon_k + \varepsilon_k')} + M}{(1-\gamma)\upsilon \sqrt{K}} , \label{ineq:thrm_p1}
\end{align}

From Lemma  \ref{thrm:poly_eval} and Lemma  \ref{thrm:poly_improv}, we have
\begin{align*}
\epsilon_k  = \cO \rbr{\frac{R_\theta^2}{T^{1/2}} + \frac{R_\theta^{5/2}}{m_Q^{1/4}} + \frac{R_\theta^2}{N^{1/2}}   + \frac{R_\theta^3}{m_Q^{1/2}} },
\epsilon_{k+1}' 
=  \cO \rbr{\frac{R_\alpha^2}{T^{1/2}} + \frac{R_\alpha^{5/2}}{m_A^{1/4}} + \frac{R_\alpha^2}{N^{1/2}}   + \frac{R_\alpha^3}{m_A^{1/2}} }.
\end{align*}

To control $\varepsilon_k$ and $\varepsilon_k'$, we have
\begin{align*}
\tau_{k+1}^{-1} \epsilon_{k+1}'\phi_{k+1}^*& = 
\cO\rbr{ k K^{-1} \phi_k^* (R_\alpha^2 T^{-1/2} + R_\alpha^{5/2} m_A^{-1/4} + R_\alpha^2  N^{-1/2}} \\
\abs{\cA} \tau_{k+1}^{-2} (\epsilon_{k+1}')^2 &= \cO \rbr{k^2 K^{-1} \abs{\cA} (R_\alpha^2T^{-1/2} + + R_\alpha^{5/2} m_A^{-1/4} + R_\alpha^2 N^{-1/2})^2 } \\
\upsilon_k^{-1} \epsilon_k \psi_k^* & = \cO \rbr{K^{-1/2} \psi_k^*(R_\theta^2 T^{-1/2} + R_\theta^2 N^{-1/2} + R_\theta^{5/2}m^{-1/4}  },
\end{align*}
if $m_A = \Omega(R_\alpha^2)$ and $m_Q = \Omega(R_\theta^2)$. One can verify that with the choice of $N, T, m_A, m_Q$ in Theorem \ref{thrm:main}, we have $\varepsilon_k = \varepsilon_k' = \cO(K^{-1})$.
Plug into \eqref{ineq:thrm_p1}, we obtain the second part of Theorem \ref{thrm:main}.

\end{proof}

\end{document}